\DeclareSymbolFont{rsfs}{U}{rsfs}{m}{n}
\DeclareSymbolFontAlphabet{\mathscrsfs}{rsfs}
\theoremstyle{plain}
\renewenvironment{proof}{\noindent{\bfseries Proof.}}{\begin{flushright} \qedsymbol\end{flushright}}
\newtheorem{theorem}{Theorem}[section]
\newtheorem{lemma}{Lemma}[section]
\newtheorem{remark}{Remark}[section]
\let\olddefinition\remark
\renewcommand{\remark}{\olddefinition\normalfont}
\def\dd{\mathrm{d}}
\def\top{\intercal}
\def\barbtheta{\bar{\vtheta}}
\DeclareMathOperator*{\plim}{p-lim}
\def\<{\langle}
\def\>{\rangle}
\def\sadv{\mbox{\tiny\rm adv}}
\def\reals{{\mathbb R}}
\def\cF{\mathcal{F}}
\def\cG{\mathcal{G}}
\def\cS{\mathcal{S}}
\def\vtheta{\boldsymbol{\theta}}
\def\cF{\mathcal{F}}
\def\ep{\epsilon}
\def\GG{\mathbb{G}}
\def\LL{\mathbb{L}}
\def\NN{\mathbb{N}}
\def\RR{\mathbb{R}}
\def\bA{\mathbf{A}}
\def\bD{\mathbf{D}}
\def\bW{\mathbf{W}}
\def\bX{\mathbf{X}}
\def\bY{\mathbf{Y}}
\def\bZ{\mathbf{Z}}
\def\ba{\boldsymbol{a}}
\def\bb{\boldsymbol{b}}
\def\bg{\boldsymbol{g}}
\def\bh{\boldsymbol{h}}
\def\bu{\boldsymbol{u}}
\def\bv{\boldsymbol{v}}
\def\bw{\boldsymbol{w}}
\def\bx{\boldsymbol{x}}
\def\by{\boldsymbol{y}}
\def\bz{\boldsymbol{z}}
\def\bA{\boldsymbol{A}}
\def\bD{\boldsymbol{D}}
\def\bW{\boldsymbol{W}}
\def\bX{\boldsymbol{X}}
\def\bY{\boldsymbol{Y}}
\def\bZ{\boldsymbol{Z}}
\def\normal{{\mathsf{N}}}
\def\btheta{\boldsymbol{\theta}}
\def\boeta{\boldsymbol{\eta}}
\def\id{{\boldsymbol I}}
\renewcommand{\P}{\mathbb{P}}
\newcommand{\E}{\mathbb{E}}
\newcommand{\R}{\mathbb{R}}
\newcommand{\eps}{\varepsilon}
\newcommand{\Var}{\operatorname{Var}}
\newcommand{\toP}{\overset{\P}{\to}}
\newcommand{\sign}{\operatorname{sign}}
\newcommand{\diag}{\operatorname{diag}}
\newcommand{\Unif}{\operatorname{Unif}}
\newcommand{\RN}[1]{%
  \textup{\uppercase\expandafter{\romannumeral#1}}%
}
\newcommand\iidsim{\stackrel{\mathclap{iid}}{\sim}}
\newcommand{\RNum}[1]{\uppercase\expandafter{\romannumeral #1\relax}}
\newcommand*{\rom}[1]{\expandafter\@slowromancap\romannumeral #1@}
\title{Adversarial Examples in Random Neural Networks\\
 with General Activations}
\author{
	Andrea Montanari\footnotemark[2]
	\thanks{Department of Electrical Engineering, Stanford University} 
	\;\;
	and  
	\;\;
	Yuchen Wu\thanks{Department of Statistics, Stanford University}
}
\date{March 31, 2022}
\begin{document}
\maketitle

\begin{abstract}
	A substantial body of empirical work documents the lack of robustness in
	deep learning models to adversarial examples. 
	Recent theoretical work proved that adversarial examples are ubiquitous in 
	two-layers networks with sub-exponential width and ReLU or smooth
	activations, and multi-layer ReLU networks with sub-exponential width. 
	We present a result of the same type, with no restriction on width and 
	for general locally Lipschitz continuous activations. 
	
	More precisely, given a neural network $f(\,\cdot\,;\btheta)$ with random weights $\btheta$,
	and feature vector $\bx$, we show that an adversarial example
	$\bx'$ can be found with high probability along the direction of the gradient 
	$\nabla_{\bx}f(\bx;\btheta)$.
	Our proof is based on a Gaussian conditioning technique. Instead of proving that $f$
	is approximately linear in a neighborhood of $\bx$, we characterize the joint distribution of
	$f(\bx;\btheta)$ and $f(\bx';\btheta)$ for $\bx' = \bx-s(\bx)\nabla_{\bx}f(\bx;\btheta)$, where $s(\bx) = \sign(f(\bx; \btheta)) \cdot s_d$ for some positive step size $s_d$. 
\end{abstract}

\section{Introduction}

The output of a neural network at test time can be significantly
changed by an imperceptible but carefully chosen perturbation of its input.
Such perturbed inputs are referred to as \emph{adversarial examples}. 
In the context of deep learning, 
the existence  of adversarial examples was first discovered experimentally in \cite{szegedy2013intriguing}. 
A rapidly expanding literature developed algorithms to produce adversarial
examples \cite{goodfellow2014explaining,carlini2018audio,kos2018adversarial,papernot2017practical,yuan2019adversarial},
as well as techniques to increase model robustness
\cite{kurakin2016adversarial,tramer2017ensemble,qin2019adversarial,wong2020fast,
      salman2019provably,bui2022unified}.
      
Throughout this paper, we will focus on the standard supervised learning setting,
whereby a data sample takes the form $(\bx,y)$, with $\bx\in\reals^d$ a covariates vector
and $y\in\reals$ the corresponding label. A model is a function
$f(\,\cdot\,;\btheta):\reals^d\to \reals$ parametrized by weights $\btheta\in\reals^p$.  
In this setting, given a test point $\bx\in\reals^d$, an adversary constructs
$\bx^{\sadv}=\bx^{\sadv}(\bx;\btheta)\in \reals^d$. The adversary is successful if, with high probability
\begin{align}
\sign\big(f(\bx^{\sadv};\btheta)\big) = -\sign\big(f(\bx;\btheta)\big)\, , 
\;\;\;\;\;\;\;\;\; \|\bx^{\sadv}-\bx\|\ll \|\bx\|\, .
\end{align}
In this paper we will interpret `with high probability' as
with probability converging to one as $d\to\infty$ with respect to a certain
distribution over the random weights $\btheta$,
for any fixed $\bx\in\reals^d$ (with normalization $\|\bx\|=\sqrt{d}$).
 Results in the literature differ in the choice of the point $\bx$ 
(e.g., random according to the test distribution, or an arbitrary training point, or a fixed $\bx$
as in the present paper),
and the norm $\|\,\cdot\,\|$ (empirical work often adopts $\ell_{\infty}$ norm, but we will follow
earlier theoretical papers and use $\ell_2$ norm.). We refer to the next sections
for formal statements.

Among the earlier hypotheses about the origins and ubiquity of adversarial examples
was the idea, put forward in \cite{goodfellow2014explaining}, that they are related to the fact that
$f(\,\cdot\,;\btheta)$ is approximately linear (better, affine) over large regions of the input space. 
This hypothesis has several consequences that match empirical observations at a qualitative level:
\begin{enumerate}
 \item Prevalence of adversarial examples. Indeed, if   $f(\bx;\btheta)\approx a(\btheta)+\<\bb(\btheta),\bx\>$,
 then 
 \begin{align}
 f(\bx^{\sadv};\btheta)- f(\bx;\btheta) \approx \<\bb(\btheta),\bx^{\sadv}-\bx\>\, .
 \label{eq:ApproxLinearity}
 \end{align}
 Assuming without loss of generality $a(\btheta) = 0$, we have $|f(\bx;\btheta)|=\Theta(\|\bb(\btheta)\|_2)$
 for most $\|\bx\|_2=\sqrt{d}$. By choosing $\bx^{\sadv}-\bx = \pm \bb(\btheta)/\|\bb(\btheta)\|_2$,
 we obtain that $|f(\bx^{\sadv};\btheta)- f(\bx;\btheta)|$ is of order    $|f(\bx;\btheta)|$,
 while $\|\bx^{\sadv}-\bx\|_2=1\ll \|\bx\|_2$. With appropriate choice of sign and stepsize, such perturbation also flips the sign of $f$. 
 
\item Adversarial examples can be found by efficient algorithms. 
Indeed, the above argument suggests to take
\begin{align}
\label{eq:FirstFGSM}
\bx^{\sadv}(\bx;\btheta) = \bx-s(\bx)\nabla_{\bx}f(\bx;\btheta)\, ,
\end{align}
for a suitable $s(\bx)$. This approach was successfully implemented in 
\cite{goodfellow2014explaining}, who referred\footnote{The original proposal
attempted to minimize $\|\bx^{\sadv}-\bx\|_{\infty}$, and consequently takes
$\bx^{\sadv}-\bx\propto \sign(\nabla_{\bx}f)$.} 
to it as the `fast gradient sign method'
(FGSM).
\end{enumerate}
The main result of this paper is a proof that this procedure indeed 
produces adversarial examples when $f(\bx;\btheta)$ is a two-layer or multi-layer
fully connected neural network with random weights. This can be interpreted 
as the function implemented by the network at initialization.

Several groups obtained theoretical results on the existence of adversarial examples. 
One basic remark is that, if the distribution of the covariates $\bx$ satisfies
an isoperimetry property, and $\P_{\bx}(f(\bx;\btheta)>0)\in [\delta,1-\delta]$
for some constant $\delta>0$, then a random $\bx$ will be close to the decision boundary 
(and to an adversarial example) with high probability. This is the case
---for instance--- when $\bx$ is a uniformly random vector on a high-dimensional
sphere, or a standard Gaussian vector. 
Increasingly sophisticated incarnations of this argument were given in 
\cite{gilmer2018adversarial,shafahi2018adversarial,fawzi2018adversarial}.

The isoperimetry argument clarifies why adversarial examples are ubiquitous, but
does not explain why they can be found so easily, for instance via FGSM.
In the other direction, \cite{bubeck2019adversarial} proved that learning robust classifiers
can be computationally hard.

A somewhat different point of view was developed in \cite{ilyas2019adversarial},
which proposed that non-robustness is related to the presence of 
non-robust features in the data. These functions $h(\bx)$ of the data are used by a normally trained
classifier (minimizing the training error), but can be significantly changed by an imperceptible
perturbation of $\bx$.  By itself, this is not incompatible with the `approximate linearity' 
hypothesis described above. However, \cite{ilyas2019adversarial} emphasized the 
existence of robust features alongside non-robust ones.

Our work is most closely related to a recent sequence of  papers
analyzing the brittleness of fully connected neural networks to the FGSM-style attack
\eqref{eq:FirstFGSM}
\cite{daniely2020most,bubeck2021single,bartlett2021adversarial}. 
In particular, \cite{daniely2020most} showed that random ReLU networks are vulnerable 
if the width of each layer is small relative to  
 the width of the previous layer. For the case of two-layer networks, this result was improved in
  \cite{bubeck2021single} which considered either smooth activations
and  width subexponential in the input dimension $m = \exp(o(d))$, or ReLU activations, 
and width $m\le \exp(d^{0.24})$. 
Finally, \cite{bartlett2021adversarial} generalized the latter analysis to 
multi-layer networks with maximal width $m \le  \exp(d^c)$ for some small $c$.

We also point to the recent paper \cite{vardi2022gradient} which studied trained
two-layer ReLU networks (under the assumption that gradient flow converges to 
a network that perfectly classifies the training set). These are shown to be non-robust (in a stronger
sense than above) for sample size $n\le \sqrt{d}$.

In this paper, we prove that the FGSM-like attack \eqref{eq:FirstFGSM}
indeed finds adversarial examples for neural networks with random Gaussian 
weights. We present the following novel contributions, with respect to earlier work:
\begin{description}
\item[Arbitrary width.] Our results apply to arbitrary diverging width, without upper 
bounds on the growth rate. This question posed as an open problem in  \cite{bubeck2021single}
and is not merely academic. A large body of literature connects wide random
neural networks to Gaussian processes and kernel methods, see 
\cite{neal1994priors,lee2019wide,arora2019exact,bartlett2021deep} for a few pointers.
For instance \cite{mei2022generalization,montanari2022interpolation} prove that the generalization
properties of two-layer networks linearized around their initialization approach
the one of the associated infinite-width kernel as soon as the
number of parameters become larger than the number of
samples.

Within this context, the upper bounds on width assumed
in \cite{bubeck2021single,bartlett2021adversarial} are somewhat puzzling. 
A priori, they could suggest that exponentially wide networks are more robust 
than sub-exponentially wide, although their generalization properties are similar.
Here we prove that this is not the case. In particular, our results apply to Gaussian 
processes as well.   
\item[General activation.] Both for two-layer and multi-layer networks, our proof 
applies for a general class of activation functions $\sigma(x)$. We only require 
$\sigma'(x)$ to exist almost everywhere, continuous, and bounded by a polynomial. 

While most activation function of practical use are more regular than this (e.g., Lipschitz continuous),
this generalization clarifies that the approximate linearity 
property \eqref{eq:ApproxLinearity} is not a naive consequence of the smoothness of the 
activation functions.  
\item[Weak linearity condition.] Our proofs are based on a weaker notion of linearity than 
\cite{daniely2020most,bubeck2021single,bartlett2021adversarial}. Namely,
instead of proving that $f(\,\cdot\,;\btheta)$ is approximately linear in a neighborhood of $\bx$,
we only prove that it is approximately linear along
 the direction of interest $\nabla_{\bx} f(\bx;\btheta)$. 
 \item[Gaussian conditioning.] Establishing approximate linearity along a specific direction
 poses an obvious mathematical challenge: The direction $\nabla_{\bx} f(\bx;\btheta)$ 
 is correlated with the function $f(\,\cdot\,;\btheta)$ itself. We deal with this difficulty 
 by introducing a Gaussian conditioning technique that was not used before in this context,
 and we believe can be useful to study other attacks.
\end{description}

For clarity of exposition, we will treat separately the two-layer 
and multi-layer cases. This allows the reader to understand the proof
strategy in a simpler example, before diving into the notational intricacies of multi-layer
networks.  In the case of two-layer networks, we prove two theorems. The first one is 
stated in Section \ref{sec:2layer}, and 
establishes that the attack succeeds with probability converging to one,
but do not provide explicit probability bounds. On the other hand, this theorems holds for very 
general activation functions. We present the proof of these results in \cref{sec:2layer-proof}. We then state a complementary result in Section
\ref{sec:NonAsymp} which explicit non-asymptotic probability bounds, but limited
to Lipschitz activations. 
The result for multi-layer network is stated in Section \ref{sec:multilayer}
with proofs in Section \ref{sec:multilayer-proof}.
Several technical lemmas are deferred to the appendices.

\subsection{Notations}

We generally use lower case letters for scalars, lower case bold for vectors and
upper case bold for matrices. The ordinary scalar product of vectors $\bu,\bv\in\RR^n$ is
denoted by $\<\bu,\bv\>$, and we let $\|\bu\|_2:= \<\bu,\bu\>^{1/2}$. 
For $n \in \NN_+$, we let $[n] = \{1,2,\dots,n\}$. We denote by 
$\plim$ convergence in probability. For random variables $X, Y$, we denote by $X \perp Y$ if
 $X$ and $Y$ are independent of each other. We denote by $\plim$ convergence in probability.

\section{Random two-layer networks}
\label{sec:2layer}

We begin by studying the following random two-layer neural network:
\begin{align}
	f(\bx;\btheta) = \sum_{i = 1}^m a_i \sigma(\bw_i^{\top} \bx).\label{eq:TwoLayerFirst}
\end{align}
Here, $\sigma: \RR \to \RR$ is a fixed activation function, 
$\btheta = \{(a_i,\bw_i)\}_{i\le m}$,
the weight vectors $\bw_i \in \RR^d$ are i.i.d. generated from $\normal(\textbf{0}, \id_d / d)$, 
and $(a_i)_{i\le m} \iidsim \normal(0,1/m)$. We denote by $\bW \in \RR^{m \times d}$ the random weight
 matrix with the $i$-th row equal to $\bw_i$, $\ba \in \RR^m$ the vector with the $i$-th coordinate 
 equal to $a_i$. We assume that $\bW$ is independent of $\ba$. 
 In what follows, we shall typically drop the argument $\btheta$ from $f$,
 and write $f(\bx) = f(\bx;\btheta)$.

Let $\tau := \sign(f(\bx))$, and $s_d \in \RR^+$ be the step size which depends on $(m,d)$. We define
\begin{align*}
	\bx^s := \bx - \tau s_d \nabla f(\bx).\label{eq:Xs}
\end{align*} 
Our main result on two-layer networks establishes
 that there exists a sequence of step sizes $\{s_d\}_{d \geq 1}$, such that for 
 $d, m=m(d) \rightarrow \infty$, $\|\bx - \bx^s\|_2 / \|\bx\|_2 \toP 0$, while with high 
 probability $\sign(f(\bx)) \neq \sign(f(\bx^s))$. 
\begin{theorem}\label{thm:2-layer}
	Let $\bx \in \RR^d$ be a deterministic vector with $\|\bx\|_2 = \sqrt{d}$. 
	Assume that $\sigma(x)$ is not a constant, $\sigma$ is continuous, almost everywhere 
	differentiable, $\sigma'$ is almost everywhere continuous, 
	and $|\sigma'(x)| \leq C_{\sigma}(1 + |x|^{k - 1})$, where
	$k \in \NN_+$ is a fixed positive integer and $C_{\sigma} > 0$ is a constant 
	depending only on $\sigma$. 
	
	Then the following hold:
%
	\begin{enumerate}
		\item There exists a constant $C > 0$ 
		depending only on $\sigma$, such that for any $\delta \in (0,1)$, with probability 
		at least $1 - \delta$, 
		\begin{align*}
			\frac{\|\bx - \bx^s\|_2}{\|\bx\|_2} \leq \frac{Cs_d}{\sqrt{d}} (1 + d^{-1/2} \log(1 / \delta))(1 + (m\delta)^{-1/2}).
		\end{align*}
		\item Let $\{\xi_d\}_{d \in \NN_+} \subseteq \RR^+$ be an increasing sequence 
		such that $\xi_d \to \infty$ as $d \to \infty$.
		Then there exists $\{s_d\}_{d \in \NN_+} \subseteq \RR^+$, such that $s_d \leq \xi_d$
		and the following hold: 
		\begin{align}
	\plim_{m,d\to\infty}\frac{\|\bx - \bx^s\|_2}{\|\bx\|_2} =  0\, ,\;\;\;	
	\lim_{m,d\to\infty}\P(\sign(f(\bx)) \neq \sign(f(\bx^s))) = 1\, .
\end{align}
	\end{enumerate}
\end{theorem}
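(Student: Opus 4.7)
The approach centers on a Gaussian conditioning decomposition of the weights along $\bx$ and its orthogonal complement. Write $z_i := \bw_i^\top \bx$, so $(z_i)_{i\le m}\iidsim\N(0,1)$, and decompose $\bw_i = (z_i/d)\bx + \bw_i^\perp$, where conditionally on $z_i$ the residual $\bw_i^\perp\in\bx^\perp$ is an independent centered Gaussian with covariance $d^{-1}(\bI - \bx\bx^\top/d)$. Substituting into the gradient yields the explicit decomposition
\begin{align*}
\nabla f(\bx) \;=\; \frac{A}{d}\,\bx \,+\, \bh,\qquad A := \sum_{i=1}^{m} a_i\sigma'(z_i)z_i,\qquad \bh := \sum_{i=1}^{m} a_i\sigma'(z_i)\bw_i^\perp \,\in\, \bx^\perp.
\end{align*}
Conditional on $\bW$, $\nabla f(\bx)$ is Gaussian with covariance $m^{-1}\sum_i\sigma'(z_i)^2\bw_i\bw_i^\top$; by the polynomial bound on $\sigma'$ and standard concentration of $\|\bw_i\|_2^2$ and $z_i^2$, the trace and squared-Frobenius norm of this covariance concentrate to constants, so a chi-squared-style tail bound on $\|\nabla f\|_2^2$ conditionally on $\bW$, combined with Markov's inequality for the conditional mean, yields Part 1 with the stated $\delta$-dependence (the factor $(1+(m\delta)^{-1/2})$ originates from Markov applied to the conditional variance $\propto 1/m$, the factor $(1+d^{-1/2}\log(1/\delta))$ from deviations of $\|\bw_i\|_2^2$).

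For Part 2, the core identity is obtained from the fundamental theorem of calculus (necessary since $\sigma$ is not assumed $C^1$):
\begin{align*}
f(\bx^s) - f(\bx) \;=\; \sum_{i=1}^{m} a_i\sigma'(z_i)\,\bw_i^\top(\bx^s-\bx) \,+\, E \;=\; -\tau s_d\,\|\nabla f(\bx)\|_2^2 \,+\, E,
\end{align*}
where $E := \sum_i a_i \int_{z_i}^{\bw_i^\top\bx^s}[\sigma'(t)-\sigma'(z_i)]\,dt$ is the FTOC remainder. By the orthogonal decomposition, $\|\nabla f\|_2^2 = A^2/d + \|\bh\|_2^2$; a short calculation using $\E[a_i^2]=1/m$ and concentration of $m^{-1}\sum_i\sigma'(z_i)^2\|\bw_i^\perp\|_2^2$ gives $\|\bh\|_2^2\toP\E[\sigma'(Z)^2]$, a strictly positive constant because $\sigma$ is nonconstant, while $A^2/d = o_P(1)$. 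Together with the bound $|f(\bx)|=O_P(1)$ (its variance equals $\E[\sigma(Z)^2]$), this identity forces $\sign(f(\bx^s)) = -\tau$ with probability tending to one, provided $s_d\to\infty$ and $E=o_P(s_d)$.

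Controlling $E$ is the main obstacle, and it is where Gaussian conditioning re-enters. Writing $\bw_i^\top\nabla f(\bx) = z_iA/d + a_i\sigma'(z_i)\|\bw_i^\perp\|_2^2 + R_i$ with $R_i := \sum_{j\ne i}a_j\sigma'(z_j)\bw_i^{\perp\top}\bw_j^\perp$, a second conditioning step (conditional on all of $(z_j,a_j,\bw_j^\perp)_{j\ne i}$, each $R_i$ is Gaussian with variance $O(1/d)$) together with a union bound over $i\in[m]$ shows that $\max_i|\bw_i^\top(\bx^s-\bx)|$ is small as $d,m\to\infty$ whenever $s_d$ grows slowly. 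A uniform-integrability argument combining the polynomial bound on $\sigma'$ with almost-sure continuity of $\sigma'$ at $z_i$ (the discontinuity set has Lebesgue, hence Gaussian, measure zero) then yields $E = o_P(s_d)$. Finally, a diagonal extraction produces $s_d$ satisfying $s_d\le\xi_d$, $s_d\to\infty$ slowly enough to validate the smallness conditions, and $s_d/\sqrt{d}\to 0$ so that $\|\bx-\bx^s\|_2/\|\bx\|_2 = s_d\|\nabla f\|_2/\sqrt{d}\toP 0$ via Part 1. The weakness of the regularity hypothesis on $\sigma'$ is precisely what makes the FTOC remainder bound delicate; the polynomial growth condition is used both to guarantee uniform integrability of the integrand and to control the $L^p$ norms of $\sigma'(z_i)$ after the union bound.
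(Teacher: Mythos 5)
Your Part 1 argument and your first-order identity $f(\bx^s)-f(\bx) = -\tau s_d\|\nabla f(\bx)\|_2^2 + E$ with $\|\nabla f(\bx)\|_2^2 \toP \E[\sigma'(Z)^2]$ are consistent with the paper, and your decomposition $\bw_i^\top\nabla f(\bx) = z_iA/d + a_i\sigma'(z_i)\|\bw_i^\perp\|_2^2 + R_i$ is essentially the paper's $(\mu,\beta,\gamma)$ decomposition of the perturbed preactivations. The genuine gap is in how you control the remainder $E$: you bound $\max_{i\le m}|\bw_i^\top(\bx^s-\bx)|$ via a union bound over $i\in[m]$ and then invoke continuity of $\sigma'$ pointwise at the $z_i$. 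The union bound introduces factors of $\sqrt{\log m}$ against powers of $d$ (e.g.\ $\max_i|R_i| \asymp \sqrt{\log m/d}$ and $\max_i |z_i A|/d \asymp \sqrt{\log m}/d$), so the per-neuron perturbation is \emph{not} small once $\log m \gtrsim d$. Since the theorem places no upper bound on the width $m=m(d)$ --- removing exactly the subexponential-width restrictions of prior work is the paper's main point --- this step fails in the regime the theorem is meant to cover, and patching it by assuming $\log m = o(d)$ would prove a strictly weaker statement. Even in the subexponential regime, the passage from ``each $|\Delta_i|$ small'' plus a.e.\ continuity of $\sigma'$ and uniform integrability to $E=o_P(s_d)$ needs an averaged equicontinuity statement that your sketch does not supply.

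The paper's proof avoids per-neuron control entirely: after two rounds of Gaussian conditioning it writes the perturbed preactivation as $(1-\mu)g_i - \beta\sqrt{m}\,a_i\sigma'(g_i) - \gamma u_i$ with a fresh independent Gaussian vector $\bu$ and scalars $\mu,\beta,\gamma = o_P(1)$, and then treats $F(\bg^s)-F(\bg)$ as an empirical process $\GG_m(\btheta)$ over i.i.d.\ triples $(b_i,g_i,u_i)$, indexed by the three-dimensional parameter $\btheta$ and evaluated at the random point $(\mu,\beta,\gamma)$. A uniform CLT (Donsker class) argument gives $\GG_m((\mu,\beta,\gamma)) = o_P(1)$ irrespective of how $m$ scales with $d$, and Stein's lemma plus dominated convergence identify the deterministic shift $-\tau S_0\E[\sigma'(g)^2]$; a diagonal argument over the constant step sizes $S_0$ then yields the claimed sequence $s_d$. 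To repair your proof you would need an aggregate control of $E$ of this type (uniform LLN/CLT over the low-dimensional random parameters), rather than a maximum over the $m$ neurons.
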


\begin{remark}
Note that this theorem provides a completely quantitative non-asymptotic 
upper bound on the size of the perturbation $\|\bx - \bx^s\|_2$. On
the other hand, it does not provide convergence rates for the success probability 
$\P(\sign(f(\bx)) \neq \sign(f(\bx^s)))$. 

This can be traced to the use of a non-quantitative uniform central limit theorem 
in our proof (see below). 
In Section \ref{sec:NonAsymp} we obtain an explicit rate by a more careful handling of that
 step, at the price of assuming Lipschitz activations.
\end{remark}

\begin{remark}
The scale of the input is chosen so that the output after
the first layer $\sigma(\bw_i^{\top}\bx)$ are of order one when 
$\bw_i\sim\normal(\textbf{0}, \id_d / d)$.
As a consequence, the output after the second layer $f(\bx;\btheta)$ is also of order one for
$a_i \iidsim \normal(0,1/m)$.
	
 The proof of the theorem applies with barely any change to all inputs satisfying 
 $c \sqrt{d} \leq \|\bx\|_2 \leq C \sqrt{d}$ for some positive constants $c, C$. Here, we 
 choose not to state this general version to simplify notations. 

We note that the scaling is unimportant when the activation function is positively homogeneous 
(that is to say, for $z > 0$, $\sigma(zx) = z \sigma(x)$). 
On the other hand, for general activations the sensitivity 
to input perturbations is necessarily dependent of the scale.
For instance, if $\sigma(t) = (t - 1)_+$ and $\|\bx\|_2 \ll \sqrt{d}$
then, with high probability we have $|\bw_i^\top\bx|\ll 1$ for all $i\le m$.
In other words, the inputs to the hidden neurons lie in the region in which the 
activation vanishes, and therefore a small 
perturbation will not change the network output.
\end{remark}

\subsection{Proof technique}

As mentioned in the introduction, our proofs are based on Gaussian calculus. 
Before dwelling into the actual calculation, it is perhaps useful to describe a 
simple calculation along the same lines. Let $\bx\in\reals^d$, $\|\bx\|_2=\sqrt{d}$ 
be fixed and the adversarial example be given by Eq.~\eqref{eq:Xs}, with
$f(\bx) = f(\bx;\btheta)$, the two-layer network of Eq.~\eqref{eq:TwoLayerFirst}. 

The gradient of $f$ is given by
\begin{align}
	\nabla f(\bx) = \bW^{\top}\bD_{\sigma} \ba,\label{eq:Gradient2Layer}
\end{align}
where $\bD_{\sigma} \in \RR^{m \times m}$ is a diagonal matrix: $\bD_{\sigma} = 
\diag(\{\sigma'(\bw_i^{\top} \bx)\}_{i \leq m})$. 

Note that $\nabla f(\bx)$ is a random vector, because $\bW,\ba$ are random.
It is clearly useful to understand the distribution of this random vector: this will tell 
us about the properties of the adversarial perturbation. 
It is elementary that, if $\bb$ is a random vector independent of $\bW$, 
then conditional on the norm $\|\bb\|_2$, $\bW^{\top}\bb$ is a Gaussian random vector:
\begin{align}
\bW^{\top}\bb\big|_{\|\bb\|_2}\sim\normal(\mathbf{0}, (\|\bb\|_2^2/d) \cdot \id_d)\,. 
\end{align}
Equivalently, we can express the same fact by saying 
that $\bW^{\top}\bb = \|\bb\|_2 \bz/\sqrt{d}$, where $\bz$ is a standard Gaussian vector 
that is independent of 
$\bb$.

This fact might suggest that 
\begin{align}
\nabla f(\bx) \approx \bz \cdot \frac{1}{\sqrt{d}}\big\|\bD_{\sigma} \ba\big\|_2 \, ,\label{eq:Explanation}
\end{align}
where $\bz$ is a standard Gaussian vector 
that is independent of  everything else. Since (as is easy to see)
$\big\|\bD_{\sigma} \ba\big\|_2$ concentrates, this would further imply that the gradient is
approximately Gaussian with i.i.d. entries, whose variances we can easily compute.

However, the implication is not straightforward
because in this case, $\bb = \bD_{\sigma} \ba$ is not independent of $\bW$
(because the diagonal elements of $\bD_{\sigma}$ are $\sigma'(\bw_i^{\top}\bx)$ and are therefore 
a deterministic function of $\bW$).
As a consequence, Eq.~\eqref{eq:Explanation} is at best an approximate equality, and quantifying 
the error requires an argument.

Luckily, the Gaussian distribution allows for a particularly elegant such argument.
Let  $\Pi_{\bx} \in \RR^{d \times d}$ denote the orthogonal projector onto the linear space spanned by
 $\bx$, $\Pi_{\bx} =\bx\bx^{\top}/d$ and  $\Pi_{\bx}^{\perp} := \id_d - \Pi_{\bx}$.
 Further, define $\bg = \bW\bx$ (the input first-layer neurons). Then we have
\begin{align*}
\bW &= \bW\Pi_{\bx}  + \bW\Pi_{\bx}^\perp\\
 &= \frac{1}{d}\bg\bx^{\top}+ \bW\Pi_{\bx}^\perp\, .
 \end{align*}
 This decomposition has several useful properties: $(i)$~$\bW\Pi_{\bx}^\perp$ is independent
 of $\bg$ (because two orthogonal projections of a standard normal are independent); 
 $(ii)$~$\bD_{\sigma}$ is a function uniquely of $\bg$; $(iii)$~the distribution of $\bg$
 is simple, namely $\bg\sim\normal(0,\id_d)$. 
 
 Using this decomposition, we obtain
\begin{align}
\nabla f(\bx) &= \frac{1}{d}\bx\bg^\top\bD_{\sigma}\ba+ \Pi^{\perp}_{\bx}\bW^{\top}
\bD_{\sigma}\ba\nonumber\\
&\stackrel{(*)}{=} \alpha_{\parallel} \bx + \alpha_{\perp}  \Pi^{\perp}_{\bx}\bz\, ,
\label{eq:GradientDecomposition}
\end{align}
where $\bz\sim\normal(\mathbf{0},\id_d)$ is independent of $\bg$ and
\begin{align*}
\alpha_{\parallel}:= \frac{1}{d}\sum_{i=1}^m g_i a_i\sigma'(g_i)\, ,\;\;
\alpha_{\perp}^2 :=  \frac{1}{d}\sum_{i=1}^m a_i^2\sigma'(g_i)^2\, .
\end{align*}
Note that the crucial step $(*)$ is correct because $\Pi^{\perp}_{\bx}\bW^{\top}$
is independent of $\bg$ as discussed above. 
By the law of large numbers, both $\alpha_{\parallel}=O_P(1/d)$ and $\alpha_{\perp}$
concentrates around its expectation, which is of order $1/\sqrt{d}$.
Therefore, Eq.~\eqref{eq:GradientDecomposition} provides an exact version of \cref{eq:Explanation}.

The basic intuition in the decomposition \eqref{eq:GradientDecomposition}
is quite simple. Even if $\bD_{\sigma}$ depends on $\bW$, it only depends on a low-dimensional 
projection of this matrix. We can condition on this projection, and 
resample the orthogonal component of $\bW$ independently from it.

Our proofs push forward the same type of reasoning. 
Instead of computing the distribution of $\nabla f(\bx)$, we now have to compute the
distribution of $f(\bx^s) = f(\bx-\tau s_d\nabla f(\bx))$. By conditioning on
suitable projections of $\bW$, we write this quantity (better, its difference from the 
first order Taylor approximation)
in terms of a certain numbers of empirical averages similar to $\alpha_{\parallel}$,
$\alpha_{\perp}$ above. Thanks to  such representations, 
the proofs of our  main theorems reduce to controlling these empirical averages
and this can be achieved by standard empirical process theory.

\section{Proof of \cref{thm:2-layer}}
\label{sec:2layer-proof}

As discussed above, our proof strategy is based on conditioning on 
low-dimensional projections of $\bW$. 
We state a  Gaussian conditioning lemma that will 
be used repeatedly throughout the paper. 

We say that $\bY$ depends on $\bX$ only through
 $g(\bX)$ if and only if there exists a deterministic function $h$ and a random vector 
 $\bZ$ that is independent of $\bX$, such that $\bY = h(g(\bX), \bZ)$.
\begin{lemma}\label{lemma:gaussian-conditioning}
	Let $\bX \in \RR^{m \times d}$ be a matrix with i.i.d. standard Gaussian entries, and 
	$\bA_1 \in \RR^{k_1 \times m}, \bA_2 \in \RR^{d \times k_2}$ be other random matrices. 
	Let $\bY = h_1(\bA_1 \bX, \bX \bA_2, \bZ_1)$ and $\bA_2 = h_2(\bA_1 \bX, \bZ_2)$ for 
	deterministic functions $h_1$ and $h_2$.
	 Further assume that $(\bX, \bA_1, \bZ_1, \bZ_2)$ are mutually independent.
	  Then there exists $\tilde{\bX} \in \RR^{m \times d}$ which has the same distribution with $\bX$ and is independent of $\bY$, such that
\begin{align*}
	\bX = \Pi_{\bA_1}^{\perp}\tilde{\bX} \Pi_{\bA_2}^{\perp} + \Pi_{\bA_1}^{\perp}{\bX} \Pi_{\bA_2} + \Pi_{\bA_1}{\bX} \Pi_{\bA_2}^{\perp} + \Pi_{\bA_1}{\bX} \Pi_{\bA_2},
\end{align*} 
	where $\Pi_{\bA_1} \in \RR^{m \times m}$, $\Pi_{\bA_2} \in \RR^{d \times d}$ are 
	 the orthogonal projectors onto the subspaces
	 spanned by the rows of $\bA_1$, $\bA_2$, respectively. 
	Further, $\Pi_{\bA_1}^{\perp} := \id_m - \Pi_{\bA_1}$, $\Pi_{\bA_2}^{\perp} := \id_d - \Pi_{\bA_2}$. 
\end{lemma}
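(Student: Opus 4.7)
The plan is to build $\tilde{\bX}$ by keeping the ``free'' component $\Pi_{\bA_1}^{\perp}\bX\Pi_{\bA_2}^{\perp}$ of $\bX$ and replacing the other three orthogonal components by those of an independent Gaussian copy. Concretely, let $\hat{\bX}\in\RR^{m\times d}$ be an i.i.d.\ standard Gaussian matrix, drawn independently of $(\bX,\bA_1,\bZ_1,\bZ_2)$, and set
$$\tilde{\bX} := \Pi_{\bA_1}^{\perp}\bX\Pi_{\bA_2}^{\perp} + \Pi_{\bA_1}^{\perp}\hat{\bX}\Pi_{\bA_2} + \Pi_{\bA_1}\hat{\bX}\Pi_{\bA_2}^{\perp} + \Pi_{\bA_1}\hat{\bX}\Pi_{\bA_2}.$$
The identity in the lemma then follows immediately from the tautological decomposition $\bX = (\Pi_{\bA_1}^{\perp}+\Pi_{\bA_1})\bX(\Pi_{\bA_2}^{\perp}+\Pi_{\bA_2})$ combined with the observation that $\Pi_{\bA_1}^{\perp}\tilde{\bX}\Pi_{\bA_2}^{\perp} = \Pi_{\bA_1}^{\perp}\bX\Pi_{\bA_2}^{\perp}$, since the three $\hat{\bX}$ summands are annihilated by the outer projectors.

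The two nontrivial claims, namely $\tilde{\bX}\stackrel{d}{=}\bX$ and $\tilde{\bX}\perp\bY$, will both be handled by conditioning on the $\sigma$-algebra $\calG := \sigma(\bA_1,\bZ_2,\bA_1\bX)$. Given $\calG$, the matrix $\bA_2 = h_2(\bA_1\bX,\bZ_2)$ is deterministic, so both projectors $\Pi_{\bA_1}$ and $\Pi_{\bA_2}$ are frozen. Moreover, $\Pi_{\bA_1}\bX$ is $\calG$-measurable (recoverable from $\bA_1$ and $\bA_1\bX$ via the pseudoinverse of $\bA_1$), while $\Pi_{\bA_1}^{\perp}\bX$ is conditionally a Gaussian matrix on the complementary row subspace, independent of $\calG$. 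Applying Gaussian orthogonality to $\Pi_{\bA_1}^{\perp}\bX$ with the frozen $\Pi_{\bA_2}$ then shows that $\Pi_{\bA_1}^{\perp}\bX\Pi_{\bA_2}$ and $\Pi_{\bA_1}^{\perp}\bX\Pi_{\bA_2}^{\perp}$ are independent conditional Gaussians; the same four-way decomposition applies verbatim to $\hat{\bX}$, which is independent of $\calG$.

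For the distributional claim, I would observe that the four summands defining $\tilde{\bX}$ are conditionally independent given $\calG$ (the first depends only on $\bX$, the other three only on $\hat{\bX}$, and $\bX \perp \hat{\bX}$), and each has the same conditional Gaussian law as the corresponding component of a standard Gaussian matrix under the frozen projectors. Thus $\tilde{\bX}\mid\calG$ is a standard Gaussian matrix and this conditional law does not depend on the value of $\calG$, which yields $\tilde{\bX}\stackrel{d}{=}\bX$. For the independence claim, rewriting $\bX\bA_2 = \Pi_{\bA_1}\bX\bA_2 + \Pi_{\bA_1}^{\perp}\bX\Pi_{\bA_2}\bA_2$ shows that, conditionally on $\calG$, $\bY = h_1(\bA_1\bX,\bX\bA_2,\bZ_1)$ is a function of $\Pi_{\bA_1}^{\perp}\bX\Pi_{\bA_2}$ and $\bZ_1$ alone. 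These are conditionally independent of the building blocks $\Pi_{\bA_1}^{\perp}\bX\Pi_{\bA_2}^{\perp}$ and $\hat{\bX}$ of $\tilde{\bX}$, so $\tilde{\bX}\perp\bY\mid\calG$; combined with the fact that the conditional law of $\tilde{\bX}$ does not depend on $\calG$, a short direct computation of $\P(\tilde{\bX}\in A,\bY\in B)$ upgrades this to unconditional independence.

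The main obstacle is the coupling between the projector $\Pi_{\bA_2}$ and $\bX$ introduced by $\bA_2 = h_2(\bA_1\bX,\bZ_2)$, which prevents any direct appeal to Gaussian orthogonality with respect to a random $\Pi_{\bA_2}$. The resolution is precisely the choice of conditioning on $\calG$: it freezes $\bA_2$ while leaving $\Pi_{\bA_1}^{\perp}\bX$ a bona fide Gaussian matrix independent of the conditioning, which is what makes the four-way orthogonal decomposition and the swap of the three components into $\hat{\bX}$-versions legitimate.
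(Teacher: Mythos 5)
Your proposal is correct and follows essentially the same route as the paper: you keep the component $\Pi_{\bA_1}^{\perp}\bX\Pi_{\bA_2}^{\perp}$ of $\bX$ and replace the remaining three orthogonal components by those of an independent Gaussian copy, which is exactly the matrix the paper constructs (its $\bX_2$), and both arguments rest on Gaussian conditioning plus the independence of orthogonal projections of a Gaussian matrix under frozen projectors. The only difference is organizational: the paper proceeds in two stages with two auxiliary copies (first forming $\bX_1=\Pi_{\bA_1}^{\perp}\bX+\Pi_{\bA_1}\bX'$, independent of $(\bA_1,\bA_1\bX,\bZ_1,\bZ_2)$, and then decomposing with respect to $\Pi_{\bA_2}$), whereas you condition once on $\sigma(\bA_1,\bZ_2,\bA_1\bX)$, which freezes $\bA_2$ and both projectors and lets you carry out the four-way decomposition and the independence-from-$\bY$ argument in a single step.
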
	
The proof of \cref{lemma:gaussian-conditioning} is a straightforward application of the
 properties of Gaussian ensembles, and we defer it to 
 Appendix \ref{sec:proof-of-lemma:gaussian-conditioning}.

\subsubsection*{Proof of the first claim}  

We first prove claim 1 of the theorem. Recall that the gradient of $f$ is given by 
Eq.~\eqref{eq:Gradient2Layer}.
The next lemma provides non-asymptotic control over the Euclidean norm of $\nabla f(\bx)$. 
\begin{lemma}\label{lemma:control-gd-norm}
	Under the conditions of \cref{thm:2-layer}, there exists a constant $C > 0$ that depends only on $\sigma$, such that for any $\delta > 0$, with probability at least $1 - \delta$, we have
	\begin{align*}
		\|\nabla f(\bx)\|_2 \leq C (1 + d^{-1/2} \log(1 / \delta))(1 + (m\delta)^{-1/2}).
	\end{align*}
\end{lemma}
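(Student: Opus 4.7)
My plan is to push through the Gaussian conditioning calculation that the authors sketch immediately above the lemma, treating it as an exact decomposition rather than an approximation. Applying \cref{lemma:gaussian-conditioning} with $\bA_1 = \bx^\top/\sqrt{d}$ (viewed as a single ``row direction'') and trivial $\bA_2$, I can write $\bW = d^{-1}\bg\bx^\top + \bW\Pi_\bx^{\perp}$, where $\bg:=\bW\bx\sim\N(\mathbf{0},\id_m)$ and $\bW\Pi_\bx^{\perp}$ is independent of $\bg$. Since the diagonal of $\bD_\sigma$ depends on $\bW$ only through $\bg$, and $\ba$ is independent of $\bW$, conditional on $(\bg,\ba)$ the vector $\bD_\sigma\ba$ is deterministic and $(\bW\Pi_\bx^{\perp})^\top\bD_\sigma\ba$ is centered Gaussian with covariance $d^{-1}\|\bD_\sigma\ba\|_2^2\,\Pi_\bx^{\perp}$. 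This yields the exact identity
\begin{align*}
\nabla f(\bx) \;=\; \alpha_\parallel\,\bx \;+\; \alpha_\perp\,\Pi_\bx^{\perp}\bz,
\qquad
\alpha_\parallel := \tfrac{1}{d}\sum_{i=1}^m g_i\sigma'(g_i)a_i,
\quad
\alpha_\perp^2 := \tfrac{1}{d}\sum_{i=1}^m a_i^2\sigma'(g_i)^2,
\end{align*}
with $\bz\sim\N(\mathbf{0},\id_d)$ independent of $(\bg,\ba)$.

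Next I would apply the triangle inequality to obtain
\begin{align*}
\|\nabla f(\bx)\|_2 \;\le\; \sqrt{d}\,|\alpha_\parallel| \;+\; \alpha_\perp\,\|\bz\|_2
\end{align*}
and bound each ingredient on an event of probability at least $1-\delta/3$. For $\|\bz\|_2$, standard Gaussian concentration (or the Laurent--Massart $\chi^2$ inequality) gives $\|\bz\|_2\le\sqrt{d}\bigl(1+Cd^{-1/2}\log(1/\delta)\bigr)$, producing the first factor in the claim. For $\alpha_\perp^2$, the summands $a_i^2\sigma'(g_i)^2$ are i.i.d.\ with $a_i\perp g_i$; the polynomial-growth hypothesis $|\sigma'(x)|\le C_\sigma(1+|x|^{k-1})$ together with Gaussian moments gives $\EE[\alpha_\perp^2]=d^{-1}\EE[\sigma'(g)^2]=O(1/d)$ and $\Var(\alpha_\perp^2)=O(1/(md^2))$, so Chebyshev yields $\alpha_\perp^2\le (C/d)\bigl(1+(m\delta)^{-1/2}\bigr)$ with the desired probability; taking square roots produces the second factor. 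The remaining term $\sqrt{d}\,|\alpha_\parallel|$ is of smaller order: $\EE[\alpha_\parallel]=0$ and $\Var(\alpha_\parallel)=O(1/d^2)$, so Chebyshev gives $\sqrt{d}\,|\alpha_\parallel|=O_P(d^{-1/2}\delta^{-1/2})$, which is absorbed into the constant $C$ for $d$ large (and, more carefully, into the product $(1+d^{-1/2}\log(1/\delta))(1+(m\delta)^{-1/2})$ up to enlarging $C$). A union bound then delivers the stated inequality.

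The main obstacle I anticipate is not the moment estimates, which are routine under the polynomial-growth hypothesis, but the correct bookkeeping in the invocation of \cref{lemma:gaussian-conditioning}: one must verify that the object playing the role of $\bY$ (here essentially $\bD_\sigma\ba$) depends on $\bW$ only through $\bW\bx$, and that $\ba$ plays the role of the auxiliary independent source $\bZ_1$. Once the independence between $(\bg,\ba)$ and the orthogonal component $\bW\Pi_\bx^{\perp}$ is rigorously extracted, the decomposition of $\nabla f(\bx)$ into the parallel and orthogonal parts is exact, and the rest of the argument is purely computational. A secondary technical point is verifying the uniform-in-$m$ control of $\EE[\sigma'(g)^{2j}]$ for the relevant $j\le 2$, which follows from the polynomial bound and standard Gaussian moment formulas.
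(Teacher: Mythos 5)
Your decomposition of the gradient and your treatment of the orthogonal component essentially reproduce the paper's argument: the paper also conditions on $\bg=\bW\bx$, writes $\nabla f(\bx)\overset{d}{=}\frac1d\bx\,\bg^{\top}\bD_{\sigma}\ba+\Pi_{\bx}^{\perp}\tilde{\bW}^{\top}\bD_{\sigma}\ba$, and controls the orthogonal part exactly as you do (Chebyshev for $\sum_i a_i^2\sigma'(g_i)^2$, Bernstein/Laurent--Massart for the independent $\chi^2$ factor). The genuine gap is in the parallel term. Plain Chebyshev on $\alpha_{\parallel}=\frac1d\sum_i g_i\sigma'(g_i)a_i$ gives $\sqrt{d}\,|\alpha_{\parallel}|\lesssim d^{-1/2}\delta^{-1/2}$, and this is \emph{not} absorbed into $C(1+d^{-1/2}\log(1/\delta))(1+(m\delta)^{-1/2})$: the lemma must hold for every $\delta>0$ and there is no upper bound on $m$ relative to $d$ (arbitrary width is the point of the paper). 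For instance with $\delta=d^{-3}$ and $m=d^{2}$ your parallel contribution is of order $d$, while the claimed right-hand side is of order $d^{1/2}$; the ``enlarge $C$'' step therefore fails exactly in the regime of wide networks and small $\delta$.

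The repair is the one the paper uses: exploit that $\ba$ is Gaussian and independent of $\bW$, so conditionally on $\bg$ one has $\bg^{\top}\bD_{\sigma}\ba\sim\N\bigl(0,\frac1m\sum_i g_i^2\sigma'(g_i)^2\bigr)$. The Gaussian tail then contributes only a $\sqrt{\log(1/\delta)}$ factor, and Chebyshev is applied only to the $m$-term average $\frac1m\sum_i g_i^2\sigma'(g_i)^2$, contributing $(m\delta)^{-1/2}$; the parallel part becomes $O\bigl(d^{-1/2}\sqrt{\log(1/\delta)}\,(1+(m\delta)^{-1/4})\bigr)$, which is dominated by the stated bound. (The same conditional-Gaussian trick is what keeps the $\delta^{-1/2}$ dependence attached only to $m$-averages throughout the paper's proof.) A minor bookkeeping point: when invoking \cref{lemma:gaussian-conditioning}, the conditioning information is $\bW\bx$, a right multiplication, so $\bx$ plays the role of $\bA_2$ (with $\bA_1$ trivial), not $\bA_1$ as you wrote; this is harmless but should be corrected. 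The remaining estimates for $\alpha_{\perp}$ and $\|\bz\|_2$ are correct and match the paper.
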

The proof of \cref{lemma:control-gd-norm} is deferred to  \cref{sec:proof-of-lemma-control-gd-norm} in the appendix. Recall that $\|\bx\|_2 = \sqrt{d}$, thus the first claim of the theorem follows directly from \cref{lemma:control-gd-norm}.

\subsubsection*{Proof of the second claim}  

We next invoke \cref{lemma:gaussian-conditioning} to prove the second claim of \cref{thm:2-layer}. 
For the sake of simplicity, we define $\bg := \bW \bx$, $\bg^s := \bW \bx^s$, then $\bg \overset{d}{=} \normal(\mathbf{0}, \id_m)$.  By definition, we have
	\begin{align}\label{eq:1}
		\bg^s = &\ \bg - \tau s_d \bW \nabla f(\bx) \nonumber \\
		      = &\ \bg - \tau s_d \bW \bW^{\top} \bD_{\sigma} \ba.
	\end{align}
Recall that  $\Pi_{\bx} \in \RR^{d \times d}$ is the orthogonal projector onto the linear subspace spanned by $\bx$, and let $\Pi_{\bx}^{\perp} := \id_d - \Pi_{\bx}$. Using \cref{lemma:gaussian-conditioning}, we can decompose the weight matrix $\bW$ as $\bW =  \bg \bx^{\top} / d + \tilde\bW\Pi_{\bx}^{\perp}$, where $\tilde\bW$ has the same marginal distribution as $\bW$ and is independent of $(\bg, \ba)$. We then substitute this result into \cref{eq:1}, which gives 
	\begin{align}\label{eq:decomp-gs}
		\bg^s = & \ \bg - \tau s_d(\bg \bg^{\top} / d + \tilde\bW\Pi_{\bx}^{\perp} \tilde\bW^{\top}) \bD_{\sigma} \ba \nonumber\\
		= & \ \bg(1 - \tau s_d \bg^{\top} \bD_{\sigma} \ba / d) - \tau s_d \bar\bW \bar\bW^{\top} \bD_{\sigma} \ba \nonumber \\
		{=} & \ \bg(1 - \tau s_d \bg^{\top} \bD_{\sigma} \ba / d) - \tau s_d \Pi_{\bD_{\sigma} \ba}\bar\bW \bar\bW^{\top} \bD_{\sigma} \ba - \tau s_d \Pi_{\bD_{\sigma} \ba}^{\perp}\bar\bW_c \bar\bW^{\top} \bD_{\sigma} \ba \nonumber \\
		= & \ \bg(1 - \tau s_d \bg^{\top} \bD_{\sigma} \ba / d)  -\tau s_d \cdot \frac{\|\bar\bW^{\top} \bD_{\sigma} \ba\|_2^2 - \langle \bar{\bW}_c^{\top} \bD_{\sigma} \ba, \bar\bW^{\top} \bD_{\sigma} \ba \rangle}{\|\bD_{\sigma} \ba\|_2^2} \bD_{\sigma} \ba -  \frac{s_d}{\sqrt{d}} \|\bar\bW^{\top} \bD_{\sigma} \ba\|_2 \bu,
	\end{align}
	where $\bar\bW, \bar{\bW}_c \in \RR^{m \times (d - 1)}$ are matrices which have i.i.d. Gaussian entries with mean zero and variance $1 / d$. Furthermore, $\bar{\bW}$ is independent of $(\bg, \ba)$ and $\bar{\bW}_c$ is independent of $(\bg, \ba, \bar{\bW}^{\top} \bD_{\sigma} \ba)$. Such independence is established by \cref{lemma:gaussian-conditioning}. In the last line above, $\bu = \sqrt{d}\tau \bar{\bW}_c\bar{\bW}^{\top} \bD_{\sigma} \ba \|\bar{\bW}^{\top} \bD_{\sigma} \ba\|_2^{-1} \in \RR^m$, which by the property of the Gaussian distribution and the independence result we have just established has i.i.d. standard Gaussian entries and is independent of $(\bg, \ba, \bar{\bW}^{\top} \bD_{\sigma} \ba)$.  
	
	We introduce the following notations for the sake of simplicity. 	
	\begin{align}\label{eq:mu-beta-gamma}
		& \mu := \frac{1}{d}\tau s_d \bg^{\top} \bD_{\sigma} \ba, \nonumber \\
		& \beta := \tau s_d \cdot \frac{\|\bar\bW^{\top} \bD_{\sigma} \ba\|_2^2 - \langle \bar{\bW}_c^{\top} \bD_{\sigma} \ba, \bar\bW^{\top} \bD_{\sigma} \ba \rangle}{\sqrt{m}\|\bD_{\sigma} \ba\|_2^2}, \\
		& \gamma := \frac{s_d}{\sqrt{d}}\|\bar\bW^{\top} \bD_{\sigma} \ba\|_2.\nonumber
	\end{align}
	The following lemma states that under the current setting, the above quantities are small in probability. 
	\begin{lemma}\label{lemma:coefficients-op1}
		Under the conditions of \cref{thm:2-layer}, if we further assume that there exists a constant $S_0 > 0$, such that $s_d \rightarrow S_0$ as $d \to \infty$, then  as $d$ goes to infinity we have  
		\begin{align*}
			\mu = o_P(1), \qquad \beta = o_P(1), \qquad \gamma = o_P(1). 
		\end{align*}
	\end{lemma}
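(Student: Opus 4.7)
The plan is to control $\mu$, $\gamma$, and $\beta$ separately via Gaussian conditioning plus a second-moment bound. In each case the key observation is that $\ba$, $\bar{\bW}$, and $\bar{\bW}_c$ enter linearly in the quantity of interest, so conditioning on an appropriate sub-$\sigma$-field makes it a centered Gaussian with explicit variance, after which Chebyshev delivers the $o_P(1)$ conclusion. I would treat the three quantities in increasing order of difficulty.

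First, $\mu = (\tau s_d/d)\sum_{i=1}^m g_i a_i \sigma'(g_i)$. Conditioning on $\bg$, the independence $\ba\sim\normal(\bzero,\id_m/m) \perp \bg$ makes $\mu$ a centered Gaussian with variance $(s_d^2/(d^2 m))\sum_i g_i^2\sigma'(g_i)^2$. The polynomial growth $|\sigma'(x)|\le C_\sigma(1+|x|^{k-1})$ gives $\E[G^2\sigma'(G)^2]<\infty$ with $G\sim\normal(0,1)$, hence $\E[\mu^2]=O(s_d^2/d^2)\to 0$. For $\gamma$, I would condition on $(\bg,\ba)$; since $\bar{\bW}$ has i.i.d.\ $\normal(0,1/d)$ entries and is independent of $(\bg,\ba)$ by \cref{lemma:gaussian-conditioning}, the vector $\bar{\bW}^{\top}\bD_{\sigma}\ba$ is conditionally Gaussian with covariance $d^{-1}\|\bD_{\sigma}\ba\|_2^2\,\id_{d-1}$, so $\E[\gamma^2]=(s_d^2(d-1)/d^2)\,\E[\|\bD_{\sigma}\ba\|_2^2]=O(s_d^2/d)$, using $\E[\|\bD_{\sigma}\ba\|_2^2]=\E[\sigma'(G)^2]<\infty$.

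For $\beta$ I would split $\beta=\beta_1-\beta_2$ with $\beta_1=(\tau s_d/\sqrt{m})\|\bar{\bW}^{\top}\bD_{\sigma}\ba\|_2^2/\|\bD_{\sigma}\ba\|_2^2$ and $\beta_2=(\tau s_d/\sqrt{m})\langle\bar{\bW}_c^{\top}\bD_{\sigma}\ba,\bar{\bW}^{\top}\bD_{\sigma}\ba\rangle/\|\bD_{\sigma}\ba\|_2^2$. For $\beta_2$, I would condition on $(\bg,\ba,\bar{\bW}^{\top}\bD_{\sigma}\ba)$; under this conditioning $\bar{\bW}_c^{\top}\bD_{\sigma}\ba$ is Gaussian with covariance $d^{-1}\|\bD_{\sigma}\ba\|_2^2\,\id_{d-1}$ (using the independence statement established below \eqref{eq:decomp-gs}), so the inner product is centered Gaussian with variance $d^{-1}\|\bD_{\sigma}\ba\|_2^2\|\bar{\bW}^{\top}\bD_{\sigma}\ba\|_2^2$, and combining with the bound from the $\gamma$-step yields $\beta_2=O_P(s_d/\sqrt{md})=o_P(1)$. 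For $\beta_1$, the same conditional Gaussian computation gives $\|\bar{\bW}^{\top}\bD_{\sigma}\ba\|_2^2/\|\bD_{\sigma}\ba\|_2^2=(d-1)/d+O_P(1/\sqrt{d})=1+o_P(1)$, whence $\beta_1=O_P(s_d/\sqrt{m})=o_P(1)$.

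The main obstacle is justifying the ratio manipulation for $\beta_1$: its denominator $\|\bD_{\sigma}\ba\|_2^2=\sum_i a_i^2\sigma'(g_i)^2$ must be bounded away from zero in probability. A conditional-on-$\bg$ mean-variance calculation shows $\|\bD_{\sigma}\ba\|_2^2 = m^{-1}\sum_i\sigma'(g_i)^2 + O_P(m^{-1/2})$, and the law of large numbers gives $m^{-1}\sum_i\sigma'(g_i)^2\to\E[\sigma'(G)^2]$ in probability. Strict positivity of this limit is inherited from the hypothesis that $\sigma$ is nonconstant (together with enough regularity for the fundamental theorem of calculus to force $\sigma'\not\equiv 0$ on a set of positive Lebesgue measure). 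Granted this, $\|\bD_{\sigma}\ba\|_2^{-2}=O_P(1)$, the split of $\beta$ is legitimate, and all three quantities are $o_P(1)$ as asserted.
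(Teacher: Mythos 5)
Your proposal is correct and follows essentially the same route as the paper: Gaussian conditioning on $(\bg,\ba)$ (and on $\bar\bW^{\top}\bD_{\sigma}\ba$ for the cross term), explicit conditional variance/chi-square computations combined with Chebyshev and the law of large numbers, and strict positivity of $\E[\sigma'(g)^2]$ from the nonconstancy of $\sigma$ to handle the denominator. The only cosmetic caveat is that $\tau$ is $(\bg,\ba)$-measurable, so $\mu$ is not literally conditionally Gaussian, but since $\tau^2=1$ your second-moment bounds are unaffected, exactly as in the paper's argument.
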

We postpone the proof of \cref{lemma:coefficients-op1} to Appendix \ref{sec:proof-of-lemma:coefficients-op1}, which constitutes merely of standard applications of concentration inequalities. In the following parts of the proof, we will assume $\{s_d\}_{d \in \NN_+}$ satisfies the condition stated in \cref{lemma:coefficients-op1}. 

Let $F: \RR^m \rightarrow \RR$ be a random function, such that $F(\by) = \sum_{i = 1}^m a_i \sigma(y_i)$. Then the quantity of interest $f(\bx^s) - f(\bx)$ can be expressed as $F(\bg^s) - F(\bg)$. Furthermore, by \cref{eq:decomp-gs,eq:mu-beta-gamma}, we have
	\begin{align*}
		& F(\bg^s) - F(\bg) - \langle \nabla F(\bg), \bg^s - \bg \rangle \\
		= & \sum_{i = 1}^m \left\{ a_i\big( \sigma((1 - \mu)g_i - \beta \sqrt{m}a_i\sigma'(g_i) - \gamma u_i) - \sigma(g_i) \big)+ \mu a_i\sigma'(g_i)g_i + \beta \sqrt{m} a_i^2 \sigma'(g_i)^2 + \gamma a_i u_i \sigma'(g_i)\right\} .
	\end{align*}
Then we proceed to show that $F(\bg^s)$ can be well approximated by the corresponding first order Taylor expansion at $\bg$. Namely, we will show that $|F(\bg^s) - F(\bg) - \langle \nabla F(\bg), \bg^s - \bg \rangle| = o_P(1)$. 

Let $b_i = \sqrt{m} a_i$, then $b_i \iidsim \normal(0,1)$ for $i \in [m]$. For $\btheta = (\theta_1, \theta_2, \theta_3) \in \RR^3$, we define
\begin{align*}
	h_{\btheta}(b,g,u) := b\sigma((1 - \theta_1)g - \theta_2 b\sigma'(g) - \theta_3 u) - b\sigma(g).
\end{align*}
{Notice that $F(\bg_s) = \sum_{i = 1}^mh_{(\mu, \beta, \gamma)}(b_i, g_i, u_i) / \sqrt{m}$ and $F(\bg) = \sum_{i = 1}^m h_{(0,0,0)}(b_i, g_i, u_i) / \sqrt{m}$. Given these expressions, it is a natural reflex to apply the central limit theorem to study $F(\bg_s) - F(\bg)$. However, the fact that $(\mu, \beta, \gamma)$ are random and depend on $(\ba, \bg, \bu)$ raises doubts about such application. To fix this issue, we resort to the uniform central limit theorem to present a valid result. }

For $\btheta \in \RR^3$, we define the empirical process $\GG_m$ evaluated at $\btheta$ as 
\begin{align}\label{eq:empirical-process-Gm}
	\GG_m(\btheta) := \frac{1}{\sqrt{m}}\sum_{i = 1}^m (h_{\btheta}(b_i, g_i, u_i) - \E[h_{\btheta}(b_i, g_i, u_i)]),
\end{align}
where the expectation is taken over $\{(b_i, g_i, u_i)\}_{i \leq m} \iidsim \normal(\textbf{0}, \id_3)$.  For $\btheta, \barbtheta \in \RR^3$, we define the covariance function $c: \RR^3 \times \RR^3 \to \RR$ as
\begin{align}\label{eq:cov}
	c(\btheta, \barbtheta) := \E[h_{\btheta}(b,g,u)h_{\barbtheta}(b,g,u)] - \E[h_{\btheta}(b,g,u)]\E[h_{\barbtheta}(b,g,u)]. 
\end{align}
Application of the regularity assumptions imposed on $\sigma$ and the dominated convergence theorem evidently reveals the continuity of $c(\cdot, \cdot)$. We denote by $\GG$ the Gaussian process indexed by $\btheta$ with mean zero and covariance function $c(\cdot, \cdot)$. The following lemma establishes that $\GG_m$ converges weakly to $\GG$.
\begin{lemma}\label{lemma:Donsker}
	Let $\Omega := \{\bx \in \RR^3: \|\bx\|_{\infty} \leq 1\}$, and $C(\Omega)$ be the space of continuous functions on $\Omega$ endowed with the supremum norm. Under the conditions of \cref{thm:2-layer}, if we further assume that there exists a constant $S_0 > 0$, such that $s_d \rightarrow S_0$ as $d \to \infty$, then $\{\GG_m\}_{m \geq 1}$ converges weakly in $C(\Omega)$ to $\GG$, which is a Gaussian process with mean zero and covariance defined in \cref{eq:cov}. 
\end{lemma}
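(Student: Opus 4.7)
My plan is to prove \cref{lemma:Donsker} by showing that $\calF := \{h_\btheta : \btheta \in \Omega\}$ is $P$-Donsker, where $P = \normal(\mathbf{0}, \id_3)$ is the joint law of $(b_i, g_i, u_i)$. This is exactly the statement that $\GG_m$ converges weakly in $\ell^\infty(\Omega)$ to a tight mean-zero Gaussian process with covariance $c$; because $c$ is continuous on $\Omega \times \Omega$ (as noted in the paper right before the lemma), the limit admits a version with sample paths in $C(\Omega)$, and the convergence therefore upgrades to weak convergence in $C(\Omega)$ as required. The criterion I will apply is the classical \emph{parametric Lipschitz} sufficient condition (see, e.g., Example 19.7 of van der Vaart, \emph{Asymptotic Statistics}): if $\Omega \subset \RR^3$ is totally bounded and there exists $M \in L^2(P)$ such that
\begin{equation*}
|h_{\btheta_1}(b,g,u) - h_{\btheta_2}(b,g,u)| \leq M(b,g,u)\, \|\btheta_1 - \btheta_2\|_2 \qquad \text{for all } \btheta_1, \btheta_2 \in \Omega,
\end{equation*}
then $\calF$ is $P$-Donsker. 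Since $\Omega$ is a compact cube, total boundedness is automatic, so the work reduces to exhibiting such an $M$.

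The first step will be to upgrade the pointwise bound $|\sigma'(x)| \leq C_\sigma(1 + |x|^{k-1})$ (valid a.e.) to a polynomial Lipschitz bound on $\sigma$ itself. Continuity of $\sigma$ together with a.e. differentiability and the polynomial bound on $\sigma'$ imply local absolute continuity, so integrating gives $|\sigma(y) - \sigma(x)| \leq C_\sigma(1 + |x|^{k-1} + |y|^{k-1})|y-x|$. Setting $A(\btheta) := (1-\theta_1)g - \theta_2 b\sigma'(g) - \theta_3 u$, one checks directly that, for $\btheta \in \Omega$, both $|A(\btheta)|$ and $|A(\btheta) - g|$ are bounded by polynomials in $(|b|, |g|, |u|)$. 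The Lipschitz hypothesis then follows from writing $h_{\btheta_1} - h_{\btheta_2} = b[\sigma(A(\btheta_1)) - \sigma(A(\btheta_2))]$, applying the polynomial Lipschitz bound on $\sigma$, and using $|A(\btheta_1) - A(\btheta_2)| \leq (|g| + C_\sigma|b|(1+|g|^{k-1}) + |u|)\,\|\btheta_1 - \btheta_2\|_2$. The resulting $M(b,g,u)$ is polynomial in $(|b|, |g|, |u|)$ and hence in $L^2(P)$, since standard Gaussians have finite moments of all orders.

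Invoking the parametric Donsker criterion then completes the argument as described in the first paragraph. The main technical obstacle I foresee is the degree accounting in the Lipschitz step: because $\sigma'(g)$ sits inside the argument of $\sigma(\cdot)$, the factor $|A(\btheta_j)|^{k-1}$ contributes a polynomial of degree roughly $(k-1)^2$ in $|g|$ to the envelope $M$, and one has to carefully combine several nested polynomial factors in $(|b|, |g|, |u|)$. This is ultimately routine because Gaussian tails absorb any finite polynomial growth, but it is the one place where some care is required. The continuity of $c$ used to promote $\ell^\infty$-convergence to $C(\Omega)$-convergence is itself a straightforward dominated-convergence argument powered by the same polynomial envelope together with the almost-everywhere continuity of $\sigma'$.
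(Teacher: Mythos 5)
Your proposal follows essentially the same route as the paper: a parameter-Lipschitz bound $|h_{\btheta}(b,g,u)-h_{\barbtheta}(b,g,u)|\le M(b,g,u)\,\|\btheta-\barbtheta\|_2$ with a polynomial (hence square-integrable under the Gaussian law) envelope $M$, followed by the parametric bracketing criterion of van der Vaart (Example 19.7 combined with the bracketing CLT, Theorem 19.5), with continuity of the covariance $c$ used to realize the limit in $C(\Omega)$; this is exactly the paper's argument. The one caveat concerns the step you add that the paper glosses over: continuity, a.e.\ differentiability, and the polynomial bound on $\sigma'$ do \emph{not} by themselves imply local absolute continuity (the Cantor function is a counterexample), so the integrated bound $|\sigma(y)-\sigma(x)|\le C_\sigma(1+|x|^{k-1}+|y|^{k-1})|y-x|$ requires $\sigma$ to be locally absolutely continuous, i.e.\ locally Lipschitz in the relevant sense; since the paper implicitly reads its hypotheses this way (cf.\ its ``locally Lipschitz'' framing) and asserts the envelope bound without further justification, this is a shared, minor gap rather than a defect specific to your argument.
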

The proof of \cref{lemma:Donsker} is deferred to Appendix \ref{sec:proof-of-lemma-Donsker}. 

\begin{remark}
Lemma \ref{lemma:Donsker} is the main step in which we loose quantitative control of
success probability for the FGSM attack. We prove this lemma by an application
of the uniform central limit theorem. A more explicit approach should be able to provide
concrete probability bounds.
\end{remark}

For $\btheta, \barbtheta \in \Omega$, we define $\rho(\btheta, \barbtheta) :=\E[(\GG(\btheta) - \GG(\barbtheta))^2]^{1/2}$. Then by \cite[Lemma 18.15]{van2000asymptotic}, without any loss, we can and will assume that $\GG$ almost surely has $\rho$-continuous sample path. 

In the following parts, we fix some positive $\epsilon$ a priori, and define $S_{\epsilon}(\GG) := \sup_{\|\btheta\|_{\infty} \leq \ep} |\GG(\btheta)|$. Note that $S_{\epsilon}$ is a continuous function with respect to the supremum norm on $\Omega$, thus $S_{\epsilon}(\GG_m)$ converges weakly to $S_{\epsilon}(\GG)$ due to the continuous mapping theorem. Recall that we showed in  \cref{lemma:coefficients-op1} that $\mu, \beta, \gamma \toP 0$ as $d \to \infty$, thus for any $\epsilon > 0$, 
\begin{align*}
	& \left|F(\bg^s) - F(\bg) + \sqrt{m}\beta \E[\sigma'(g)\sigma'((1 - \mu)g - \beta b \sigma'(g) - \gamma u)] \right|\\
	 \overset{(i)}{=} & \left| \frac{1}{\sqrt{m}}\sum_{i = 1}^m b_i\big( \sigma((1 - \mu)g_i - \beta b_i\sigma'(g_i) - \gamma u_i) - \sigma(g_i) \big) - \sqrt{m} \E\big[b\big( \sigma((1 - \mu)g - \beta b \sigma'(g) - \gamma u) - \sigma(g) \big)\big] \right| \\
	 {=} & \left|\frac{1}{\sqrt{m}}\sum_{i = 1}^m \big(h_{(\mu, \beta, \gamma)}(b_i, g_i, u_i) - \E[h_{(\mu, \beta, \gamma)}(b_i, g_i, u_i)] \big) \right| \\
	 \overset{(ii)}{\leq} & S_{\ep}(\GG_m) + \delta_{\ep}(m),
\end{align*}
where  $\delta_{\ep}(m) \toP 0$ as $d  \rightarrow \infty$. In the above display,  \emph{(i)} is by Stein's lemma (see \cref{lemma:Stein} below), \emph{(ii)} is by \cref{lemma:coefficients-op1}, and the expectations are taken over $\{b,b_i, g, g_i, u, u_i\}_{i \in [m]} \iidsim \normal(0,1)$. 
\begin{lemma}[Stein's lemma]\label{lemma:Stein}
	Suppose $Z$ is a normally distributed random variable with expectation $x_1$ and variance $x_2$. Further suppose $g$ is a function for which the two expectations $\E[g(Z)(Z - x_1)]$ and $\E[g'(Z)]$ both exist. Then 
	\begin{align*}
		\E[g(Z)(Z - x_1)] = x_2 \E[g'(Z)].
	\end{align*}
\end{lemma}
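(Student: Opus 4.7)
The plan is to first rescale to a standard normal and then establish the identity via a Fubini-type argument that sidesteps boundary issues. First I would set $W := (Z - x_1)/\sqrt{x_2}$, so that $W\sim\normal(0,1)$, and define $\tilde g(w) := g(x_1 + \sqrt{x_2}\, w)$, which gives $\tilde g'(w) = \sqrt{x_2}\, g'(x_1 + \sqrt{x_2}\, w)$. A direct substitution then yields
\begin{equation*}
\E[g(Z)(Z - x_1)] = \sqrt{x_2}\, \E[\tilde g(W)\, W], \qquad x_2\, \E[g'(Z)] = \sqrt{x_2}\, \E[\tilde g'(W)],
\end{equation*}
reducing the lemma to proving $\E[\tilde g(W)\, W] = \E[\tilde g'(W)]$ for a standard Gaussian $W$, under the hypothesis that both expectations exist.

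For the standard case, the natural move is integration by parts against the density $\phi(w) = (2\pi)^{-1/2} e^{-w^2/2}$, using the key identity $w\phi(w) = -\phi'(w)$. Carried out formally, this immediately yields $\E[\tilde g(W)\, W] = -\int \tilde g(w)\phi'(w)\, dw = \int \tilde g'(w)\phi(w)\, dw = \E[\tilde g'(W)]$. The main obstacle, and the step that requires care, is justifying the vanishing of the boundary terms at $\pm\infty$ without imposing any growth or regularity condition on $\tilde g$ beyond the assumed existence of the two expectations.

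To bypass this cleanly, I would instead use a Fubini argument. Writing $\tilde g(W) - \tilde g(0) = \int_0^W \tilde g'(t)\, dt$ (valid since the hypothesis that $\E[\tilde g'(W)]$ exists is naturally read as requiring $\tilde g$ to be absolutely continuous with a.e.\ derivative $\tilde g'$) and noting $\E[W] = 0$, I get $\E[\tilde g(W)\, W] = \E\bigl[W\int_0^W \tilde g'(t)\, dt\bigr]$. Splitting on the sign of $W$ and applying Fubini, the right-hand side becomes $\int_0^\infty \tilde g'(t)\, \E[W\, \I_{W > t}]\, dt + \int_{-\infty}^0 \tilde g'(t)\, \E[-W\, \I_{W < t}]\, dt$. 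The identity $w\phi(w) = -\phi'(w)$ gives $\E[W\, \I_{W > t}] = \phi(t)$ and $\E[-W\, \I_{W < t}] = \phi(t)$ for every $t \in \RR$, so both pieces combine into $\int\tilde g'(t)\phi(t)\, dt = \E[\tilde g'(W)]$, closing the argument. The only genuine technical point is the Fubini justification, which is routine once one bounds the absolute-value version of the double integral by quantities controlled by the two expectations assumed finite in the statement.
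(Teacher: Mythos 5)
Your proof is correct, and it is the canonical rigorous proof of Stein's classical lemma; note that the paper itself offers no proof of this statement (it is invoked by name as a known result), so there is nothing internal to compare against. The reduction to a standard Gaussian via $W=(Z-x_1)/\sqrt{x_2}$, $\tilde g(w)=g(x_1+\sqrt{x_2}\,w)$ is routine (assuming $x_2>0$; the degenerate case $x_2=0$ is trivial since both sides vanish), and the Fubini argument is exactly the right way to avoid the boundary-term issue in naive integration by parts: the identities $\E[W\,\I_{W>t}]=\phi(t)$ and $\E[-W\,\I_{W<t}]=\phi(t)$ are correct for all $t$, and the absolute-value version of the double integral evaluates to $\E[|\tilde g'(W)|]$, so Tonelli/Fubini is justified precisely under the hypothesis that $\E[g'(Z)]$ exists in the sense of absolute integrability. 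The only interpretive points, which you flag and which match the standard reading of the lemma, are that ``$\E[g'(Z)]$ exists'' is taken to mean $\E|g'(Z)|<\infty$ with $g$ absolutely continuous (so that $\tilde g(W)-\tilde g(0)=\int_0^W\tilde g'(t)\,\dd t$ holds), and that $\E[\tilde g(0)W]=0$ uses only $\E[W]=0$ by linearity. With those conventions spelled out, the argument is complete.
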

We next prove that $S_{\ep}(\GG_m)$ is small, which consists of two major steps. In the first step, we show that $S_{\ep}(\GG)$ is small, then we establish that $S_{\ep}(\GG)$ and $S_{\ep}(\GG_m)$ are close for large $d$.

Note that $c(\mathbf{0}, \mathbf{0}) = 0$. Since $\GG$ has $\rho$-continuous sample path and the covariance function $c$ is continuous, we then obtain that $S_{\ep}(\GG) \toP 0$ as $\ep \to 0^+$. 
For all $\ep' > 0$, we first choose $\ep > 0$ small enough, such that $\P(S_{\ep}(\GG) \geq \ep'/3) \leq \ep' / 3$. Since $S_{\ep}(\GG_m) \overset{d}{\to} S_{\ep}(\GG)$, $S_{\ep}(\GG)$ obviously has continuous cumulative distribution function, and $\delta_{\ep}(m) \toP 0$ as $m,d \to \infty$, putting these together we conclude that there exists $m_{\ep, \ep'} \in \NN_+$, such that for all $m \geq m_{\ep, \ep'}$, $\P(|\delta_{\ep}(m)| \geq \ep' / 3) \leq \ep' / 3$ and $\P(|S_{\ep}(\GG_m)| \geq \ep' / 3) \leq \ep'/ 3 + \P(|S_{\ep}(\GG)| \geq \ep' / 3)$. In summary, for all $m \geq m_{\ep, \ep'}$, 
\begin{align*}
	\P\left(\left|F(\bg^s) - F(\bg) + \sqrt{m}\beta \E[\sigma'(g)\sigma'((1 - \mu)g - \beta b \sigma'(g) - \gamma u)] \right| \geq \ep' \right) \leq \ep'.
\end{align*}
As the choice of $\ep'$, we then have
\begin{align}\label{eq:3}
	\left|F(\bg^s) - F(\bg) + \sqrt{m}\beta \E[\sigma'(g)\sigma'((1 - \mu)g - \beta b \sigma'(g) - \gamma u)] \right| = o_P(1).
\end{align}
Note that in the above equation the expectation is taken over $(b,g,u) \sim \normal(\mathbf{0}, \id_3)$, and $\E[\sigma'(g)\sigma'((1 - \mu)g - \beta b \sigma'(g) - \gamma u)]$ is a random variable which depends on the values of the random vector $(\mu, \beta, \gamma)$. 
 
Next, we consider $\langle \nabla F(\bg), \bg - \bg^s \rangle$. Notice that this formula admits the following decomposition:
\begin{align*}
	\langle \nabla F(\bg), \bg - \bg^s \rangle =& \mu {\sum_{i = 1}^m a_i \sigma'(g_i) g_i} + \beta \sqrt{m} { \sum_{i = 1}^m a_i^2 \sigma'(g_i)^2}+ \gamma {\sum_{i = 1}^m a_i u_i \sigma'(g_i)} \\
	 = & \mu T_1 + \beta\sqrt{m} T_2 + \gamma T_3.
\end{align*}
Since $\{\sqrt{m}a_i,u_i, g_i\}_{i \in [m]}$ are i.i.d. standard Gaussian random variables, as an immediate consequence of the law of large numbers and the central limit theorem, we can conclude that $T_1$ and $T_3$ are both $O_P(1)$, and $T_2 = \E[\sigma'(g)^2] + O_P(m^{-1/2})$. Using this result and \cref{lemma:coefficients-op1}, we further deduce that
\begin{align}\label{eq:4}
	\langle \nabla F(\bg), \bg - \bg^s \rangle = \beta \sqrt{m} \E[\sigma'(g)^2] + o_P(1). 
\end{align}
The law of large numbers implies  that as $d \to \infty$, we have $\beta \sqrt{m} = \tau S_0 + o_P(1)$. As a result, $\langle \nabla F(\bg), \bg - \bg^s \rangle = \tau S_0 \E[\sigma'(g)^2] + o_P(1)$. 

Recall that according to \cref{lemma:coefficients-op1},  $\mu, \beta, \gamma = o_P(1)$, thus intuitively we would expect that the expectations displayed in \cref{eq:3} and \cref{eq:4} are close to each other. 
To make such heuristic rigorous, we notice that by assumption,  $\sigma'$ is almost everywhere continuous and $|\sigma'(x)| \leq C_{\sigma}(1 + |x|^{k - 1})$. Then we apply the dominated convergence theorem, and obtain that as  $d \to \infty$,
\begin{align}\label{eq:5}
	\E[\sigma'(g)\sigma'((1 - \mu)g - \beta b \sigma'(g) - \gamma u)] = \E[\sigma'(g)^2] + o_P(1). 
\end{align}
Substituting \cref{eq:4,eq:5} into \cref{eq:3} gives $|F(\bg^s) - F(\bg) - \langle \nabla F(\bg), \bg^s - \bg \rangle| = o_P(1)$, which further leads to $F(\bg^s) = F(\bg) - \tau S_0 \E[\sigma'(g)^2] + o_P(1)$. Furthermore, the central limit theorem implies that $F(\bg) \overset{d}{\rightarrow} \normal(0, \E[\sigma(g)^2])$, thus as $d \to \infty$, 
\begin{align*}
	\P(\sign(F(\bg)) \neq \sign(F(\bg^s))) \to \P(\sign(z) \neq \sign(z - \sign(z)S_0\E[\sigma'(g)^2])), 
\end{align*}
where $z \sim \normal(0,\E[\sigma(g)^2])$, $g \sim \normal(0,1)$, and are independent of each other. Since $S_0$ is arbitrary, using a standard diagonal argument, we derive that there exists a sequence of step sizes $\{s_d\}_{d \in \NN_+}$, such that as $d \to \infty$, $\P(\sign(F(\bg)) \neq \sign(F(\bg^s))) \rightarrow 1$ and $\|\bx^s - \bx\|_2 / \|\bx\|_2 \toP 0$. 
More precisely, for all $n \in \NN_+$, there exists $S_0^n > 0$ and $d_n \in \NN_+$, such that if we set $s_d = S^n_0$ for all $d \in \NN_+$, then for all $d \geq d_n$, 
\begin{align*}
	\P\big(\sign(F(\bg)) \neq \sign(F(\bg^s))\big) \geq 1 - \frac{1}{n}.
\end{align*} 
Without loss of generality, we can assume $d_n < d_{n + 1}$, $S_0^n \leq \xi_{d_n}$, and $S_0^n / \sqrt d_n < n^{-1}$ by fixing $S_0^n$ and taking $d_n$ large enough. We set $s_d = S_0^n$ if and only if $d_n \leq d < d_{n + 1}$. Under such choice of $\{s_d\}_{d \geq 1}$, for all $d_{n + 1} > d \geq d_n$, we have
\begin{align*}
	\frac{s_d}{\sqrt{d}} = \frac{S_0^n}{\sqrt{d}} \leq \frac{S_0^n}{\sqrt{d_n}} \leq \frac{1}{n}, \qquad \P\big(\sign(F(\bg)) \neq \sign(F(\bg^s))\big)  \geq 1 -  \frac{1}{n}, \qquad s_d \leq \xi_{d_n} \leq \xi_d.
\end{align*} 
Since $n$ is arbitrary, then we combine the equations above with the first claim of the theorem and conclude the proof of the second claim.

\section{Non-asymptotic result for two-layer networks}
\label{sec:NonAsymp}

As emphasized above, Theorem \ref{thm:2-layer} does not provide
a quantitative convergence rate for the probability that the attack succeeds,
namely, $\P(\sign(f(\bx)) \neq \sign(f(\bx^s)))$.
We remedy to this by establishing a non-asymptotic bound below,
at the price of assuming that the activation function $\sigma$ is Lipschitz continuous. 
%
\begin{theorem}\label{thm:non-asymptotic-2-layer}
Consider the random two-layer neural network  of
	Eq.~\eqref{eq:TwoLayerFirst}. Let $\bx \in \RR^d$ be a deterministic vector 
	with $\|\bx\|_2 = \sqrt{d}$. Assume that $\sigma$ is $L$-Lipschitz over $\RR$ 
	for some $L \geq 1$, $\sigma$ is not a constant, and $\sigma'$ is almost everywhere continuous. Then there exist numerical constants $c, C_0 > 0$, such that for all $\xi > 0$, if the following conditions hold:
\begin{align}\label{eq:conditions}
\begin{split}
	& d \geq \max \left\{  \frac{C_{\xi}^2C_0(\sigma(0)^2 + L^2)}{\xi}, \frac{4L^4C_{\xi}^2}{c^2}\left( \log \frac{C_0}{\xi} \right)^2, 16C_{\xi}^4\left(1 + \E_{g \sim \normal(0,1)}[\sigma(g)^2] \right)^2 \right\}, \\
	& m \geq  C_{\xi}^4,  \qquad \tilde Q_{d, m} \geq \E_{g \sim \normal(0,1)}[\sigma'(g)^2] / 2, \qquad \xi \leq C_0 e^{-9c},  \qquad s_d = C_{\xi}, 
\end{split}
\end{align}
where 
\begin{align*}
	& C_{\xi} = \frac{4  \sqrt{\log \frac{C_0}{\xi} \cdot (\E_{g \sim \normal(0,1)}[\sigma(g)^2] + 1)}}{\sqrt{c}\,\E_{g \sim \normal(0,1)}[\sigma'(g)^2]}, \\
	& \tilde Q_{d,m} = \min_{{|\theta|_1 \leq d^{-1/2}, |\theta_2| \leq 2m^{-1/4}, |\theta_3| \leq d^{-1/4}}} \E_{g, b, u \sim_{i.i.d.} \normal(0,1)}[\sigma'(g) \sigma'((1 - \theta_1)g - \theta_2 b \sigma'(g) - \theta_3 u)].
\end{align*}
Then with probability at least 
\begin{align*}
	1 - 3\xi - C_0 \left( \exp(-cd) + m^{-1}(\sigma(0)^4 + L^4) + 2L(d^{-1/4} + m^{-1/4}) \right) ,
\end{align*}
it holds that $\sign(f(\bx)) \neq \sign(f(\bx^s))$. 
	
\end{theorem}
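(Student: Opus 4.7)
The plan is to convert the qualitative argument of the proof of \cref{thm:2-layer} into a quantitative one by replacing each $o_P(1)$ step with an explicit concentration inequality, exploiting the new Lipschitz hypothesis on $\sigma$. I first recycle the Gaussian conditioning decomposition \eqref{eq:decomp-gs} verbatim, writing $g_i^s = (1-\mu) g_i - \beta \sqrt{m}\, a_i \sigma'(g_i) - \gamma u_i$ with $(\mu,\beta,\gamma)$ defined in \eqref{eq:mu-beta-gamma} and $\bu$ a standard Gaussian independent of $(\bg, \ba, \bar\bW^\top \bD_\sigma \ba)$. Because $\sigma$ is $L$-Lipschitz, $|\sigma'|\leq L$ a.e., so $\|\bD_\sigma\ba\|_2^2$ and $\|\bar\bW^\top \bD_\sigma \ba\|_2^2/d$ both admit explicit sub-exponential tail bounds (Gaussian concentration plus Hanson--Wright and Bernstein). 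Together with the scaling $s_d = C_\xi \leq \min\{d^{1/4}, m^{1/4}\}$ enforced by \eqref{eq:conditions}, this gives $|\mu| \leq d^{-1/2}$, $|\beta| \leq 2 m^{-1/4}$, $|\gamma| \leq d^{-1/4}$, as well as the sharper statements $|\sqrt{m}\beta - \tau s_d| \leq \eta_1(d,m)$ and $|d\gamma^2/s_d^2 - \E_{g\sim\normal(0,1)}[\sigma'(g)^2]| \leq \eta_2(d,m)$ with $\eta_i(d,m) \to 0$, all on an event of probability at least $1 - C_0(\exp(-cd) + m^{-1}(\sigma(0)^4 + L^4))$. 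This locates $(\mu,\beta,\gamma)$ inside the box over which $\tilde Q_{d,m}$ is defined.

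The principal new work is a quantitative replacement for the uniform CLT of \cref{lemma:Donsker}. With $h_\theta(b,g,u)$ and $\GG_m(\theta)$ as in \eqref{eq:empirical-process-Gm}, the goal is to bound $|\GG_m(\theta)|$ uniformly over $B := [-d^{-1/2}, d^{-1/2}] \times [-2m^{-1/4}, 2m^{-1/4}] \times [-d^{-1/4}, d^{-1/4}]$. The Lipschitz hypothesis yields two complementary ingredients: first, $h_\theta$ is Lipschitz in $(b,g,u)$ with constant $\leq L(1 + |b|)$, so for each fixed $\theta$ the summand admits Bernstein-type tails and $\GG_m(\theta)$ concentrates sub-Gaussianly with variance proxy $O(L^2)$; second, $h_\theta$ is Lipschitz in $\theta$ with modulus bounded by $L|b|(|g| + L|b| + |u|)$, so upon truncating $(|b_i|,|g_i|,|u_i|)$ at a polylogarithmic radius the random field $\theta \mapsto \GG_m(\theta)$ becomes uniformly Lipschitz. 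Covering $B$ by a net of polynomial cardinality in $(d,m)$ and applying a union bound yields $\sup_{\theta\in B}|\GG_m(\theta)| \leq O(L(d^{-1/4} + m^{-1/4}))$ with probability $\geq 1 - \xi - C_0\cdot 2L(d^{-1/4} + m^{-1/4})$. This is the main obstacle: the mesh scale, the Lipschitz constants, and the truncation radii must all be balanced so that the resulting error budget matches the theorem's probability bound.

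To close the argument, Stein's lemma applied exactly as in the proof of \cref{thm:2-layer} gives
\begin{align*}
F(\bg^s) - F(\bg) = - \sqrt{m}\, \beta \cdot \E\big[\sigma'(g)\, \sigma'((1-\mu) g - \beta b\, \sigma'(g) - \gamma u)\big] + \varepsilon_1,
\end{align*}
where $|\varepsilon_1|$ is controlled by the uniform empirical process bound of the previous paragraph. On the event $(\mu,\beta,\gamma)\in B$, the conditional expectation is at least $\tilde Q_{d,m} \geq \E_{g\sim\normal(0,1)}[\sigma'(g)^2]/2$ by hypothesis, and $\sqrt{m}\beta \approx \tau s_d$ up to $\eta_1(d,m)$, so $\tau(F(\bg^s) - F(\bg)) \leq -s_d\, \E_{g\sim\normal(0,1)}[\sigma'(g)^2]/2 + \varepsilon_2$ for a controlled error $\varepsilon_2$. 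On the other hand $F(\bg)\mid\bg \sim \normal(0, \|\sigma(\bg)\|_2^2/m)$, and combining Lipschitzness of $\sigma$ with Gaussian concentration for $\|\sigma(\bg)\|_2^2/m$ yields $|F(\bg)| \leq C_0'\sqrt{\log(C_0/\xi)(\E_{g\sim\normal(0,1)}[\sigma(g)^2] + 1)}$ with probability $\geq 1 - \xi$. The explicit choice of $C_\xi$ in \eqref{eq:conditions} is engineered so that $s_d\, \E_{g\sim\normal(0,1)}[\sigma'(g)^2]/2$ strictly exceeds this tail, hence $\tau F(\bg^s) < 0$, i.e., $\sign(f(\bx^s))\neq \sign(f(\bx))$. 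Summing the failure probabilities of the three good events above produces the claimed $1 - 3\xi - C_0(\exp(-cd) + m^{-1}(\sigma(0)^4+L^4) + 2L(d^{-1/4}+m^{-1/4}))$ bound.
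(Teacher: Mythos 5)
Your overall architecture is the same as the paper's: recycle the conditioning decomposition \eqref{eq:decomp-gs}, obtain quantitative bounds on $(\mu,\beta,\gamma)$, control the empirical process $\GG_m$ uniformly over the small box containing $(\mu,\beta,\gamma)$, use Stein's lemma to isolate the drift term $\sqrt{m}\,\beta\,\E[\sigma'(g)\sigma'((1-\mu)g-\beta b\sigma'(g)-\gamma u)]$, lower bound it through $\tilde Q_{d,m}$, upper bound $|F(\bg)|$ by its conditional Gaussianity, and choose $C_\xi$ so the drift beats the tail. This is exactly how the paper proceeds (via \cref{lemma:non-asymptotic-2-layer}, \cref{lemma:control-mu-beta-gamma}, \cref{lemma:upper-bound-sub-gaussian-norm}, and \cref{lemma:upper-bound-Fg}).

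However, the step you yourself flag as ``the main obstacle'' --- the uniform bound on $\sup_{\btheta}|\GG_m(\btheta)|$ over the box --- is the heart of the non-asymptotic theorem, and your proposal does not actually resolve it: you sketch a truncation-plus-net-plus-union-bound scheme and then assert a conclusion with failure probability ``$\xi + C_0\cdot 2L(d^{-1/4}+m^{-1/4})$'' without deriving where a failure term of the form $L(d^{-1/4}+m^{-1/4})$ would come from in such an argument (a net/union bound produces exponential and truncation-tail terms, not a term proportional to the box radius). In the paper this term has a specific origin: after symmetrization, \cref{lemma:upper-bound-sub-gaussian-norm} shows the Rademacher process has sub-Gaussian increments with envelope $M(b,g,u)=L|b|\sqrt{g^2+L^2b^2+u^2}$, Dudley's entropy integral gives $\E[\sup_{\btheta\in\Theta_\delta}\GG_m(\btheta)]\lesssim L\delta_{d,m}$, and a single application of Markov's inequality with threshold $\eta=1$ yields failure probability $O(L\delta_{d,m})$, which under the conditions \eqref{eq:conditions} becomes the $2L(d^{-1/4}+m^{-1/4})$ term --- no truncation or mesh-balancing is needed, and the needed bound is only $\sup\GG_m\le 1$, not $O(L(d^{-1/4}+m^{-1/4}))$. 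Relatedly, your probability bookkeeping for $(\mu,\beta,\gamma)$ does not match the theorem: in the paper two of the three $\xi$'s come precisely from this step (Chebyshev for $\mu$ at level $\eta_1^{-2}(\sigma(0)^2+L^2)\asymp\xi$ and an exponential tail at level $\exp(-c\eta_2)\asymp\xi$ for $\beta$), with the third $\xi$ from the Gaussian tail of $F(\bg)$; your claim that $|\mu|\le d^{-1/2}$, $|\beta|\le 2m^{-1/4}$, $|\gamma|\le d^{-1/4}$ hold with only $\exp(-cd)+m^{-1}$-type failure probability is asserted rather than proved, and as stated it would require a different (conditional-Gaussian) argument and a different allocation of the conditions in \eqref{eq:conditions}. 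So the skeleton is right, but the quantitative core --- the concrete empirical-process lemma and the matching probability budget --- is missing.
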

\begin{remark}
In order to parse the last statement, think of $\xi$ as a small constant controlling the probability
that the adversarial attack fails. 
Then the conditions \eqref{eq:conditions} require $d$ and $m$ to be larger than some
constants $d_0(\xi)$ and $m_0(\xi)$, where $d_0(\xi)$ is of order $\log(1/\xi)/\xi$,
and $m_0(\xi)$ is only polylogarithmic in $1/\xi$. 
\end{remark}
In order to prove \cref{thm:non-asymptotic-2-layer}, it suffices to prove the following lemma, which can be regarded as a more general version of the statement. 
\begin{lemma}\label{lemma:non-asymptotic-2-layer}
	Under the conditions of \cref{thm:non-asymptotic-2-layer}, 
	for $\eta_1, \eta_2 > 0$, we define
	\begin{align}
	& \cS_{d, m} := \left\{ \btheta:  | \theta_1  | \leq \frac{s_d\eta_1}{d}, \,\, \left|\theta_2 - \frac{\tau s_d}{\sqrt{m}} \right| \leq \frac{2s_d \eta_2}{\sqrt{dm}}, \,\, |\theta_3 | \leq \frac{2s_d}{\sqrt d} \cdot \sqrt{ 1 + \E_{g \sim \normal(0,1)}[\sigma(g)^2] }\right\}, \label{eq:Sdm}\\
	& Q_{d, m} := \min_{\btheta \in \cS_{d, m}} \E_{g, b, u \sim_{i.i.d.} \normal(0,1)}[\sigma'(g) \sigma'((1 - \theta_1)g - \theta_2 b \sigma'(g) - \theta_3 u)],  \label{eq:Qdm}\\
	& \delta_{d, m} := \max \left\{\frac{s_d\eta_1}{d},\, \frac{2s_d \eta_2}{\sqrt{dm}} +  \frac{\tau s_d}{\sqrt{m}},\, \frac{2s_d}{\sqrt d} \cdot \sqrt{ 1 + \E_{g \sim \normal(0,1)}[\sigma(g)^2] }\right\}. \label{eq:deltadm}
\end{align}
In the above definitions, we ignore the dependence on $(\eta_1, \eta_2)$ for the sake of simplicity.  Then there exist numerical constants $c, C_0 > 0$, such that with probability at least 
$$1 - C_0\left\{\eta_1^{-2}(\sigma(0)^2 + L^2) + \exp(-c\eta_2) +\exp(-cd) +m^{-1}(\sigma(0)^4 + L^4) +\exp(-c\eta_3^2) + \eta^{-1}  L \delta_{d, m} \right\},$$
we have $\sign(f(\bx)) \neq \sign(f(\bx^s))$. In the above display, 
\begin{align*}
	\eta_3 = \frac{s_d Q_{d, m } - \eta - 2d^{-1/2}s_d  L^2 \eta_2 - 1}{\sqrt{\E_{g \sim \normal(0,1)}[\sigma(g)^2] + 1}}.
\end{align*}
\end{lemma}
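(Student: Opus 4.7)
The plan is to make quantitative each step of the proof of Theorem~\ref{thm:2-layer}. Starting from the representation
\eqref{eq:decomp-gs} with $(\mu,\beta,\gamma)$ defined as in \eqref{eq:mu-beta-gamma}, I would first quantify the ``good event'' that $(\mu,\beta,\gamma)\in\cS_{d,m}$. The bound $|\mu|\le s_d\eta_1/d$ follows from Chebyshev's inequality applied to the random variable $\bg^\top\bD_\sigma\ba/\sqrt d$, whose variance is controlled by $\E[g^2\sigma'(g)^2]\lesssim \sigma(0)^2+L^2$ using the $L$-Lipschitz assumption; this yields the $\eta_1^{-2}(\sigma(0)^2+L^2)$ contribution. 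The bound $|\beta-\tau s_d/\sqrt m|\le 2s_d\eta_2/\sqrt{dm}$ comes from conditioning on $\bD_\sigma\ba$ and applying Hanson--Wright / Gaussian chaos concentration for the quadratic forms $\|\bar\bW^\top\bD_\sigma\ba\|_2^2$ and $\langle\bar\bW_c^\top\bD_\sigma\ba,\bar\bW^\top\bD_\sigma\ba\rangle$, yielding the $\exp(-c\eta_2)$ term. Finally, $|\gamma|\le 2s_d\sqrt{1+\E[\sigma(g)^2]}/\sqrt d$ follows from the Gaussian concentration of $\|\bar\bW^\top\bD_\sigma\ba\|_2$ around $\|\bD_\sigma\ba\|_2$ together with a standard concentration of $\|\bD_\sigma\ba\|_2^2/m$ around $\E[\sigma'(g)^2]$, contributing the $\exp(-cd)$ and $m^{-1}(\sigma(0)^4+L^4)$ pieces.

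Next, conditional on $(\mu,\beta,\gamma)$, Stein's lemma (Lemma~\ref{lemma:Stein}) gives the deterministic identity
\begin{equation*}
\E\big[h_{(\mu,\beta,\gamma)}(b,g,u)\big]=-\beta\,\E\big[\sigma'(g)\,\sigma'((1-\mu)g-\beta b\sigma'(g)-\gamma u)\big],
\end{equation*}
so that
\begin{equation*}
F(\bg^s)-F(\bg)+\sqrt m\,\beta\,\E\big[\sigma'(g)\sigma'((1-\mu)g-\beta b\sigma'(g)-\gamma u)\big]=\GG_m(\mu,\beta,\gamma),
\end{equation*}
where $\GG_m$ is the empirical process of \eqref{eq:empirical-process-Gm}. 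I would then bound $\sup_{\btheta\in\cS_{d,m}}|\GG_m(\btheta)|$ by exploiting the $L$-Lipschitz continuity of $\sigma$: the map $\btheta\mapsto h_\btheta(b,g,u)$ is Lipschitz with a data-dependent constant whose second moment is controlled by $L$ and low moments of $(b,g,u)$, so a straightforward covering argument (or direct second-moment/Markov computation on the $\sqrt m$-normalized increment, since $h_{\mathbf 0}\equiv 0$) gives $\E[\sup_{\btheta\in\cS_{d,m}}|\GG_m(\btheta)|]\lesssim L\,\delta_{d,m}$, and Markov's inequality at level $\eta$ produces the $\eta^{-1}L\delta_{d,m}$ term.

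It remains to translate this into a sign-flip. On the good event, $\sqrt m\beta=\tau s_d+O(s_d\eta_2/\sqrt d)$, and the expectation term is at least $Q_{d,m}$ by the very definition of \eqref{eq:Qdm} and $\cS_{d,m}$. Using Lipschitzness of $\sigma'$ only implicitly through the monotonicity/minimum, one obtains
\begin{equation*}
F(\bg^s)\le F(\bg)-\tau s_d Q_{d,m}+\eta+2d^{-1/2}s_d L^2\eta_2,
\end{equation*}
and symmetrically. A sign flip therefore occurs whenever $|f(\bx)|=|F(\bg)|$ is smaller than $s_d Q_{d,m}-\eta-2d^{-1/2}s_d L^2\eta_2$. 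To control $|F(\bg)|$ I would use that, conditional on $\bg$, $F(\bg)=\sum_i a_i\sigma(g_i)$ is $\normal(0,\|\sigma(\bg)\|_2^2/m)$, so a Gaussian tail bound combined with a concentration of $\|\sigma(\bg)\|_2^2/m$ around $\E[\sigma(g)^2]$ (Bernstein for Lipschitz coordinates, giving the $m^{-1}(\sigma(0)^4+L^4)$ piece) yields a sub-Gaussian tail in $\eta_3$, i.e.\ the $\exp(-c\eta_3^2)$ term. A union bound over all the good events listed above produces the claimed probability bound, and Theorem~\ref{thm:non-asymptotic-2-layer} follows by plugging in $\eta_1=\eta_2=\sqrt d/C_\xi$-type choices that make the $\tilde Q_{d,m}\le Q_{d,m}$ and optimize each term to order $\xi$.

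The main obstacle I anticipate is the empirical-process step: getting a $L\delta_{d,m}$-type bound uniformly over $\cS_{d,m}$ without an additional $\sqrt{\log(1/\delta_{d,m})}$ factor requires handling the non-subgaussian tails of the Lipschitz constant carefully, likely via a chaining or a pointwise-variance-plus-oscillation split, and requires that $\delta_{d,m}$ be small enough (the range of $(\mu,\beta,\gamma)$ we condition on must shrink as $d,m\to\infty$, which is exactly why the bounds defining $\cS_{d,m}$ decay with $d$ and $m$).
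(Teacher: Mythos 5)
Your proposal is correct and follows essentially the same route as the paper's proof: Chebyshev/Bernstein-type control of $(\mu,\beta,\gamma)$ on the event $\cS_{d,m}$, the Stein-lemma recentering $F(\bg^s)-F(\bg)=\GG_m((\mu,\beta,\gamma))-\sqrt m\,\beta\,\E[\sigma'(g)\sigma'(\cdots)]$, symmetrization plus a Lipschitz-increment chaining bound and Markov to get the $\eta^{-1}L\delta_{d,m}$ term, the lower bound via $Q_{d,m}$, and the conditional-Gaussian tail bound on $|F(\bg)|$ giving $\exp(-c\eta_3^2)$, combined by a union bound. Your worry about an extra $\sqrt{\log(1/\delta_{d,m})}$ factor is unnecessary: Dudley's integral over a box of radius $\delta_{d,m}$ in $\RR^3$ already yields a bound of order $\delta_{d,m}$, exactly as in the paper.
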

\begin{remark}\label{rmk:Xi}
	In \cref{lemma:non-asymptotic-2-layer}, if we set 
	\begin{align*}
		\eta = 1, \,\,\,\,\,\, \eta_1 = \sqrt{\frac{C_0(\sigma(0)^2 + L^2)}{\xi}}, \,\,\,\,\,\, \eta_2 = c^{-1} \log \frac{C_0}{\xi}, \,\,\,\,\,\,   s_d = \frac{2\sqrt{\E_{g \sim \normal(0,1)}[\sigma(g)^2] + 1} \cdot \left( \sqrt{c^{-1} \log \frac{C_0}{\xi}} + 1 \right) + 2}{\E_{g \sim \normal(0,1)}[\sigma(g)^2] - 4 d^{-1/2}\eta_2}, 
	\end{align*}
	then \cref{thm:non-asymptotic-2-layer} reduces to a direct corollary of \cref{lemma:non-asymptotic-2-layer}.  
\end{remark}
We will prove \cref{lemma:non-asymptotic-2-layer} in the rest parts of this section. Recall that $\mu, \beta, \gamma$ were defined in \cref{eq:mu-beta-gamma}. We first give a non-asymptotic characterization of these random quantities.  
\begin{lemma}\label{lemma:control-mu-beta-gamma}
	There exist numerical constants $c, C > 0$, such that the following results hold: 
	\begin{enumerate}
		\item For any $\eta_1 > 0$, with probability at least $1 - C\eta_1^{-2}(\sigma(0)^2 + L^2)$,
		\begin{align*}
			| \mu  | \leq \frac{s_d\eta_1}{d}.
		\end{align*} 
		\item For any $\eta_2 \geq 1$, with probability at least $1 - C\exp(-c\eta_2)$,
		\begin{align*}
			\left|\beta - \frac{\tau s_d}{\sqrt{m}} \right| \leq \frac{2s_d \eta_2}{\sqrt{dm}}.
		\end{align*}
		\item With probability at least $1 - 2 \exp(-cd) - Cm^{-1}(\sigma(0)^4 + L^4)$,
		\begin{align*}
			|\gamma \,| \leq \frac{2s_d}{\sqrt d} \cdot \sqrt{ 1 + \E_{g \sim \normal(0,1)}[\sigma(g)^2] }.
		\end{align*}
	\end{enumerate}
\end{lemma}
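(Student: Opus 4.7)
The three claims are independent tail bounds; each follows from Gaussian conditioning combined with standard concentration, building on the independence statements produced in the decomposition \eqref{eq:decomp-gs}: namely, $\bar\bW$ is independent of $(\bg,\ba)$ and $\bar\bW_c$ is independent of $(\bg,\ba,\bar\bW^\top\bD_\sigma\ba)$.

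For claim 1, I would write $\mu=(\tau s_d/d)\sum_{i=1}^{m} a_i g_i\sigma'(g_i)$. Since $\ba$ is independent of $\bg$ and $a_i\sim\normal(0,1/m)$, conditionally on $\bg$ we have $\mu\mid\bg\sim\normal(0,\sigma_\mu^2)$ with $\sigma_\mu^2=(s_d^2/(d^2 m))\sum_i g_i^2\sigma'(g_i)^2$. The Lipschitz hypothesis $|\sigma'|\le L$ then yields $\E[\mu^2]\le s_d^2 L^2/d^2$, and Chebyshev's inequality closes the first bound.

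Claim 2 is the most delicate step. Set $\bv:=\bD_\sigma\ba$. Conditionally on $\bv$, the vector $\bar\bW^\top\bv$ is $\normal(\mathbf 0,(\|\bv\|_2^2/d)\,\id_{d-1})$, whence $Z:=d\|\bar\bW^\top\bv\|_2^2/\|\bv\|_2^2\sim\chi^2_{d-1}$. Conditioning further on $\bar\bW^\top\bv$, the independence of $\bar\bW_c$ shows that $\langle \bar\bW_c^\top\bv,\bar\bW^\top\bv\rangle/\|\bv\|_2^2$ is Gaussian with variance $Z/d^2$. These identifications rewrite the quantity of interest as
\begin{align*}
\beta-\frac{\tau s_d}{\sqrt m}=\frac{\tau s_d}{\sqrt m}\!\left(\frac{Z}{d}-1\right)-\frac{\tau s_d}{\sqrt m}\cdot\frac{\langle\bar\bW_c^\top\bv,\bar\bW^\top\bv\rangle}{\|\bv\|_2^2}.
\end{align*}
A Bernstein-type tail for $\chi^2_{d-1}$ gives $|Z/d-1|\le\eta_2/\sqrt d$ with probability $\ge 1-2\exp(-c\eta_2)$ whenever $\eta_2\ge 1$, while on the event $\{Z\le 2d\}$ the second term has conditional standard deviation $\le\sqrt{2/d}$ and is $\le\eta_2/\sqrt d$ with probability $\ge 1-2\exp(-c\eta_2^2)$. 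A union bound and the triangle inequality produce the stated $2s_d\eta_2/\sqrt{dm}$ bound.

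For claim 3, decompose $\gamma^2=(s_d^2/d^2)\cdot Z\cdot\|\bv\|_2^2$. The chi-squared tail again gives $Z\le 2d$ with probability $\ge 1-2\exp(-cd)$. The sum $\|\bv\|_2^2=\sum_i a_i^2\sigma'(g_i)^2$ has mean $\E[\sigma'(g)^2]$ and variance of order $(\sigma(0)^4+L^4)/m$; Chebyshev then bounds $\|\bv\|_2^2$ by $\E[\sigma'(g)^2]+1$ with probability $\ge 1-C(\sigma(0)^4+L^4)/m$. A short comparison using the Lipschitz assumption links $\E[\sigma'(g)^2]$ to $\E[\sigma(g)^2]$ and closes the third bound. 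The main obstacle is the two-level conditioning in claim 2, where one must carefully exploit that $\bar\bW_c$ is independent not only of $(\bg,\ba)$ but also of $\bar\bW^\top\bD_\sigma\ba$, a property supplied by \cref{lemma:gaussian-conditioning}; once this is in place, the remaining chi-squared and Chebyshev concentration steps are routine.
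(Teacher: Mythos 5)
Your treatment of claims 1 and 2 is correct and in substance the same as the paper's. For claim 1 the paper likewise bounds the second moment of $\bg^{\top}\bD_{\sigma}\ba$ and applies Chebyshev. For claim 2 the paper shortcuts your two-level conditioning by observing that, conditionally on $\bv=\bD_{\sigma}\ba$, the ratio $(\|\bar\bW^{\top}\bv\|_2^2-\langle\bar\bW_c^{\top}\bv,\bar\bW^{\top}\bv\rangle)/\|\bv\|_2^2$ is equal in distribution to $\frac{1}{d}\sum_i(z_i^2-z_iz_i')$ with $z_i,z_i'$ i.i.d.\ standard Gaussians, and then applies Bernstein's inequality for sub-exponential variables to both sums; your chi-square tail plus conditional-Gaussian argument is equivalent, with the small caveat that restricting to the event $\{Z\le 2d\}$ only yields the advertised $\exp(-c\eta_2)$ rate when $\eta_2\lesssim d$ (choosing a threshold for $Z$ adapted to $\eta_2$, or applying Bernstein directly to the sub-exponential products $z_iz_i'$ as the paper does, removes this edge case).

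The genuine gap is the last step of claim 3. Since $\bD_{\sigma}=\diag(\sigma'(g_i))$, your (correct) computation gives $\|\bD_{\sigma}\ba\|_2^2=\frac{1}{m}\sum_i b_i^2\sigma'(g_i)^2$, which concentrates around $\E[\sigma'(g)^2]$, and hence the bound $\gamma\le\frac{2s_d}{\sqrt d}\sqrt{1+\E[\sigma'(g)^2]}$. There is, however, no ``short comparison using the Lipschitz assumption'' that converts this into the stated $\sqrt{1+\E[\sigma(g)^2]}$: for $\sigma(x)=\sin(Lx)$, which is $L$-Lipschitz, one has $\E[\sigma(g)^2]\le 1$ while $\E[\sigma'(g)^2]$ is of order $L^2$, so $\E[\sigma'(g)^2]$ cannot in general be bounded by a constant multiple of $1+\E[\sigma(g)^2]$. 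For what it is worth, the paper's own proof of this claim writes $b_i^2\sigma(g_i)^2$ in place of $b_i^2\sigma'(g_i)^2$, i.e.\ it silently swaps $\sigma$ for $\sigma'$, which is exactly how the $\E[\sigma(g)^2]$ form arises (the same swap appears, harmlessly, in its proof of claim 1); your computation is the faithful one, and its honest conclusion is the $\E[\sigma'(g)^2]$ version of the bound (which, by $|\sigma'|\le L$, can also be stated as $\frac{2s_d}{\sqrt d}\sqrt{1+L^2}$). As written, your bridging sentence asserts an unproved, and in general false, inequality, so claim 3 in its stated form is not established by your argument.
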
 
We postpone the proof of \cref{lemma:control-mu-beta-gamma} to Appendix \ref{sec:proof-of-lemma:control-mu-beta-gamma}. 
In what follow, we will always assume that the events described in \cref{lemma:control-mu-beta-gamma} 
occur. Namely, we will be working on event $\cS$ defined as follows:
\begin{align*}
	\cS = \left\{| \mu  | \leq \frac{s_d\eta_1}{d}, \, \left|\beta - \frac{\tau s_d}{\sqrt{m}} \right| \leq \frac{2s_d \eta_2}{\sqrt{dm}}, \,  |\gamma \,| \leq \frac{2s_d}{\sqrt d} \cdot \sqrt{ 1 + \E_{g \sim \normal(0,1)}[\sigma(g)^2] }\right\}.
\end{align*}
By \cref{lemma:control-mu-beta-gamma}, $\P(\cS) \geq 1 - C\eta_1^{-2}(\sigma(0)^2 + L^2) - C\exp(-c\eta_2) - 2 \exp(-cd) - Cm^{-1}(\sigma(0)^4 + L^4)$.
Recall from \cref{eq:decomp-gs,eq:empirical-process-Gm} that 
\begin{align*}
	F(\bg^s) - F(\bg) = & \sum_{i = 1}^m a_i\big( \sigma((1 - \mu)g_i - \beta \sqrt{m}a_i\sigma'(g_i) - \gamma u_i) - \sigma(g_i) \big) \\
	= & \, \GG_m((\mu, \beta, \gamma)) \, - \, \sqrt{m}\beta\,\E_{g, b, u \sim_{i.i.d.} \normal(0,1)} \left[   \sigma'(g)\sigma'((1 - \mu) g - \beta b \sigma'(g) - \gamma u)  \right].
\end{align*}
We then show that $\GG_m((\mu, \beta, \gamma))$ is close to zero. More precisely, for $\delta > 0$, we define the set $\Theta_{\delta} := \{\btheta \in \RR^3: \|\btheta\|_{\infty} \leq \delta\}$. We immediately see that if $\max\{|\mu|, |\beta|, |\gamma| \} \leq \delta$, then 
\begin{align*}
	\GG_m((\mu, \beta, \gamma)) \leq \sup _{\btheta \in \Theta_{\delta}} \GG_m(\btheta).
\end{align*}
For $\btheta \in \RR^3$, we define 
\begin{align*}
	\LL_m(\btheta) = \frac{1}{\sqrt{m}} \sum_{i = 1}^m \eps_i b_i \left( \sigma((1 - \theta_1) g_i - \theta_2 b_i \sigma'(g_i) - \theta_3 u_i) - \sigma(g_i) \right),
\end{align*}
where $\eps_i \iidsim \Unif\{-1, +1\}$ and are independent of everything else.
By symmetrization, it holds that 
\begin{align}\label{eq:sym}
	\E\left[\sup_{\btheta \in \Theta_{\delta}} \GG_m(\btheta)\right] \leq 2\, \E \left[ \sup_{\btheta \in \Theta_{\delta}} \LL_m(\btheta)\right].
\end{align} 
Since $\LL_m(\vec{0}) = \GG_m(\vec{0}) = 0$, we see that both $\sup_{\btheta \in \Theta_{\delta}} \GG_m(\btheta)$ and $\sup_{\btheta \in \Theta_{\delta}} \LL_m(\btheta)$ are non-negative. Furthermore, conditioning on $\{(b_i, g_i, u_i): i \in [m]\}$, it is not hard to see that $\{\LL_m(\btheta): \btheta \in \Theta_{\delta}\}$ is a sub-Gaussian process indexed by parameters in $\Theta_{\delta}$. In addition, the sub-Gaussian norm of this process $\|\cdot\|_{\Psi_2}$ satisfies the following inequality: 
\begin{lemma}\label{lemma:upper-bound-sub-gaussian-norm}
	There exists a numerical constant $C > 0$, such that for $\btheta, \btheta' \in \RR^3$,
	\begin{align*}
	\left\|\LL_m(\btheta) - \LL_m(\btheta')\right\|_{\Psi_2} \leq C \cdot \sqrt{\frac{1}{m} \sum_{i = 1}^m M(b_i, g_i, u_i)^2} \cdot \|\btheta - \btheta'\|_2,
	\end{align*}
	where 
	\begin{align*}
		M(b, g, u) = L\cdot |b| \cdot \sqrt{g^2 + L^2 b^2 + u^2}.
	\end{align*}
\end{lemma}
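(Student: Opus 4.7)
The plan is to condition on the data $(b_i, g_i, u_i)_{i \in [m]}$ and observe that, conditionally, $\LL_m(\btheta) - \LL_m(\btheta')$ is a sum of independent, mean-zero, bounded random variables driven by the Rademacher signs $\eps_i$. Hoeffding's inequality then delivers a sub-Gaussian tail whose variance proxy we can track explicitly in terms of $\|\btheta - \btheta'\|_2$.

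Concretely, writing $A_i(\btheta) := (1-\theta_1)g_i - \theta_2 b_i \sigma'(g_i) - \theta_3 u_i$, I would expand
\[
\LL_m(\btheta) - \LL_m(\btheta') = \frac{1}{\sqrt{m}} \sum_{i=1}^m \eps_i b_i \bigl(\sigma(A_i(\btheta)) - \sigma(A_i(\btheta'))\bigr).
\]
Using the $L$-Lipschitz property of $\sigma$ and then Cauchy--Schwarz applied to the linear-in-$\btheta$ map $\btheta \mapsto A_i(\btheta)$,
\[
\bigl|\sigma(A_i(\btheta)) - \sigma(A_i(\btheta'))\bigr| \leq L |A_i(\btheta) - A_i(\btheta')| \leq L \sqrt{g_i^2 + b_i^2 \sigma'(g_i)^2 + u_i^2}\, \|\btheta - \btheta'\|_2.
\]
Because $\sigma$ is $L$-Lipschitz, $|\sigma'(g_i)| \leq L$ almost everywhere, so each summand, multiplied by $|b_i|$, is bounded in absolute value by $M(b_i, g_i, u_i)\, \|\btheta - \btheta'\|_2$ with $M$ as in the statement.

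Conditionally on $(b_i, g_i, u_i)_{i\le m}$, the sum is thus a linear combination of independent Rademacher variables with coefficients bounded by $|b_i| \cdot |\sigma(A_i(\btheta)) - \sigma(A_i(\btheta'))| \leq M(b_i,g_i,u_i) \|\btheta - \btheta'\|_2$. The standard Hoeffding bound for Rademacher sums therefore gives a conditional sub-Gaussian norm at most $C \sqrt{\sum_i M(b_i,g_i,u_i)^2}\, \|\btheta-\btheta'\|_2$; dividing by $\sqrt{m}$ yields the claim. Since this bound on the conditional $\Psi_2$-norm is itself a measurable function of $(b_i,g_i,u_i)_{i\le m}$, and the unconditional $\Psi_2$-norm of a random variable equals the essential supremum of its conditional $\Psi_2$-norm only in the almost-sure sense, one last routine step is to pass from the conditional tail bound to a valid sub-Gaussian bound on $\LL_m(\btheta)-\LL_m(\btheta')$ viewed as a process indexed by $\btheta$ after conditioning on the data; this is how the lemma is subsequently used (inside a Dudley/chaining integral with $M$ treated as a data-dependent envelope), so no additional work is needed. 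No step looks genuinely hard; the only mild care is in using the a.e.\ bound $|\sigma'| \leq L$ rather than an everywhere bound, which is harmless because $g_i$ has a density.
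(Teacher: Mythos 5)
Your proposal is correct and takes essentially the same route as the paper: conditionally on the data, the Rademacher sum $\frac{1}{\sqrt m}\sum_i \eps_i b_i(\sigma(A_i(\btheta))-\sigma(A_i(\btheta')))$ is handled by Hoeffding's lemma (the paper bounds the moment generating function directly), and the $L$-Lipschitz property of $\sigma$ together with the a.e.\ bound $|\sigma'|\leq L$ and Cauchy--Schwarz yields the coefficient bound $M(b_i,g_i,u_i)\,\|\btheta-\btheta'\|_2$, giving the stated data-dependent sub-Gaussian norm. The concluding remark about the bound being conditional on $\{(b_i,g_i,u_i)\}_{i\le m}$ matches how the lemma is used in the paper's Dudley-integral step, so no further work is needed.
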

We defer the proof of the lemma to Appendix \ref{sec:proof-of-lemma:upper-bound-sub-gaussian-norm}. Using the Dudley’s integral inequality, we conclude that there exists another numerical constant $C' > 0$, such that
\begin{align}\label{eq:Dudley}
	& \E_{\eps_i \sim_{i.i.d.} \Unif\{\pm 1\}}\left[ \sup_{\btheta \in \Theta_{\delta}} \frac{1}{\sqrt{m}} \sum_{i = 1}^m \eps_i b_i \left( \sigma((1 - \theta_1) g_i - \theta_2 b_i - \theta_3 u_i) - \sigma(g_i) \right) \right] \nonumber \\
	\leq & \, C' \sqrt{\frac{1}{m} \sum_{i = 1}^m M(b_i, g_i, u_i)^2} \cdot \int_0^{\infty} \sqrt{\log \mathcal{N} (\Theta_{\delta}, \|\cdot\|_2, x)} \dd x,
\end{align}
where $\mathcal{N}(\Theta, \|\cdot\|_2, x)$ is the smallest number of closed balls with centers in $\Theta$ and radius $x$ whose union covers $\Theta$. In our case, since $\Theta_{\delta}$ is contained in the ball centered at the origin with radius $\sqrt{3}\delta$, we have $\mathcal{N} (\Theta_{\delta}, \|\cdot\|_2, x) \leq (1 + 4\delta / x)^3$.  Plugging this upper bound into \cref{eq:Dudley} further leads to the following result:
\begin{align}\label{eq:D-integral}
	\E\left[\sup_{\btheta \in \Theta_{\delta}} \LL_m(\btheta) \right] \leq & C''L \int_0^{\sqrt{3}\delta} \sqrt{ \log (1 + 4\delta / x)} \dd x \leq  4C''L {\delta},
\end{align}
where $C'' > 0$ is a numerical constant. 
Combining \cref{eq:sym,eq:D-integral}, we can further upper bound the expectation of the  non-negative random variable $\sup_{\btheta \in \Theta_{\delta}} \GG_m(\btheta)$ with $8C'' L \delta$. By Markov's inequality, with probability at least $1 - 8 \eta^{-1} C'' L \delta$, $\sup_{\btheta \in \Theta_{\delta}} \GG_m(\btheta) \leq \eta$ for any $\eta > 0$. Recall that $\delta_{d,m}$ is defined in \cref{eq:deltadm}. By \cref{lemma:control-mu-beta-gamma}, we see that with probability at least $1 - C\eta_1^{-2}(\sigma(0)^2 + L^2) - C\exp(-c\eta_2) - 2 \exp(-cd) - Cm^{-1}(\sigma(0)^4 + L^4) - 8 \eta^{-1} C'' L \delta_{d, m}$,  
\begin{align*}
	F(\bg^s) - F(\bg) \leq &  \,\sup_{\btheta \in \Theta_{\delta_{d,m}}}\GG_m(\btheta) \, - \, \sqrt{m}\beta\,\E_{g, b, u \sim_{i.i.d.} \normal(0,1)} \left[   \sigma'(g)\sigma'((1 - \mu) g - \beta b \sigma'(g) - \gamma u)  \right]  \\
	 \leq & \, \eta - \sqrt{m}\beta\,\E_{g, b, u \sim_{i.i.d.} \normal(0,1)} \left[   \sigma'(g)\sigma'((1 - \mu) g - \beta b \sigma'(g) - \gamma u)  \right]. 
\end{align*}
Analogously, similar lower bound holds with at least the same amount of probability: 
\begin{align*}
	F(\bg^s) - F(\bg) \geq -\eta - \sqrt{m}\beta\,\E_{g, b, u \sim_{i.i.d.} \normal(0,1)} \left[   \sigma'(g)\sigma'((1 - \mu) g - \beta b \sigma'(g) - \gamma u)  \right]. 
\end{align*}
Furthermore, on the set $\cS$, it holds that
\begin{align*}
	\left|\sqrt{m} \beta - \tau s_d \right|\leq \frac{2s_d \eta_2}{\sqrt{d}}.
\end{align*}
As a result, with probability at least $1 - 2C\eta_1^{-2}(\sigma(0)^2 + L^2) - 2C\exp(-c\eta_2) - 4\exp(-cd) - 2Cm^{-1}(\sigma(0)^4 + L^4) - 16 \eta^{-1} C'' L \delta_{d, m}$ 
\begin{align}
	\left| F(\bg^s) - F(\bg) + \tau s_d \E_{g, b, u \sim_{i.i.d.} \normal(0,1)} \left[   \sigma'(g)\sigma'((1 - \mu) g - \beta b \sigma'(g) - \gamma u)  \right] \right| \leq \eta + \frac{2s_d L^2 \eta_2}{\sqrt{d}}.
\end{align}
Since $\mu, \beta, \gamma$ are all $o_P(1)$, we expect that $\E_{g, b, u \sim_{i.i.d.} \normal(0,1)} \left[   \sigma'(g)\sigma'((1 - \mu) g - \beta b \sigma'(g) - \gamma u)  \right]$ should be approximately equal to $\E_{g \sim \normal(0,1)}[\sigma'(g)^2]$, which is strictly positive and does not depend on $(m,d)$, provided that $\sigma$ is not a constant function. In this case, we only need to choose the step size $s_d$ large enough to flip the sign of $F(\bg)$. This argument can be made rigorous via the following lemma. In this lemma, we upper bound the magnitude of $F(\bg)$. 
\begin{lemma}\label{lemma:upper-bound-Fg}
	For any $\eta_3 \geq 0$, with probability at least $1 - C\exp (-\frac{cm}{\sigma(0)^4 + L^4}) - C\exp(-c\eta_3^2)$ for some numerical constants $c, C > 0$, we have
	\begin{align*}
		|F(\bg)| \leq \eta_3 \cdot \sqrt{\E_{g \sim \normal(0,1)}[\sigma(g)^2] + 1}. 
	\end{align*}
\end{lemma}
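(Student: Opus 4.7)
\smallskip
\noindent\textbf{Proof proposal.} The plan is to condition on $\bg$ and exploit the fact that, conditional on $\bg$, $F(\bg)$ is a centered Gaussian whose variance is an empirical average of independent random variables that concentrates around its expectation. This reduces the claim to two standard concentration inequalities: a Gaussian tail bound and a Bernstein-type inequality for the empirical second moment of $\sigma(g)$.

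More precisely, recall that $F(\bg) = \sum_{i=1}^m a_i \sigma(g_i) = m^{-1/2}\sum_{i=1}^m b_i \sigma(g_i)$ where $b_i = \sqrt{m}\, a_i \iidsim \normal(0,1)$, and $g_i \iidsim \normal(0,1)$ with $\bb \indep \bg$. Therefore, conditional on $\bg$,
\begin{equation*}
F(\bg)\,\big|\,\bg \,\sim\, \normal\!\left(0,\; V_m(\bg)\right), \qquad V_m(\bg) := \frac{1}{m}\sum_{i=1}^m \sigma(g_i)^2.
\end{equation*}
Standard Gaussian tails then give, for any $t > 0$,
\begin{equation*}
\P\!\left(|F(\bg)| \geq t \,\big|\, \bg\right) \leq 2\exp\!\left(-\frac{t^2}{2 V_m(\bg)}\right).
\end{equation*}

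The next step is to show that $V_m(\bg)$ is not much larger than $\E_{g\sim\normal(0,1)}[\sigma(g)^2]$ with high probability. Since $\sigma$ is $L$-Lipschitz, $\sigma(g_i)^2 \leq 2\sigma(0)^2 + 2L^2 g_i^2$, so $\sigma(g_i)^2$ is a sub-exponential random variable with sub-exponential norm bounded by a numerical constant multiple of $K := \sigma(0)^2 + L^2$. Bernstein's inequality for sub-exponential random variables then yields numerical constants $c_0, C_0 > 0$ such that
\begin{equation*}
\P\!\left(V_m(\bg) \geq \E_{g\sim\normal(0,1)}[\sigma(g)^2] + 1 \right) \leq C_0 \exp\!\left(-\frac{c_0\, m}{K^2}\right) = C_0 \exp\!\left(-\frac{c_0\, m}{(\sigma(0)^2+L^2)^2}\right),
\end{equation*}
and $(\sigma(0)^2+L^2)^2 \leq 2(\sigma(0)^4 + L^4)$, which matches the form in the lemma up to constants.

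Finally, I would combine the two bounds via a union bound. On the event $V_m(\bg) \leq \E_{g\sim\normal(0,1)}[\sigma(g)^2] + 1$, taking $t = \eta_3\sqrt{\E_{g\sim\normal(0,1)}[\sigma(g)^2]+1}$ in the conditional Gaussian tail gives $|F(\bg)|\leq t$ with conditional probability at least $1 - 2\exp(-\eta_3^2/2)$. A union bound yields the claimed $|F(\bg)| \leq \eta_3 \sqrt{\E_{g\sim\normal(0,1)}[\sigma(g)^2] + 1}$ with probability at least $1 - C\exp(-cm/(\sigma(0)^4+L^4)) - C\exp(-c\eta_3^2)$ after adjusting constants. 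The only mildly delicate point is tracking the dependence of the Bernstein exponent on $K^2$ (as opposed to $K$), which comes from the sub-exponential regime of Bernstein's inequality at deviation $t=1$ when $K$ can be large; this is purely bookkeeping and presents no real obstacle.
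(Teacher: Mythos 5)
Your proposal is correct and follows essentially the same route as the paper: the paper likewise writes $F(\bg)$ (conditionally on $\bg$) as a standard Gaussian times $\bigl(m^{-1}\sum_{i=1}^m \sigma(g_i)^2\bigr)^{1/2}$, controls the empirical second moment by Bernstein's inequality for sub-exponential variables (using $|\sigma(g)|\le |\sigma(0)|+L|g|$, which yields the $\sigma(0)^4+L^4$ exponent), and concludes with a Gaussian tail bound and a union bound. The only difference is presentational, and your bookkeeping of the $(\sigma(0)^2+L^2)^2$ versus $\sigma(0)^4+L^4$ constants is fine since $L\ge 1$ is assumed.
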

We prove the lemma in Appendix \ref{sec:proof-of-lemma:upper-bound-Fg}. 
Let $\cS' := \cS\, \cap \{|F(\bg)| \leq \eta_3 \cdot \sqrt{\E_{g \sim \normal(0,1)}[\sigma(g)^2] + 1}\}$. On $\cS'$, if in addition we have
\begin{align*}
	s_d \, Q_{d, m} - \eta - \frac{2s_d L^2 \eta_2}{\sqrt{d}} \geq \eta_3\cdot \sqrt{\E_{g \sim \normal(0,1)}[\sigma(g)^2] + 1},
\end{align*} 
then $\sign(F(\bg)) \neq \sign(F(\bg^s))$. In the rest parts of the proof, we will always take 
\begin{align*}
	\eta_3 = \frac{s_d Q_{d, m } - \eta - 2d^{-1/2}s_d  L^2 \eta_2 - 1}{\sqrt{\E_{g \sim \normal(0,1)}[\sigma(g)^2] + 1}}. 
\end{align*}
With such choice of $\eta_3$, we can finally put together all above analysis and conclude that the adversarial example succeeds with probability at least 
\begin{align*}
	1 - C_0\left\{\eta_1^{-2}(\sigma(0)^2 + L^2) + \exp(-c\eta_2) +\exp(-cd) +m^{-1}(\sigma(0)^4 + L^4) +\exp(-c\eta_3^2) + \eta^{-1} L \delta_{d, m} \right\}
\end{align*}
for some absolute positive constants $c, C_0$. 

%

%
%
%

\section{Random multi-layer networks}
\label{sec:multilayer}

We generalize the model considered in \cref{sec:2layer} in the current section. More precisely,
we consider a multi-layer neural network with $l + 1$ layers for $l \in \NN_+$:
\begin{align*}
	f(\bx) = \bW_{l + 1} \sigma(\bW_l \sigma(\cdots \sigma(\bW_2 \sigma(\bW_1 \bx)) \cdots)).
\end{align*}
In the above equation, the random weight matrix $\bW_i \in \RR^{d_i \times d_{i - 1}}$ has i.i.d. Gaussian entries: 
$(\bW_{i})_{jj'} \iidsim \normal(0, 1 / d_{i - 1})$ for all $j \in [d_i], j' \in [d_{i - 1}]$, 
and further $\{\bW_i\}_{i \in [l+ 1]}$ are independent of each other. We assume $d_0 = d$, $d_{l + 1} = 1$, 
and $d_i = d_i(d) \rightarrow \infty$ for all $0 \leq i \leq l$. The $d$-dimensional 
input vector $\bx$  is a deterministic vector with Euclidean norm $\sqrt{d}$. 
The activation function $\sigma: \RR \rightarrow \RR$ is understood to act on vectors entrywise.  

For the simplicity of notations, we define recursively the following vectors: $\bh_0 := \bx$, $\bg_1 := \bW_1 \bx$, $\bh_j := \sigma(\bg_j)$ and $\bg_{j + 1} := \bW_{j + 1} \bh_j$ for $j \in [l]$.
 The gradient of $f$ can be expressed as:
\begin{align*}
	\nabla f(\bx) = \bW_1^{\top} \bD_{\sigma}^1 \bW_2^{\top} \bD_{\sigma}^2 \cdots \bW_{l}^{\top} \bD_{\sigma}^l \bW_{l + 1}^{\top},
\end{align*} 
where $\bD_{\sigma}^j = \diag(\{\sigma'(\bg_j)\}) \in \RR^{d_j \times d_j}$. As before, we denote by 
$\tau \in \{\pm 1\}$ the sign of $f(\bx)$, and let $\{s_d\}_{d \in \NN_+} \subseteq \RR^+$ 
be a sequence of step sizes to be determined.

\begin{theorem}\label{thm:multi-layer}
	Assume that $\sigma$ satisfies the conditions in \cref{thm:2-layer}. 
	Then the following results hold:
	\begin{enumerate}
		\item There exists a constant $C > 0$ depending uniquely on $(\sigma, l)$, such that
		 for any $\delta \in (0,1)$, with probability at least $1 - \delta$, 
		\begin{align}\label{eq:BoundPertML}
			\frac{\|\bx - \bx^s\|_2}{\|\bx\|_2} \leq \frac{Cs_d}{\sqrt{d}} ( \sqrt{\log(1 / \delta)} + 1)^{l - 1}(1 + \log(1 / \delta)d^{-1/2}) \prod_{i = 1}^l  \prod_{j = 1}^i \big(1 + \delta^{-1/2} d_j^{-1/2} \big)^{k^{i - j}},
		\end{align}
		where we recall that $k$ is a fixed positive integer such that $|\sigma'(x)| \leq C_{\sigma}(1 + |x|^{k - 1})$.
		\item  Let $\{\xi_d\}_{d \in \NN_+} \subseteq \RR^+$ be an increasing sequence 
		such that $\xi_d \to \infty$ as $d \to \infty$.
		Then there exists $\{s_d\}_{d \in \NN_+} \subseteq \RR^+$, such that $s_d \leq \xi_d$
		and the following limits hold: 
		\begin{align}
	\plim_{d\to\infty}\frac{\|\bx - \bx^s\|_2}{\|\bx\|_2} =  0\, ,\;\;\;	
	\lim_{d\to\infty}\P(\sign(f(\bx)) \neq \sign(f(\bx^s))) = 1\, .
\end{align}
\end{enumerate}
\end{theorem}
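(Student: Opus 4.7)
The plan is to generalize the two-layer argument of \cref{thm:2-layer} by an induction over depth, applying \cref{lemma:gaussian-conditioning} once to each weight matrix in a carefully chosen order. As in the two-layer case, I would treat the two claims separately.

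For claim 1, I would prove a multi-layer analog of \cref{lemma:control-gd-norm}. Starting from $\bh_0 = \bx$, inductively control the Euclidean norms $\|\bg_j\|_2$ and $\|\bh_j\|_2$, as well as $\|\bD_\sigma^j\|_{\mathrm{op}}$. Because $|\sigma'(x)| \leq C_\sigma(1 + |x|^{k-1})$, the operator norm of $\bD_\sigma^j$ depends polynomially on $\|\bg_j\|_\infty$, and this polynomial growth compounds as $j$ increases, which is what produces the iterated exponent $k^{i-j}$ in \cref{eq:BoundPertML}. Concentration of Gaussian quadratic forms (Hanson--Wright) then gives control of each factor $\bW_{j+1} \bD_\sigma^j \bW_j^\top \cdots$ inside the product expression for $\nabla f(\bx)$; multiplying the layer-wise bounds yields the product structure of \cref{eq:BoundPertML}, and dividing by $\sqrt d$ completes claim 1.

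For claim 2, I would propagate the perturbation layer by layer. Let $\bh_j^s := \sigma(\bg_j^s)$ with $\bg_j^s := \bW_j \bh_{j-1}^s$ and $\bh_0^s := \bx^s$. Define the backward-propagated vector $\bv_j := \bD_\sigma^j \bW_{j+1}^\top \bD_\sigma^{j+1} \cdots \bW_{l+1}^\top \in \RR^{d_j}$, so that $\bW_j^\top \bv_j$ carries the contribution of layers $j, j+1, \dots, l+1$ to the gradient. At each layer I apply \cref{lemma:gaussian-conditioning} to $\bW_j$ relative to both the forward direction $\bh_{j-1}$ and the backward direction $\bv_j$. In analogy with the decomposition \cref{eq:decomp-gs}, this expresses $\bg_j^s - \bg_j$ as the sum of a small component parallel to $\bg_j$, a component parallel to $\bv_j$ with a scalar coefficient that is $O(s_d)$ times a concentrating quantity, and an isotropic Gaussian remainder whose norm vanishes as $d \to \infty$. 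A first-order Taylor expansion of $\sigma$ around $\bg_j$---with the remainder controlled by the polynomial bound on $\sigma'$ and dominated convergence---then shows that $\bh_j^s - \bh_j$ is approximately linear in $\bg_j^s - \bg_j$. Concatenating through the $l+1$ layers and invoking a uniform central limit theorem at the top layer (the analog of \cref{lemma:Donsker}) should yield
\begin{equation*}
f(\bx^s) - f(\bx) = -\tau s_d \, A(\sigma, l) + o_P(1),
\end{equation*}
for a deterministic constant $A(\sigma, l) > 0$ depending only on $\sigma$ and $l$. Since wide random multi-layer networks satisfy a Gaussian process limit, $f(\bx)$ converges in distribution to a centered, nondegenerate Gaussian; the diagonal argument at the end of the two-layer proof then selects a step-size sequence $s_d \leq \xi_d$ for which $\|\bx^s - \bx\|_2 / \|\bx\|_2 \toP 0$ and the sign flip occurs with probability tending to one.

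The main obstacle is the nested correlation structure of the gradient. At layer $j$, the backward vector $\bv_j$ depends on $\bW_{j+1}, \dots, \bW_{l+1}$ and on $\bD_\sigma^{j+1}, \dots, \bD_\sigma^l$, which are themselves functions of the forward pass and hence of $\bW_1, \dots, \bW_l$; none of these can be treated as independent of $\bW_j$ a priori. To apply \cref{lemma:gaussian-conditioning} cleanly, I plan to condition in a fixed order: first freeze the forward pass by conditioning each $\bW_j$ on $\bg_j = \bW_j \bh_{j-1}$, which fixes $\bD_\sigma^j$ and leaves a residual Gaussian matrix on the subspace orthogonal to $\bh_{j-1}$; then use this residual randomness, together with a second application of \cref{lemma:gaussian-conditioning} relative to $\bv_j$, to extract the scalar coefficient on $\bv_j$. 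Because $l$ is fixed, the index set of the uniform CLT at the top layer remains finite-dimensional, and the $o_P(1)$ error terms at each layer are handled by standard concentration arguments together with the polynomial growth bound on $\sigma'$.
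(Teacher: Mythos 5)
Your treatment of claim 1 matches the paper's route (layerwise control of $\|\bg_j\|_2$, $\|\bh_j\|_2$ and of the diagonal factors via the polynomial bound on $\sigma'$, assembled into the finite-sample gradient bound of \cref{lemma:finite-sample}), and your conditioning scheme for claim 2 is in the right spirit: the paper also runs an induction in which $\bW_{m+1}$ is conditioned jointly on the forward projection $\bg_{m+1}=\bW_{m+1}\bh_m$ and the backward projection $\by_{m+1}=\bW_{m+1}^{\top}\boeta_{m+1}$ via \cref{lemma:gaussian-conditioning}, yielding $\bg_{m+1}^s=(1-\mu_{m+1})\bg_{m+1}-\beta_{m+1}\boeta_{m+1}-\gamma_{m+1}\bu_{m+1}$ with a fresh Gaussian $\bu_{m+1}$.

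However, there is a genuine gap at the heart of your claim 2: the asserted expansion $f(\bx^s)-f(\bx)=-\tau s_d\,A(\sigma,l)+o_P(1)$ with a \emph{deterministic} constant $A(\sigma,l)$ is not correct for $l\ge 2$, and it is not what survives the layerwise propagation. When the perturbation passes through an intermediate layer, the coefficient $\beta_{m+1}$ along $\boeta_{m+1}$ (see \cref{eq:mubetagamma}) picks up order-one random contributions that do \emph{not} vanish: for instance the Gaussian fluctuation $\langle\bh_m,\bar\bW_{m+1}^{\top}\boeta_{m+1}\rangle$, which converges in distribution to $\normal(0,H_m^2E_{m+1}^2)$ (\cref{eq:7.5}), and the supremum of the limiting Gaussian process in the Donsker-type control of \cref{lemma:Donsker2,lemma:approx} (the empirical process there is indexed near a point with $\theta_1\approx E_{m+1}\neq 0$, so its covariance does not degenerate). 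Because these fluctuations are correlated with the construction, the paper only establishes a one-sided recursion $\beta_{m+1}\ge\alpha_{m+1}\beta_m+R_{m+1}+o_P(1)$ with random, $d$-independent $R_{m+1}$, starting from $\beta_1=\tau S_0+o_P(1)$, and concludes $f(\bx^s)-f(\bx)=-\beta_l\,\E[\sigma'(H_{l-1}z)^2]+o_P(1)$; the sign flip then has probability tending to one only in the iterated limit $\lim_{S_0\to\infty}\liminf_{d\to\infty}$, after which the diagonal argument selects $s_d\le\xi_d$. Your sketch has no mechanism to rule out the accumulated $O_P(1)$ noise cancelling the drift for a fixed step size, which is exactly what this recursion (claim \emph{(v)} of the paper's induction hypothesis) is for. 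Two further points you should repair: the uniform CLT is needed at \emph{every} intermediate layer (to control $\beta_{m+1}$), not only at the top; and an entrywise first-order Taylor expansion of $\sigma$ is not available under mere almost-everywhere differentiability --- the paper replaces it by Stein's lemma combined with Glivenko--Cantelli/Donsker arguments and dominated convergence (as in the proofs of \cref{lemma:hk-1_sigma,lemma:approx}), together with the geometric statements $\|\Pi_{\bh_m}^{\perp}\sigma(\bg_m^s)\|_2^2/d_m\toP 0$ and $\langle\bh_m,\sigma(\bg_m^s)\rangle/\|\bh_m\|_2^2\toP 1$ that propagate approximate collinearity forward.
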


\begin{remark}
The proof of \cref{thm:multi-layer} applies without changes to neural networks which 
have different activation functions at different layers, provided they satisfy the assumptions as stated.
We refrain from stating such a generalization to avoid cumbersome notations.
\end{remark}

\begin{remark}
The bound on the perturbation size in Eq.~\eqref{eq:BoundPertML} deteriorates when the 
depth $l$ becomes exponentially large in the input dimension $d$. 
A similar behavior is observed in \cite{bartlett2021adversarial} which also provides 
an example of a random network with exponential depth for which the output is 
nearly constant, and in particular is immune to FGSM attacks.

For general random networks, the example of \cite{bartlett2021adversarial}
implies that subexponential depth is a required assumption. 
On the other hand, the example of \cite{bartlett2021adversarial} is special in that the network is non-balanced:
it takes the same sign for any input. As we pointed out in the introduction, if the output 
takes each sign  on a fraction of the inputs with measure bounded away from zero 
(it is `balanced'), then it must have adversarial examples by isoperimetry.

On the other hand, it is unclear whether subexponential depth is necessary for FGSM attacks to be successful on
random networks, after balancing. As the depth increases, the random function $\bx\mapsto f(\bx)$ becomes
 `rougher,' as it can be seen by computing its covariance function. While such a function will contain 
adversarial examples, it is likely to be more difficult to find them by a single gradient step
as in FGSM. (Of course a special case is the one of linear activations: in that case depth is irrelevant.)
\end{remark}

\section{Proof of \cref{thm:multi-layer}}
\label{sec:multilayer-proof}

\subsubsection*{Proof of the first claim}  

For $m \in [l]$, we define $\boeta_m := \bD_{\sigma}^m \bW_{m + 1}^{\top} \bD_{\sigma}^{m +1} \cdots \bW_l^{\top} \bD_{\sigma}^l \bW_{l + 1}^{\top} \in \RR^{d_m}$ and $\by_m := \bW_m^{\top} \boeta_m \in \RR^{d_{m - 1}}$. The following lemma shows that the normalized Euclidean norms of $\boeta_m, \by_m, \bh_m, \bg_m$ converge in probability to some deterministic constants as $d \rightarrow \infty$. Furthermore, such constants are independent of the choice of $\{s_d\}_{d \in \NN_+}$. 
\begin{lemma}\label{lemma:Pconvergence}
	Under the conditions of \cref{thm:multi-layer}, the following sequences of random variables converge in probability to strictly positive constants as $d \to \infty$:
	\begin{enumerate}
		\item $\{\|\bh_m\|_2^2 / d_m\}_{d \geq 1}$, for all $1 \leq m \leq l$.
		\item $\{\|\bg_m\|_2^2 / d_m \}_{d \geq 1}$, for all $1 \leq m \leq l$. 
		\item $\{\|\boeta_m\|_2^2\}_{d \geq 1}$, for all $1 \leq m \leq l$.
		\item $\{\|\by_m\|_2^2\}_{d \geq 1}$, for all $1 \leq m \leq l$.
	\end{enumerate}   
	Furthermore, 
	\begin{enumerate}
		\item[5.] $\bh_{m - 1}^{\top} \bW_m^{\top} \boeta_m = O_P(1)$, for all  $1 \leq m \leq l$.  
	\end{enumerate} 
\end{lemma}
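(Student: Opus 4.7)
The plan is to prove all five claims by induction on the layer index $m$: Claims 1 and 2 by a forward induction from $m=1$ up to $m=l$, and Claims 3, 4, and 5 simultaneously by a downward induction from $m=l$ down to $m=1$. The central tool throughout is the Gaussian conditioning identity of Lemma 3.1, which lets us decompose each weight matrix into a rank-one piece that carries the dependence between the forward and backward passes and an orthogonal piece that behaves as a fresh Gaussian matrix independent of everything else constructed so far.

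For the forward pass, the base case $m=1$ is immediate: the coordinates of $\bg_1 = \bW_1\bx$ are i.i.d.\ $\normal(0,1)$ since $\|\bx\|_2 = \sqrt d$, so $\|\bg_1\|_2^2/d_1 \toP 1$ and $\|\bh_1\|_2^2/d_1 = d_1^{-1}\sum_i \sigma(g_{1,i})^2 \toP \E[\sigma(G)^2]$ for $G \sim \normal(0,1)$, where the polynomial growth bound on $\sigma'$ (hence on $\sigma$) guarantees the relevant second moments are finite. The inductive step is analogous: conditional on $\bh_{m-1}$, the entries of $\bg_m = \bW_m\bh_{m-1}$ are i.i.d.\ $\normal(0,\|\bh_{m-1}\|_2^2/d_{m-1})$, so a conditional law of large numbers yields $\|\bg_m\|_2^2/d_m \toP \kappa_m := \chi_{m-1}$ and $\|\bh_m\|_2^2/d_m \toP \chi_m := \E[\sigma(\sqrt{\kappa_m}\,G)^2]$, with $\chi_m > 0$ since $\sigma$ is not constant.

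For the backward pass I would carry out a simultaneous downward induction establishing $\|\boeta_m\|_2^2 \toP \alpha_m$, $\|\by_m\|_2^2 \toP \alpha_m$ and $\bg_m^\top\boeta_m = O_P(1)$ (which is Claim 5 via $\bh_{m-1}^\top\bW_m^\top\boeta_m = \bg_m^\top\boeta_m$). The base case $m=l$ is direct because $\bW_{l+1}$ is independent of $\bg_l$: writing the entries of $\bW_{l+1}^\top$ as $Z_i/\sqrt{d_l}$ with $Z_i \iidsim \normal(0,1)$, $\|\boeta_l\|_2^2 = d_l^{-1}\sum_i \sigma'(g_{l,i})^2 Z_i^2 \toP \alpha_l := \E[\sigma'(\sqrt{\kappa_l}\,G)^2]$, and $\bg_l^\top\boeta_l$ is a conditional Gaussian of variance $d_l^{-1}\sum_i g_{l,i}^2\sigma'(g_{l,i})^2 = O_P(1)$. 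For the inductive step I use the recursion $\boeta_m = \bD_\sigma^m\by_{m+1}$ together with two applications of Lemma 3.1. Applied to $\bW_{m+1}$ conditional on $\bh_m$, the lemma gives
\[
\bW_{m+1} = \frac{\bg_{m+1}\bh_m^\top}{\|\bh_m\|_2^2} + \tilde\bW_{m+1}\Pi_{\bh_m}^\perp, \qquad \by_{m+1} = \frac{\bh_m\,(\bg_{m+1}^\top\boeta_{m+1})}{\|\bh_m\|_2^2} + \frac{\|\boeta_{m+1}\|_2}{\sqrt{d_m}}\,\Pi_{\bh_m}^\perp\bz,
\]
with $\bz \sim \normal(\bzero,\id_{d_m})$ independent of $(\bg_m, \boeta_{m+1})$. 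Using the inductive hypothesis $\bg_{m+1}^\top\boeta_{m+1} = O_P(1)$ and the forward-pass estimate $\|\bh_m\|_2^2 = d_m\chi_m(1+o_P(1))$, the parallel piece contributes $o_P(1)$ to $\|\boeta_m\|_2^2 = \sum_j \sigma'(g_{m,j})^2 y_{m+1,j}^2$, while the orthogonal piece reduces the sum to a conditionally independent LLN giving $\|\boeta_m\|_2^2 \toP \alpha_m := \alpha_{m+1}\cdot \E[\sigma'(\sqrt{\kappa_m}\,G)^2]$. A second application, to $\bW_m$ conditional on $\bh_{m-1}$, produces $\by_m = \bh_{m-1}(\bg_m^\top\boeta_m)/\|\bh_{m-1}\|_2^2 + \Pi_{\bh_{m-1}}^\perp\tilde\bW_m^\top\boeta_m$, from which $\bh_{m-1}^\top\by_m = \bg_m^\top\boeta_m$ (since the orthogonal term is killed by $\bh_{m-1}$) and $\|\by_m\|_2^2 \toP \alpha_m$ by $\chi^2$-concentration of the orthogonal term. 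To close the induction I verify $\bg_m^\top\boeta_m = O_P(1)$ by substituting the first decomposition into $\bg_m^\top \bD_\sigma^m \by_{m+1}$: the parallel contribution is bounded by $\|\bh_m\|_2^{-2}\cdot O_P(1)\cdot\sum_j g_{m,j}\sigma'(g_{m,j})\sigma(g_{m,j}) = O_P(1)$, and the orthogonal contribution is a conditional Gaussian of variance $O_P(1)$.

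The main obstacle is the statistical dependence of the backward-pass objects $\boeta_{m+1}$ and $\by_{m+1}$ on the forward-pass vector $\bh_m$ through the shared matrix $\bW_{m+1}$: because $\boeta_{m+1}$ is a nonlinear function of $\bg_{m+1} = \bW_{m+1}\bh_m$, the vectors $\bg_m$ and $\boeta_{m+1}$ are correlated and a direct LLN for $\|\boeta_m\|_2^2$ fails. Gaussian conditioning is exactly the device that resolves this difficulty: it identifies the rank-one projection $\bg_{m+1}$ as the minimal sufficient statistic through which $\bh_m$ influences the downstream network, and replaces the orthogonal complement of $\bW_{m+1}$ by an independent Gaussian matrix. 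The remaining work is routine bookkeeping showing that the rank-one ``parallel'' contributions are always of lower order than the ``orthogonal'' contributions, which relies on the forward-pass estimates and the polynomial growth bound on $\sigma'$ to control empirical moments of the form $d_m^{-1}\sum_j g_{m,j}^p \sigma'(g_{m,j})^q \sigma(g_{m,j})^r$ against Gaussian tails.
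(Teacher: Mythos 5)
Your proposal is correct and follows essentially the same route as the paper: forward induction with conditional Gaussianity and the law of large numbers for claims 1--2, and a downward induction for claims 3--5 that uses the Gaussian conditioning lemma to split $\bW_{m+1}$ (and then $\bW_m$) into the rank-one component along $\bh_m$ (resp.\ $\bh_{m-1}$) plus an independent remainder, controlling the parallel part via claim 5 and the orthogonal part via conditional LLN/Gaussian computations, yielding the same limiting recursion for $\|\boeta_m\|_2^2$. The only presentational nit is that your claim 4 step at level $m$ already uses $\bg_m^{\top}\boeta_m = O_P(1)$, which you verify afterwards; since that verification relies only on level-$(m+1)$ quantities, simply reorder it before claim 4, which is exactly the order the paper adopts.
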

\begin{remark}
	The above sequences of random variables are independent of the choice of $\{s_d\}_{d \in \NN_+}$.
\end{remark}
The proof of \cref{lemma:Pconvergence} is deferred to Appendix \ref{sec:proof-of-lemma:Pconvergence}. As in the two-layers case, in the next lemma we provide a finite sample upper bound on the Eulidean norm of the gradient $\nabla f(\bx)$:

\begin{lemma}\label{lemma:finite-sample}
	Under the conditions of \cref{thm:multi-layer}, there exists a constant $Q > 0$, which is a function of  $(\sigma, l)$ only, such that for any $\delta > 0$, with probability at least $1 - \delta$, we have
	\begin{align*}
		\|\nabla f(\bx)\|_2 \leq Q( \sqrt{\log(1 / \delta)} + 1)^{l - 1}(1 + \log(1 / \delta)d^{-1/2}) \prod_{i = 1}^l  \prod_{j = 1}^i \big(1 + \delta^{-1/2} d_j^{-1/2} \big)^{k^{i - j}}.
	\end{align*}
\end{lemma}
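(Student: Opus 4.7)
The plan is to bound $\|\nabla f(\bx)\|_2 = \|\by_1\|_2$ by a recursive analysis descending from the output layer to the input layer, using the two recursions $\by_m = \bW_m^\top \boeta_m$ and $\boeta_m = \bD_\sigma^m \by_{m+1}$ (with $\by_{l+1} := \bW_{l+1}^\top \in \RR^{d_l}$ to initialize the chain). The base case is straightforward: $\by_{l+1}$ has $d_l$ i.i.d.\ $\N(0, 1/d_l)$ entries, so $\|\by_{l+1}\|_2 \leq 1 + C\sqrt{\log(1/\delta)/d_l}$ with probability at least $1 - \delta$ by standard chi-squared tail bounds. The goal at each subsequent layer is to establish a high-probability bound of the form $\|\by_m\|_2 \lesssim \|\boeta_m\|_2 \cdot (1 + o(1)) + \text{(parallel correction)}$, and then to bound $\|\boeta_m\|_2$ in terms of $\|\by_{m+1}\|_2$ together with polynomial moments of Gaussians.

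For the inductive step at layer $m$, I apply the Gaussian conditioning lemma (\cref{lemma:gaussian-conditioning}) to $\bW_m$, treating $\bg_m = \bW_m \bh_{m-1}$ and the subsequent weights $\bW_{m+1}, \ldots, \bW_{l+1}$ as the conditioning data. This yields the decomposition
$$\bW_m^\top \boeta_m \;=\; \frac{\bh_{m-1}\,(\bg_m^\top \boeta_m)}{\|\bh_{m-1}\|_2^2} \;+\; \Pi^\perp_{\bh_{m-1}} \tilde\bW_m^\top \boeta_m,$$
where $\tilde\bW_m$ has the same law as $\bW_m$ and is independent of $\boeta_m$. Conditional on the above $\sigma$-field, $\tilde\bW_m^\top \boeta_m$ is a centered Gaussian vector with covariance $(\|\boeta_m\|_2^2/d_{m-1})\, \id_{d_{m-1}}$, so its norm concentrates around $\|\boeta_m\|_2$ with fluctuations of order $\sqrt{\log(1/\delta)/d_{m-1}}$. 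The norm $\|\boeta_m\|_2$ itself is bounded by expanding $\|\boeta_m\|_2^2 = \sum_j \sigma'(g_{m,j})^2 y_{m+1,j}^2$, using the polynomial bound $|\sigma'(x)|^2 \leq 2 C_\sigma^2(1 + |x|^{2(k-1)})$, and then applying Cauchy--Schwarz with moment bounds on the Gaussian entries of $\bg_m$ followed by a Markov inequality. This is the source of the $k^{i-j}$ exponents in the final estimate: each descent through a nonlinear layer multiplies the effective polynomial degree in the underlying Gaussian variables by roughly $k$, and a Markov tail on a degree-$2p$ polynomial contributes a factor $(1 + \delta^{-1/2} d_j^{-1/2})^{p}$.

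Assembling the layerwise estimates and taking a union bound over $m = l, l-1, \ldots, 1$ produces the product form in the statement: the prefactor $(\sqrt{\log(1/\delta)} + 1)^{l-1}$ comes from the $l-1$ Gaussian concentration events for the orthogonal components, while $(1 + \log(1/\delta) d^{-1/2})$ isolates the contribution of the input layer where $\|\bx\|_2 = \sqrt{d}$. The main obstacle I expect will be the parallel term $|\bg_m^\top \boeta_m|/\|\bh_{m-1}\|_2$: because $\bD_\sigma^m$ is a function of $\bg_m$ and $\by_{m+1}$ depends on $\bh_m = \sigma(\bg_m)$ through the later layers, $\bg_m$ and $\boeta_m$ are not independent, so a naive Cauchy--Schwarz gives a bound that is too loose. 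I plan to resolve this by nested applications of \cref{lemma:gaussian-conditioning} to $\bW_{m+1}$ (and, if needed, further downstream layers), which express $\by_{m+1}$ in a form whose dependence on $\bh_m$ is confined to a single low-dimensional subspace; the parallel term is then controlled by the same Gaussian calculus at the price of tracking slightly higher polynomial moments, which is precisely what the product over $j \leq i$ in the stated bound records.
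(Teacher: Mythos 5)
Your overall strategy is essentially the paper's: a backward recursion through the layers that controls $\|\boeta_m\|_2$ and $\|\by_m\|_2$ via \cref{lemma:gaussian-conditioning}, combined with forward Chebyshev/Markov control of polynomial moments of $\bg_m$ and $\|\bh_{m-1}\|_2$ (this is where the exponents $k^{i-j}$ and the products over $j\le i$ arise), and a final conditioning on $\bW_1$ at the input layer; the paper organizes this as two auxiliary lemmas (a forward induction for $\|\bh_i\|_2$, $\|\bg_i\|_2$, $\|\bD_\sigma^i\bg_i\|_2/\|\bh_{i-1}\|_2$, and a backward induction that simultaneously tracks $\|\boeta_i\|_2$ and the parallel inner product $|\bh_{i-1}^\top\bW_i^\top\boeta_i|/\|\bh_{i-1}\|_2$).

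One step as you wrote it would not go through: bounding $\|\boeta_m\|_2^2=\sum_j\sigma'(g_{m,j})^2 y_{m+1,j}^2$ ``by Cauchy--Schwarz with moment bounds on the Gaussian entries of $\bg_m$ followed by a Markov inequality.'' The entries $y_{m+1,j}$ of $\by_{m+1}=\bW_{m+1}^\top\boeta_{m+1}$ are \emph{not} independent of $\bg_m$ (they depend on $\bh_m=\sigma(\bg_m)$ through $\bg_{m+1}$ and through the component of $\bW_{m+1}$ along $\bh_m$), and a direct Cauchy--Schwarz pairs $\sum_j\sigma'(g_{m,j})^4$ with $\|\by_{m+1}\|_4^2$; since your induction only carries $\|\by_{m+1}\|_2$, the crude bound $\|\by_{m+1}\|_4^2\le\|\by_{m+1}\|_2^2$ leaves an uncancelled factor of order $d_m^{1/2}$ in $\|\boeta_m\|_2^2$, which destroys the dimension-free bound in the statement (and a $\max_j|\sigma'(g_{m,j})|$ bound instead introduces dimension-dependent log factors). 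The remedy is the tool you already invoke for the parallel term, applied here as well: condition on $\bg_{m+1}=\bW_{m+1}\bh_m$ and resample the orthogonal part of $\bW_{m+1}$, writing $\boeta_m \overset{d}{=} \bD_\sigma^m\tilde\bW_{m+1}^\top\boeta_{m+1} + \|\bh_m\|_2^{-2}\big(\bh_m^\top\bW_{m+1}^\top\boeta_{m+1}-\bh_m^\top\tilde\bW_{m+1}^\top\boeta_{m+1}\big)\,\bD_\sigma^m\bh_m$ with $\tilde\bW_{m+1}$ independent of $(\bg_m,\boeta_{m+1})$; then the main term is, conditionally, a sum of independent products $\sigma'(g_{m,j})^2 z_j^2\|\boeta_{m+1}\|_2^2/d_m$ to which Chebyshev applies cleanly, and the correction term is exactly why the backward induction must carry the parallel quantity $|\bh_m^\top\bW_{m+1}^\top\boeta_{m+1}|/\|\bh_m\|_2$ at every layer, not only at the layer currently being peeled off. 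With that adjustment your argument coincides with the paper's proof.
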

The proof of \cref{lemma:finite-sample} is deferred to Appendix \ref{sec:proof-of-lemma:finite-sample}. Recall that $\|\bx\|_2 = \sqrt{d}$, thus the first claim of the theorem is just a straightforward consequence of  \cref{lemma:finite-sample}.

\subsubsection*{Proof of the second claim}  

Our proof of the second claim proceeds by induction. Before stating our induction hypothesis, we analyze the first layer to gain some intuition.  

 We define 
 \begin{align*}
 	& \bg_1^s := \bW_1 \bx^s = \bg_1 - \tau s_d \bW_1 \bW_1^{\top} \boeta_1, \\
 	& \cF_1 := \sigma\{\bg_1, \boeta_1, \{\bW_i\}_{2 \leq i \leq l + 1}, \bx\}.
 \end{align*}
Notice that $\bg_1$, $\bg_1^s$ can be regarded as the outputs of the first layer with the inputs being $\bx$ and $\bx^s$, respectively. 
 
Since $\bW_1$ has i.i.d. Gaussian entries, and $\cF_1$ depends on $\bW_1$ only through $\bg_1 = \bW_1 \bx$.  Invoking \cref{lemma:gaussian-conditioning}, we can write $\bW_1 = \bg_1 \bx^{\top} / d + \tilde{\bW}_1 \Pi_{\bx}^{\perp}$, where $\tilde{\bW}_1$ has the same marginal distribution as $\bW_1$ and is independent of $\cF_1$. Then we have
%
\begin{align*}
	\bg_1^s = \bg_1(1 - \tau s_d \bg_1^{\top} \boeta_1 / d) - \tau s_d \tilde{\bW}_1 \Pi_{\bx}^{\perp} \tilde{\bW}_1^{\top} \boeta_1.
\end{align*}
Furthermore, using the property of Gaussian distribution, we have $\tilde{\bW}_1 \Pi_{\bx}^{\perp} \tilde{\bW}_1^{\top} = \bar\bW_1 \bar\bW_1^{\top}$, where $\bar\bW_1 \in \RR^{d_1 \times (d - 1)}$ is a matrix that has i.i.d. Gaussian entries with mean zero and variance $1 / d$ that is further independent of $\cF_1$. By \cref{lemma:gaussian-conditioning}, $\bar\bW_1$ admits the decomposition $\bar\bW_1 {=} \Pi_{\boeta_1}^{\perp}\bW_1'  + \Pi_{\boeta_1}\bar\bW_1 $, where $\bW_1'$ has the same marginal distribution with $\bar\bW_1$ and is further independent of $\sigma\{\cF_1, \bar\bW_1^{\top} \boeta_1\}$. Therefore, 
%
\begin{align*}
	\bg_1^s{=} & \bg_1(1 -  \tau s_d \bg_1^{\top} \boeta_1 / d) - \tau s_d \frac{\|\bar\bW_1^{\top} \boeta_1\|_2^2 - \langle (\bW_1')^{\top} \boeta_1, \bar\bW_1^{\top} \boeta_1\rangle}{\|\boeta_1\|_2^2} \boeta_1 - \tau s_d \bW_1' \bar\bW_1^{\top} \boeta_1 \\
	= & (1 - \mu_1)\bg_1 - \beta_1 \boeta_1 - \gamma_1 \bu_1,
\end{align*}
where $\bu_1 := \sqrt{d} \tau \bW_1'\bar{\bW}_1^{\top} \boeta_1 / \|\bar{\bW}_1^{\top} \boeta_1\|_2$. Note that $\tau \in \cF_1$, and $\bW_1'$ is independent of $\sigma\{\cF_1, \bar\bW_1^{\top} \boeta_1\}$. We hence obtain that $\bu_1 \sim \normal(\mathbf{0}, \id_{d_1})$,                                                                                                                                                                                                                                                                                                                                                                                                                                                                                                                                                                                                                                                                                                                                                                                                                                                                                                                                                                                                                                                                                                                                                                                                                                                                                                                                                                                                                                                                                                                                                                                                                                                                                                                                                                                                                                                                                                                                                                                                                                                                                                                                                                                                                                                                                                                                                                                                                                                                                                                                                                                                                                                                                                                                                                                                                                                                                                                                                                                                                                                                                                                                                                                                                                                                                                                                                                                                                                                                                                                                                                                                                                                                                                                                                                                                                                                                                                                                                                                                                                                                                                                                                                                                                                                                                                                                                                                                                                                                                                                                                                                                                                                                                                                                                                                                                                                                                                                                                                                                                                                                                                                                                                                                                                                                                                                                                                                                                                                                                                                                                                                                                                                                                                                                                                                                                                                                                                                                                                                                                                                                                                                                                                                                                                                                                                                                                                                                                                                                                                                                                                                                                                                                                                                                                                                                                                                                                                                                                                                                                                                                                                                                                                                                                                                                                                                                                                                                                                                                                                                                                                                                                                                                                                                                                                                                                                                                                                                                                                                                                                                                                                                                                                                                                                                                                                                                                                                                                                                                                                                                                                                                                                                                                                                                                                                                                                                                                                                                                                                                                                                                                                                                                                                            and is independent of $\sigma\{\bar\bW_1^{\top}\boeta_1, \cF_1\}$. In the equation above, we have 
\begin{align*}
	\mu_1 := \frac{1}{d} \tau s_d \bg_1^{\top} \boeta_1, \qquad \beta_1 :=  \tau s_d \frac{\|\bar\bW_1^{\top} \boeta_1\|_2^2 - \langle (\bW_1')^{\top} \boeta_1, \bar\bW_1^{\top} \boeta_1\rangle}{\|\boeta_1\|_2^2}, \qquad \gamma_1 := \frac{s_d}{\sqrt{d}} \|\bar\bW_1^{\top} \boeta_1\|_2.
\end{align*}
We define the sigma algebra $\cG_1 := \sigma\{\mu_1, \bg_1, \beta_1, \boeta_1, \gamma_1, \bu_1, \bx, \bg_2, \{\bW_i\}_{3 \leq i \leq l + 1}\}$. Note that $\cG_1$ depends on $\bW_2$ only through $\bg_2 = \bW_2 \bh_1$ and $ \by_2 = \bW_2^{\top} \boeta_2$. 
In the following parts of the proof, we will assume $s_d \rightarrow S_0$ for some positive constant $S_0$. Under such choice of $s_d$, we can prove the following two lemmas:
\begin{lemma}\label{lemma:op1-base}
	Under the assumptions of \cref{thm:multi-layer}, if we further assume that $s_d \to S_0$ for some positive constant $S_0$, then as $d \to \infty$, we have $\mu_1 = o_P(1)$, $\beta_1 = O_P(1)$, $\gamma_1 = o_P(1)$. 
\end{lemma}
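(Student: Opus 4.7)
The plan is to treat the three quantities separately, using in each case the Gaussian
conditioning structure established just before the lemma together with the norm
estimates from \cref{lemma:Pconvergence}. Throughout, recall that $\bar{\bW}_1$ has
i.i.d.\ $\normal(0,1/d)$ entries and is independent of $\cF_1$ (so in particular
independent of $\boeta_1$), and that $\bW_1'$ has the same law as $\bar{\bW}_1$ and
is independent of $\sigma\{\cF_1,\bar{\bW}_1^{\top}\boeta_1\}$.

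First I would handle $\mu_1$. By construction $\mu_1 = \tau s_d \cdot (\bg_1^{\top}\boeta_1)/d$.
Since $\bh_0 = \bx$ and $\bW_1 \bx = \bg_1$, we have $\bg_1^{\top}\boeta_1 = \bh_0^{\top}\bW_1^{\top}\boeta_1$,
which is exactly the quantity shown to be $O_P(1)$ in item~5 of \cref{lemma:Pconvergence}.
Combined with $s_d \to S_0$, this immediately gives $\mu_1 = O_P(1/d) = o_P(1)$.

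Next I would deal with $\gamma_1$. Conditionally on $\boeta_1$, the vector
$\bar{\bW}_1^{\top}\boeta_1 \in \RR^{d-1}$ is Gaussian with covariance $(\|\boeta_1\|_2^2/d)\id_{d-1}$,
so $\|\bar{\bW}_1^{\top}\boeta_1\|_2^2$ has the same conditional law as
$(\|\boeta_1\|_2^2/d)\cdot\chi^2_{d-1}$. Standard $\chi^2$ concentration, together
with $\|\boeta_1\|_2 = O_P(1)$ from \cref{lemma:Pconvergence}, yields
$\|\bar{\bW}_1^{\top}\boeta_1\|_2 = \|\boeta_1\|_2(1+o_P(1)) = O_P(1)$.
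Hence $\gamma_1 = (s_d/\sqrt d)\,\|\bar{\bW}_1^{\top}\boeta_1\|_2 = O_P(1/\sqrt d) = o_P(1)$.

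Finally, for $\beta_1$, split the numerator. The first piece divided by $\|\boeta_1\|_2^2$
is $\|\bar{\bW}_1^{\top}\boeta_1\|_2^2/\|\boeta_1\|_2^2 = 1 + o_P(1)$ by the same $\chi^2$
argument as above. For the cross term, use that $\bW_1'$ is independent of
$(\boeta_1,\bar{\bW}_1^{\top}\boeta_1)$: conditioning on
$(\boeta_1,\bar{\bW}_1^{\top}\boeta_1)$, the inner product
$\langle(\bW_1')^{\top}\boeta_1,\bar{\bW}_1^{\top}\boeta_1\rangle$ is a centered Gaussian
with variance $(\|\boeta_1\|_2^2/d)\|\bar{\bW}_1^{\top}\boeta_1\|_2^2$, so it is
$O_P(\|\boeta_1\|_2^2/\sqrt d)$. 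Dividing by $\|\boeta_1\|_2^2$ and using
$\|\boeta_1\|_2^2 \toP c_1 > 0$ to rule out degeneracy of the denominator, the cross term
contributes $o_P(1)$. Putting the pieces together,
$\beta_1 = \tau s_d(1 + o_P(1)) = O_P(1)$ (in fact converges to $\tau S_0$ in probability).

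No step is really difficult; the only point requiring some care is making sure the
denominator $\|\boeta_1\|_2^2$ does not degenerate and that the independence
$\bW_1' \indep (\bar{\bW}_1^{\top}\boeta_1,\boeta_1)$ is applied before conditioning
on $\boeta_1$, both of which are supplied directly by \cref{lemma:Pconvergence} and
the Gaussian conditioning decomposition recorded just above the lemma.
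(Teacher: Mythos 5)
Your proposal is correct and follows essentially the same route as the paper: each of $\mu_1,\beta_1,\gamma_1$ is controlled via the Gaussian conditioning decomposition together with the norm estimates of \cref{lemma:Pconvergence} (your use of claim~5 for $\mu_1$ in place of Cauchy--Schwarz plus claim~4, and explicit conditional $\chi^2$/Gaussian computations in place of the paper's law-of-large-numbers statements, are only cosmetic differences). As a minor bonus, your treatment of the cross term yields the sharper conclusion $\beta_1=\tau s_d(1+o_P(1))$, which the paper only records later when it invokes $\beta_1=\tau S_0+o_P(1)$ in the main argument.
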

\begin{lemma}\label{lemma:hk-1_sigma}
	Under the assumptions of \cref{thm:multi-layer}, if we further assume that $s_d \to S_0$ for some positive constant $S_0$, then the following limits hold as $d \to \infty$:
	\begin{align*}
		 \frac{1}{d_{1}}\|\Pi_{\bh_{1}}^{\perp} \sigma(\bg_{1}^s)\|_2^2 \toP 0, \qquad  \frac{\langle \bh_{1}, \sigma(\bg_{1}^s) \rangle}{\|\bh_{1}  \|_2^2} \toP 1.
	\end{align*}
\end{lemma}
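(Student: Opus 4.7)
Both conclusions will follow from the stronger statement that $\sigma(\bg_1^s)$ is close to $\bh_1 = \sigma(\bg_1)$ in normalized $\ell^2$, i.e.\ $\|\sigma(\bg_1^s) - \sigma(\bg_1)\|_2^2 / d_1 \toP 0$. Indeed, once this is established, the first claim follows from $\|\Pi_{\bh_1}^\perp \sigma(\bg_1^s)\|_2 \le \|\sigma(\bg_1^s) - \bh_1\|_2$ (since $\Pi_{\bh_1}^\perp$ is the orthogonal projector that minimizes distance to $\mathrm{span}(\bh_1)$, and $\bh_1 \in \mathrm{span}(\bh_1)$), and the second claim follows by writing
\[
\frac{\langle \bh_1, \sigma(\bg_1^s)\rangle}{\|\bh_1\|_2^2} = 1 + \frac{\langle \bh_1, \sigma(\bg_1^s) - \bh_1\rangle}{\|\bh_1\|_2^2}
\]
and applying Cauchy--Schwarz: the correction is bounded by $\|\sigma(\bg_1^s)-\sigma(\bg_1)\|_2/\|\bh_1\|_2$, where the denominator scales like $\sqrt{d_1}$ times a positive constant by Lemma~\ref{lemma:Pconvergence}, while the numerator is $o_P(\sqrt{d_1})$.

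\textbf{Step 1: Control $\|\bg_1^s - \bg_1\|_2$.} Using the decomposition $\bg_1^s = (1 - \mu_1)\bg_1 - \beta_1 \boeta_1 - \gamma_1 \bu_1$ together with the triangle inequality, we have $\|\bg_1^s - \bg_1\|_2 \le |\mu_1|\,\|\bg_1\|_2 + |\beta_1|\,\|\boeta_1\|_2 + |\gamma_1|\,\|\bu_1\|_2$. By Lemma~\ref{lemma:op1-base}, $\mu_1 = o_P(1)$, $\beta_1 = O_P(1)$, $\gamma_1 = o_P(1)$, while $\|\bg_1\|_2 = O_P(\sqrt{d_1})$ and $\|\bu_1\|_2 = O_P(\sqrt{d_1})$ (since $\bu_1 \sim \normal(\mathbf{0}, \id_{d_1})$), and $\|\boeta_1\|_2 = O_P(1)$ by Lemma~\ref{lemma:Pconvergence}. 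The first and third terms are $o_P(\sqrt{d_1})$, while the second is $O_P(1) = o_P(\sqrt{d_1})$. Hence $\|\bg_1^s - \bg_1\|_2^2 / d_1 \toP 0$.

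\textbf{Step 2: Transfer this bound through $\sigma$.} This is the main technical obstacle because $\sigma$ is not assumed Lipschitz but only satisfies $|\sigma'(x)| \le C_\sigma(1 + |x|^{k-1})$. By the mean value theorem applied coordinatewise,
\[
(\sigma(g_{1,i}^s) - \sigma(g_{1,i}))^2 \le 2 C_\sigma^2 (1 + |g_{1,i}|^{2k-2} + |g_{1,i}^s|^{2k-2}) (g_{1,i}^s - g_{1,i})^2.
\]
Summing and applying Cauchy--Schwarz yields
\[
\|\sigma(\bg_1^s) - \sigma(\bg_1)\|_2^2 \le 2 C_\sigma^2 \left(\sum_i (1 + |g_{1,i}|^{2k-2} + |g_{1,i}^s|^{2k-2})^2\right)^{1/2} \left(\sum_i (g_{1,i}^s - g_{1,i})^4\right)^{1/2}.
\]
For the first factor, the law of large numbers gives $\sum_i |g_{1,i}|^{4k-4} = O_P(d_1)$, and expanding $g_{1,i}^s$ as $(1-\mu_1)g_{1,i} - \beta_1\eta_{1,i} - \gamma_1 u_{1,i}$ and using $\|\boeta_1\|_{4k-4}^{4k-4} \le \|\boeta_1\|_2^{4k-4} = O_P(1)$ shows similarly that $\sum_i |g_{1,i}^s|^{4k-4} = O_P(d_1)$; so this factor is $O_P(d_1^{1/2})$. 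For the second factor, the bound $(a+b+c)^4 \le 27(a^4+b^4+c^4)$ gives
\[
\sum_i (g_{1,i}^s - g_{1,i})^4 \le 27\left[\mu_1^4 \sum_i g_{1,i}^4 + \beta_1^4 \|\boeta_1\|_4^4 + \gamma_1^4 \sum_i u_{1,i}^4\right] = o_P(d_1) + O_P(1) + o_P(d_1) = o_P(d_1),
\]
using $\|\boeta_1\|_4^4 \le \|\boeta_1\|_2^4 = O_P(1)$. Multiplying the two factors yields $\|\sigma(\bg_1^s) - \sigma(\bg_1)\|_2^2 \le O_P(\sqrt{d_1}) \cdot o_P(\sqrt{d_1}) = o_P(d_1)$, which is what we needed.

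\textbf{Main obstacle.} The delicate point is managing the non-Lipschitz activation: one must avoid trying to estimate $\|\sigma(\bg_1^s) - \sigma(\bg_1)\|_2$ by any pointwise Lipschitz constant. Splitting the bound via Cauchy--Schwarz into a moment factor of order $\sqrt{d_1}$ (controlled by LLN on moments of Gaussians) and a fourth-power perturbation factor of order $o(\sqrt{d_1})$ (controlled using that $\boeta_1$ has bounded $\ell^2$ norm, much smaller than $\sqrt{d_1}$) is what makes the argument go through, and relies crucially on $\|\boeta_1\|_2 = O_P(1)$ from Lemma~\ref{lemma:Pconvergence}.
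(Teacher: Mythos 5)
Your proof is correct, but it follows a genuinely different route from the paper's. The paper does not stop at the decomposition $\bg_1^s=(1-\mu_1)\bg_1-\beta_1\boeta_1-\gamma_1\bu_1$: it conditions once more on $\bW_2$ (via the Gaussian conditioning lemma) to rewrite $\beta_1\boeta_1$ in terms of a fresh independent Gaussian $\bar\bu_1$ plus $o_P(1)$ coefficients, and then establishes the two limits by a uniform law of large numbers over the parametrized class $\{m_{\btheta}\}$ (a Glivenko--Cantelli/bracketing argument as in the Donsker lemmas) combined with dominated convergence, evaluating $\langle\bh_1,\sigma(\bg_1^s)\rangle/d_1$ and $\|\sigma(\bg_1^s)\|_2^2/d_1$ exactly as $\E[\sigma(g)^2]+o_P(1)$. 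You instead prove the single statement $\|\sigma(\bg_1^s)-\sigma(\bg_1)\|_2^2/d_1\toP 0$ directly, using only $\|\boeta_1\|_2=O_P(1)$, $\mu_1,\gamma_1=o_P(1)$, $\beta_1=O_P(1)$, Cauchy--Schwarz, and Gaussian moment bounds; since the random coefficients only multiply deterministic moment sums, no uniform-in-parameter argument is needed, and the reduction to the two stated limits via $\Pi_{\bh_1}^\perp\sigma(\bg_1^s)=\Pi_{\bh_1}^\perp(\sigma(\bg_1^s)-\bh_1)$ and Cauchy--Schwarz is clean. Your approach is shorter and avoids empirical-process machinery; the paper's approach additionally identifies the limiting values of $\langle\bh_1,\sigma(\bg_1^s)\rangle/d_1$ and $\|\sigma(\bg_1^s)\|_2^2/d_1$, and its uniform-LLN template is reused in the deeper-layer arguments. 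One small caveat: your coordinatewise mean value theorem step is not literally available for an activation that is only a.e.\ differentiable; what you actually need is the inequality $|\sigma(x)-\sigma(y)|\le C_\sigma|x-y|(1+|x|^{k-1}+|y|^{k-1})$, which is precisely what the paper itself invokes ``by assumption'' (its Eq.~(36)), so your use of it is consistent with the paper's reading of the hypotheses; also, for $k=1$ the polynomial weights degenerate to constants and your bounds hold trivially.
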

The proofs of \cref{lemma:op1-base,lemma:hk-1_sigma} are deferred to Appendices \ref{sec:proof-of-lemma:op1-base} and \ref{sec:proof-of-lemma:hk-1_sigma}, respectively. 

For $2 \leq i \leq l$, we define $\bg_i^s := \bW_i \sigma(\bg_{i - 1}^s) \in \RR^{d_i}$ as the output of an intermediate layer of the neural network with the input being the adversarial example. In summary, we have shown $\mathcal{H}_m$ holds for $m = 1$ with $\mathcal{H}_m$ stated below. Next, we proceed by induction and show that $\mathcal{H}_m$ holds for all $m \in [l]$. 
\subsection*{$\mathbf{\mathcal{H}_m}$:}
\begin{enumerate}
	\item[\emph{(i)}] There exists $\bu_{m}\sim \normal(\mathbf{0}, \id_{d_{m}})$, that is independent of $\cF_m := \sigma\{\bg_{m}, \boeta_{m}, \{\bW_i\}_{m + 1 \leq i \leq l + 1}, \bh_{m - 1}\}$, such that 
	\begin{align*}
		\bg_{m}^s=(1 - \mu_{m})\bg_{m} - \beta_{m} \boeta_{m} - \gamma_{m} \bu_{m}.
	\end{align*}
\item[\emph{(ii)}] Let
\begin{align*}
	\cG_m := \sigma\{\mu_{m}, \bg_{m}, \beta_{m}, \boeta_{m}, \gamma_{m}, \bu_{m}, \bh_{m - 1}, \bg_{m + 1}, \{\bW_i\}_{m + 2 \leq i \leq l + 1}\}.
\end{align*}
Then $\cG_m$ depends on $\bW_{m + 1}$ only through $\bg_{m + 1} =  \bW_{m + 1} \bh_{m}$ and $\by_{m + 1} = \bW_{m + 1}^{\top} \boeta_{m + 1}$.
In particular, $\cG_m \perp \Pi_{\boeta_{m + 1}}^{\perp}\bW_{m + 1} \Pi_{\bh_{m}}^{\perp}$.
\item[\emph{(iii)}] For $(\mu_{m}, \beta_{m}, \gamma_{m})$ in \emph{(i)}, the following results hold: $\mu_{m} = o_P(1)$, $\beta_{m} = O_P(1)$, $\gamma_{m} = o_P(1)$. 
\item[\emph{(iv)}] As $d \to \infty$, $\|\Pi_{\bh_{m}}^{\perp} \sigma(\bg_{m}^s)\|_2^2 / d_m \toP 0$ and ${\langle \bh_{m}, \sigma(\bg_{m}^s) \rangle} / {\|\bh_{m}\|_2^2} \toP 1$.
\item[\emph{(v)}] There exists a random variable $R_m$ and a positive constant $\alpha_m$, whose distribution and value depend only on $(\sigma, m, l)$. In particular, they are independent of the input dimension and the number of neurons. Furthermore, they satisfy $\beta_m \geq \alpha_m \beta_{m - 1} + R_m + o_P(1)$. 
\end{enumerate}
Note that claim \emph{(v)} does not apply for the base case $m = 1$. Next, we will show that if $\mathcal{H}_m$ holds for all $m \leq l - 1$, then this further implies that $\mathcal{H}_{m + 1}$ holds. 

\subsubsection*{Proofs of $\mathcal{H}_{m + 1}$ claims \emph{(i)} and \emph{(ii)}}

By $\mathcal{H}_m$ claim \emph{(i)}, we have
\begin{align}\label{eq:gks}
	\bg_{m + 1}^s = \bW_{m + 1}\sigma(\bg_m^s) =  \bW_{m + 1} \sigma((1 - \mu_{m})\bg_{m} - \beta_{m} \bD_{\sigma}^{m} \bW_{m + 1}^{\top} \boeta_{m + 1} - \gamma_{m} \bu_{m}).
\end{align}
From $\mathcal{H}_m$ claim \emph{(ii)}, we see that $\cG_m$ depends on $\bW_{m + 1}$ only through $\bg_{m + 1} = \bW_{m + 1} \bh_m$ and $\by_{m + 1} = \bW_{m + 1}^{\top} \boeta_{m + 1}$, $\bh_m$ is independent of $\bW_{m + 1}$, and $\boeta_{m + 1}$ depends on $\bW_{m + 1}$ only through $\bg_{m + 1} = \bW_{m + 1} \bh_m$. Therefore, invoking \cref{lemma:gaussian-conditioning}, we find that exists $\tilde{\bW}_{m + 1}$ that has the same marginal distribution as $\bW_{m + 1}$ and is independent of $\cG_m$, such that
\begin{align}\label{eq:Wk}
	\bW_{m + 1} = \frac{\bg_{m + 1} \bh_{m}^{\top}}{\|\bh_{m}\|_2^2} + \Pi_{\boeta_{m + 1}}^{\perp}\tilde \bW_{m + 1} \Pi_{\bh_{m}}^{\perp} + \frac{\boeta_{m + 1} \by_{m + 1}^{\top}\Pi_{\bh_{m}}^{\perp}}{\|\boeta_{m + 1}\|_2^2} ,
\end{align}
Next, we substitute \cref{eq:Wk} into \cref{eq:gks}, which leads to the following equality
\begin{align*}
	\bg_{m + 1}^s =& \frac{\langle \bh_{m}, \sigma(\bg_{m}^s) \rangle}{\|\bh_{m}\|_2^2} \bg_{m + 1} + \frac{\boeta_{m + 1}^{\top} \bW_{m + 1} \Pi_{\bh_{m}}^{\perp} \sigma(\bg_{m}^s) - \boeta_{m + 1}^{\top} \tilde\bW_{m + 1} \Pi_{\bh_{m}}^{\perp} \sigma(\bg_{m}^s)}{\|\boeta_{m + 1}\|_2^2} \boeta_{m + 1} + \tilde{\bW}_{m + 1} \Pi_{\bh_{m}}^{\perp} \sigma(\bg_{m}^s) \\
	= & (1 - \mu_{m + 1}) \bg_{m + 1} - \beta_{m + 1} \boeta_{m + 1} - \gamma_{m + 1}  \bu_{m + 1},
\end{align*}
where $\bu_{m + 1} := -\sqrt{d_m} \tilde{\bW}_{m + 1} \Pi_{\bh_{m}}^{\perp} \sigma(\bg_{m}^s) \|\Pi_{\bh_m}^{\perp} \sigma(\bg_m^s)\|_2^{-1}$. Since $\tilde{\bW}_{m + 1}$ is independent of $\cG_m$, and $\Pi_{\bh_{m}}^{\perp} \sigma(\bg_{m}^s) \in \cG_m$, we then conclude that $\bu_{m + 1}  \sim \normal(\mathbf{0}, \id_{d_{m + 1}})$, and $\bu_{m + 1}$ is independent of $\cG_m$. Since $\bh_{m} = \sigma(\bg_{m})$ and $\boeta_{m + 1}$ is a function of $\bg_{m + 1}$ and $\{\bW_i\}_{m + 2 \leq i \leq l + 1}$, we obtain that $\cF_{m + 1} \subseteq \cG_m$ and $\bu_{m + 1} \perp \cF_{m + 1}$. Thus, we have completed the proof of $\mathcal{H}_{m + 1}$, claim \emph{(i)}. Furthermore, $(\mu_{m + 1}, \beta_{m + 1}, \gamma_{m + 1})$ can be expressed as follows:
\begin{align}\label{eq:mubetagamma}
\begin{split}
	& \mu_{m + 1} = 1 - \frac{\langle \bh_{m}, \sigma(\bg_{m}^s) \rangle}{\|\bh_{m}\|_2^2}, \qquad \beta_{m + 1} = \frac{-\boeta_{m + 1}^{\top} \bW_{m + 1} \Pi_{\bh_{m}}^{\perp} \sigma(\bg_{m}^s) + \boeta_{m + 1}^{\top} \tilde\bW_{m + 1} \Pi_{\bh_{m}}^{\perp} \sigma(\bg_{m}^s)}{\|\boeta_{m + 1}\|_2^2}, \\
	& \gamma_{m + 1} = \frac{1}{\sqrt{d_{m}}} \|\Pi_{\bh_{m}}^{\perp} \sigma(\bg_{m}^s)\|_2.
\end{split}
\end{align}
Notice that $\cG_{m + 1}$ depends on $\bW_{m + 2}$ only through $\bg_{m + 2} = \bW_{m + 2} \bh_{m + 1}$ and $\by_{m + 2} = \bW_{m + 2}^{\top} \boeta_{m + 2}$, thus proving $\mathcal{H}_{m + 1}$ claim \emph{(ii)}. 

\subsubsection*{Proofs of $\mathcal{H}_{m + 1}$ claims \emph{(iii)} and \emph{(v)}}

The following lemma is a direct consequence of the induction hypothesis. 
\begin{lemma}\label{lemma:op1}
	Under the assumptions of \cref{thm:multi-layer}, if we further assume that $s_d \to S_0$ for some positive constant $S_0$, and $\mathcal{H}_m$ holds, then $\mu_{m + 1} = o_P(1)$ and $\gamma_{m + 1} = o_P(1)$. 
\end{lemma}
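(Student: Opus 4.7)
The plan is to read both statements off directly from the definitions given in \eqref{eq:mubetagamma} together with claim \emph{(iv)} of the induction hypothesis $\mathcal{H}_m$. By \eqref{eq:mubetagamma},
\begin{align*}
\mu_{m+1} \;=\; 1 - \frac{\langle \bh_m, \sigma(\bg_m^s)\rangle}{\|\bh_m\|_2^2}, \qquad \gamma_{m+1}^2 \;=\; \frac{\|\Pi_{\bh_m}^{\perp}\sigma(\bg_m^s)\|_2^2}{d_m}.
\end{align*}
Claim \emph{(iv)} of $\mathcal{H}_m$ states exactly that the ratio appearing in $\mu_{m+1}$ converges in probability to $1$, and that the second quantity above converges in probability to $0$.

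For $\mu_{m+1}$, the continuous mapping theorem applied to the map $x \mapsto 1 - x$ yields $\mu_{m+1} \toP 0$. For $\gamma_{m+1}$, since the square-root function is continuous on $[0,\infty)$, another application of the continuous mapping theorem gives $\gamma_{m+1} = o_P(1)$. The assumption $s_d \to S_0$ plays no further role at this step, as the step-size dependence has already been absorbed into the conclusions of $\mathcal{H}_m$ \emph{(iv)} at the previous level of the induction.

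There is no substantive obstacle here: the lemma is a bookkeeping step whose purpose is to propagate the base-case analogs (\cref{lemma:hk-1_sigma}) through one more layer, and the only work has already been done in verifying $\mathcal{H}_m$ \emph{(iv)}. The genuinely hard parts of the induction—namely establishing claims \emph{(iv)} and \emph{(v)} at level $m+1$, and the sub-Gaussian/empirical-process estimate needed to show that $\|\Pi_{\bh_{m+1}}^{\perp}\sigma(\bg_{m+1}^s)\|_2^2/d_{m+1}$ is negligible and that the overlap with $\bh_{m+1}$ is close to $\|\bh_{m+1}\|_2^2$—are handled separately, with \cref{lemma:op1} serving as the simple ingredient that makes the Taylor-expansion step at the next layer legitimate.
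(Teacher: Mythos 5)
Your proof is correct and is exactly the paper's argument: the paper also observes that, given the expressions for $\mu_{m+1}$ and $\gamma_{m+1}$ in \cref{eq:mubetagamma}, both conclusions follow immediately from claim \emph{(iv)} of the induction hypothesis $\mathcal{H}_m$. Your write-up just makes the (trivial) continuous-mapping steps explicit, which is fine.
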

The proof of \cref{lemma:op1} is deferred to Appendix \ref{sec:proof-of-lemma:op1}. 
 We define the random object 
 $$\mathcal{V}_{m + 1} := (\bg_{m}, \bD_{\sigma}^{m}, \bh_{m}, \bh_{m - 1}, \boeta_{m + 1}, \bu_{m}, \bW_{m}).$$ 
 
 Note that $\mathcal{V}_{m + 1}$ depends on $\bW_{m + 1}$ only through $\bg_{m + 1} = \bW_{m + 1} \bh_{m}$. Since $\bh_m$ is independent of $\bW_{m + 1}$, by \cref{lemma:gaussian-conditioning}, we can write $\bW_{m + 1} = \bar\bW_{m + 1} \Pi_{\bh_{m}}^{\perp} + \bW_{m + 1} \Pi_{\bh_{m}}$, where $\bar\bW_{m + 1} \in \RR^{d_m \times d_{m + 1}}$ has the same marginal distribution with $\bW_{m + 1}$, and is independent of $\mathcal{V}_{m + 1}$. 
Next we prove that $\beta_{m + 1} = O_P(1)$. 

	We consider the first term in the enumerator of the definition of $\beta_{m + 1}$ in \cref{eq:mubetagamma}, and substitute in the decompositions we just obtained, which gives 
	\begin{align}\label{eq:7}
	& \langle \Pi_{\bh_{m}}^{\perp} \bW_{m + 1}^{\top} \boeta_{m + 1}, \sigma(\bg_{m}^s) \rangle \nonumber \\
	= & \langle \Pi_{\bh_{m}}^{\perp} \bW_{m + 1}^{\top} \boeta_{m + 1}, \sigma((1 - \mu_{m}) \bg_{m} - \beta_{m} \bD_{\sigma}^{m} \Pi_{\bh_{m}}^{\perp} \bW_{m + 1}^{\top} \boeta_{m + 1} -  \beta_{m} \bD_{\sigma}^{m} \Pi_{\bh_{m}} \bW_{m + 1}^{\top} \boeta_{m + 1} - \gamma_{m} \bu_{m}) \rangle \nonumber \\
	= & \langle \Pi_{\bh_{m}}^{\perp} \bar\bW_{m + 1}^{\top} \boeta_{m + 1}, \sigma((1 - \mu_{m}) \bg_{m} - \beta_{m} \bD_{\sigma}^{m} \Pi_{\bh_{m}}^{\perp} \bar\bW_{m + 1}^{\top} \boeta_{m + 1} -  \beta_{m} \bD_{\sigma}^{m} \Pi_{\bh_{m}} \bW_{m + 1}^{\top} \boeta_{m + 1} - \gamma_{m} \bu_{m}) \rangle  \nonumber\\
	= & \langle \bar\bW_{m + 1}^{\top} \boeta_{m + 1}, \sigma((1 - \mu_{m}) \bg_{m} - \beta_{m} \bD_{\sigma}^{m} \bar\bW_{m + 1}^{\top} \boeta_{m + 1} - \beta_{m} \delta_{m + 1} \bD_{\sigma}^{m} \bh_{m}  - \gamma_{m} \bu_{m})  \rangle \nonumber  \\
	& - \langle\bh_{m}, \sigma(\bg_{m}^s) \rangle \langle \bh_{m}, \bar\bW^{\top}_{m + 1} \boeta_{m + 1} \rangle / \|\bh_{m}\|_2^2,
\end{align}
where 
\begin{align*}
	\delta_{m + 1} := \frac{1}{ \|\bh_{m}\|_2^2}(\bh_{m}^{\top} \bW_{m + 1}^{\top} \boeta_{m + 1} - \bh_{m}^{\top} \bar\bW_{m + 1}^{\top} \boeta_{m + 1} ) .
\end{align*}
\cref{lemma:Pconvergence} together with the fact that $\bar{\bW}_{m + 1}$ is independent of $(\bh_m, \boeta_{m + 1})$ implies that  $\delta_{m + 1} = O_P(d_m^{-1})$.

From $\mathcal{H}_m$ claim \emph{(iv)}, we see that as $d \to \infty$, $\langle\bh_{m}, \sigma(\bg_{m}^s) \rangle / \|\bh_{m}\|_2^2 \toP 1$. Furthermore, since $\bar{\bW}_{m + 1} \perp \mathcal{V}_{m + 1}$, then conditioning on $(\bh_{m}, \boeta_{m + 1})$, we have $\langle \bh_{m}, \bar\bW^{\top}_{m + 1} \boeta_{m + 1} \rangle \overset{d}{=} \normal(0, \|\bh_{m}\|_2^2 \|\boeta_{m + 1}\|_2^2 / d_{m})$. Putting together these results and \cref{lemma:Pconvergence}, we conclude that $\langle \bh_{m}, \bar\bW^{\top}_{m + 1} \boeta_{m + 1} \rangle \overset{d}{\rightarrow} \normal(0, H_{m}^2E^2_{m + 1})$, where $H_{m} := \plim \|\bh_{m}\|_2 / {\sqrt{d_{m}}}$ and $E_{m + 1} := \plim \|\boeta_{m + 1}\|_2$. In summary, we have
\begin{align}\label{eq:7.5}
	\frac{ \langle\bh_{m}, \sigma(\bg_{m}^s) \rangle}{\|\bh_{m}\|_2^2} \langle \bh_{m}, \bar\bW^{\top}_{m + 1} \boeta_{m + 1} \rangle  \overset{d}{\to} \normal(0, H_m^2E_{m + 1}^2).
\end{align}
Thus, the second term in the last line of \cref{eq:7} is $O_P(1)$.

Next, we consider the first term in the last line of \cref{eq:7}. Conditioning on $\bh_{m - 1}$, we have $\bg_{m} = \bz_{m}\nu_{m}$, where $\nu_{m} := \sqrt{\|\bh_{m - 1}\|_2^2 / d_{m - 1}}$, $\bz_m = \nu_m^{-1}\bW_m \bh_{m - 1} \in \RR^{d_m}$. Since $\bW_m$ is independent of $\bh_{m - 1}$, we can conclude that $\bz_m \sim \normal(\mathbf{0}, \id_{d_{m}})$ and is independent of $\bh_{m - 1}$. By $\mathcal{H}_m$ claim \emph{(i)}, $\bu_{m}$ is independent of $\bg_{m}$ and $\bh_{m - 1}$. Therefore, $\bu_{m}$ is further independent of $(\bz_{m}, \bh_{m - 1})$. Since $\bar\bW_{m + 1}$ is independent of $(\bg_{m}, \bh_{m - 1}, \bu_{m}, \boeta_{m + 1})$, we can write $\bar\bW_{m + 1}^{\top} \boeta_{m + 1} = \bar\gamma_{m + 1}\bar\bu_{m + 1} / \sqrt{d_{m}}$, where $\bar\bu_{m + 1} \sim \normal(\mathbf{0}, \id_{d_{m}})$, is independent of $(\bz_{m}, \bu_{m}, \bh_{m - 1})$ and  $\bar\gamma_{m + 1} = \|\boeta_{m + 1}\|_2 $. In summary, we have
\begin{align}\label{eq:indep}
	\bz_{m}, \bu_{m}, \bar{\bu}_{m + 1} \iidsim \normal(\mathbf{0}, \id_{d_{m}}), \qquad (\bz_{m}, \bu_{m}, \bar{\bu}_{m + 1} ) \perp \bh_{m - 1}.
\end{align}
For $\btheta \in \RR^6$, we define
\begin{align*}
	& h_{\btheta}^{(m + 1)}(\bar u, z, u) := \theta_1 \bar{u} \sigma\big((1 - \theta_2) \theta_3 z - \theta_4\sigma'(\theta_3 z) \theta_1 \bar{u} - \theta_5 \sigma'(\theta_3 z) \sigma(\theta_3 z)  - \theta_6 u \big), \\
	& \bar h_{\btheta}^{(m + 1)}(\bar u, z, u) := \theta_1 \bar{u} \sigma\big((1 - \theta_2) \theta_3 z - \theta_4\sigma'(H_{m - 1} z) \theta_1 \bar{u} - \theta_5 \sigma'(H_{m - 1}  z) \sigma(\theta_3 z)  - \theta_6 u \big),
\end{align*}
where $H_{m - 1} = \plim \sqrt{\|\bh_{m - 1}\|_2^2 / d_{m - 1}}$. We further define the empirical processes $\GG_{d}^{(m + 1)}, \bar\GG_{d}^{(m + 1)}$ evaluated at $\btheta$ as
\begin{align*}
	& \GG_{d}^{(m + 1)}(\btheta) := \frac{1}{\sqrt{d_{m}}}\sum_{i = 1}^{d_{m}} (h_{\btheta}^{(m + 1)}(\bar{u}_{m + 1,i}, z_{m, i}, u_{m,i}) - \E[h_{\btheta}^{(m + 1)}(\bar{u}_{m + 1,i}, z_{m, i}, u_{m,i})]), \\
	& \bar\GG_{d}^{(m + 1)}(\btheta) := \frac{1}{\sqrt{d_{m}}}\sum_{i = 1}^{d_{m}} ( \bar h_{\btheta}^{(m + 1)}(\bar{u}_{m + 1,i}, z_{m, i}, u_{m,i}) - \E[\bar h_{\btheta}^{(m + 1)}(\bar{u}_{m + 1,i}, z_{m, i}, u_{m,i})]),
\end{align*}
where the expectations are taken over $\{(\bar{u}_{m + 1,i}, z_{m, i}, u_{m, i})\}_{i \leq d_m} \iidsim \normal(\mathbf{0}, \id_3)$. Here, $\bar{u}_{m + 1,i}$ is the $i$-th coordinate of $\bar{\bu}_{m + 1}$, $z_{m, i}$ is the $i$-th coordinate of $\bz_{m}$, and $u_{m, i}$ is the $i$-th coordinate of $\bu_{m}$. For $\btheta, \barbtheta \in \RR^6$, we define the covariance function $\bar c^{(m + 1)}(\btheta, \barbtheta)$ as
\begin{align}\label{eq:cov2}
	\bar c^{(m + 1)}(\btheta, \barbtheta) := \E[\bar h_{\btheta}^{(m + 1)}(\bar{u},z,u)\bar h_{\barbtheta}^{(m + 1)}(\bar{u},z,u)] - \E[\bar h_{\btheta}^{(m + 1)}(\bar{u},z,u)]\E[\bar h_{\barbtheta}^{(m + 1)}(\bar{u},z,u)],
\end{align}
where the expectations are taken over $(\bar{u}, z, u) \sim \normal(\mathbf{0}, \id_3)$. Since $\sigma'$ is almost everywhere continuous, and by assumption almost everywhere we have $|\sigma'(x)| \leq C_{\sigma}(1 + |x|^{k - 1})$, then standard application of the dominated convergence theorem shows that the covariance function $\bar c^{(m + 1)}(\cdot, \cdot)$ is continuous. Recall that $H_{m - 1} = \plim \sqrt{\|\bh_{m - 1}\|_2^2 / d_{m - 1}} = \plim \nu_m$. Furthermore, we define $E_{m + 1} := \plim \|\boeta_{m + 1}\|_2 = \plim \bar{\gamma}_{m + 1}$. The following lemma establishes a weak convergence result for $\bar \GG_{d}^{(m + 1)}$.

\begin{lemma}\label{lemma:Donsker2}
	Let $\Omega_{m + 1} := \{\bx \in \RR^6: \|\bx\|_{\infty} \leq H_{m - 1} + E_{m + 1}\}$, and $C(\Omega_{m + 1})$ be the space of continuous functions on $\Omega_{m + 1}$ endowed with the supremum norm. Under the conditions of \cref{thm:multi-layer}, if we further assume that there exists $S_0 > 0$ such that $s_d \to S_0$ as $d \to \infty$, and induction hypothesis $\mathcal{H}_m$ holds, then $\{\bar \GG_d^{(m + 1)}\}_{d \geq 1}$ converges weakly in $C(\Omega_{m + 1})$ to $\bar \GG^{(m + 1)}$ as $d \to \infty$, which is a Gaussian process with mean zero and covariance defined in \cref{eq:cov2}.
\end{lemma}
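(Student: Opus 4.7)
My approach adapts the proof of \cref{lemma:Donsker} to the present six-dimensional parameter set. Weak convergence in $C(\Omega_{m+1})$ is established by verifying the two classical ingredients separately: convergence of finite-dimensional marginals, and asymptotic tightness of $\bar\GG_d^{(m+1)}$ as a sequence of random elements of $C(\Omega_{m+1})$.

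For the finite-dimensional convergence, by the induction hypothesis $\mathcal{H}_m$ and the independence relations summarized in \eqref{eq:indep}, the triples $(\bar u_{m+1,i}, z_{m,i}, u_{m,i})$ are i.i.d.\ $\normal(\mathbf{0},\id_3)$. For any fixed $\btheta^{(1)},\dots,\btheta^{(r)} \in \Omega_{m+1}$, the $r$-dimensional vectors $\big(\bar h_{\btheta^{(j)}}^{(m+1)}(\bar u_{m+1,i}, z_{m,i}, u_{m,i})\big)_{j\le r}$ are therefore i.i.d.\ across $i$. The polynomial-growth assumption $|\sigma'(x)| \leq C_\sigma(1+|x|^{k-1})$ together with continuity of $\sigma$ implies $|\sigma(x)| \leq C_\sigma'(1+|x|^k)$, so each coordinate has finite moments of all orders under Gaussian inputs. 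The multivariate CLT then yields joint convergence of these marginals to a centered Gaussian with covariance matrix $\big(\bar c^{(m+1)}(\btheta^{(i)}, \btheta^{(j)})\big)_{i,j\le r}$, exactly as required.

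For tightness, I would show that $\mathcal{F} := \{\bar h_\btheta^{(m+1)} : \btheta \in \Omega_{m+1}\}$ is Donsker under $P = \normal(\mathbf{0}, \id_3)$ by bounding its $L^2$-bracketing entropy. The polynomial growth of $\sigma$ and $\sigma'$ combined with compactness of $\Omega_{m+1} \subset \RR^6$ furnishes a polynomial envelope $F(\bar u, z, u) = C(1+|\bar u|+|z|+|u|)^K$ with $K=K(\sigma,m,l)$, which lies in $L^p(P)$ for every $p$. Because $\sigma'$ is continuous away from a Lebesgue-null (hence Gaussian-null) set, the map $\btheta \mapsto \bar h_\btheta^{(m+1)}(\bar u, z, u)$ is continuous on $\Omega_{m+1}$ for $P$-a.e.\ $(\bar u, z, u)$, and dominated convergence upgrades this to $L^2(P)$-continuity. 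A truncation-plus-cover argument—partition $\Omega_{m+1}$ into $O((\mathrm{diam}/\delta)^6)$ balls of radius $\delta$, control the oscillation on $\{|\bar u|,|z|,|u|\le R\}$ via the modulus of continuity of $\sigma$ on compacts, then use $\|F\mathbf{1}\{F>R\}\|_{L^2(P)}$ to bound the tail—produces $L^2(P)$-brackets at scale $\epsilon$ with $\log N_{[\,]}(\epsilon,\mathcal{F},L^2(P))$ growing at most polylogarithmically in $1/\epsilon$. Hence $\int_0^1 \sqrt{\log N_{[\,]}(\epsilon, \mathcal{F}, L^2(P))}\,\dd\epsilon < \infty$, and by \cite[Thm.~19.5]{van2000asymptotic} the class $\mathcal{F}$ is $P$-Donsker, which supplies asymptotic tightness and, combined with the marginal convergence, gives weak convergence in $C(\Omega_{m+1})$.

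The main obstacle is that $\sigma'$ is only almost-everywhere continuous, so a pointwise Lipschitz bound of the form $|\bar h_\btheta - \bar h_{\bar\btheta}|\le M(\bar u,z,u)\|\btheta-\bar\btheta\|$ is unavailable and the entropy estimate cannot be read off from a standard parametric-class lemma. This is the same difficulty that appears in \cref{lemma:Donsker}, and it is resolved in the same way: the discontinuity set of $\sigma'$ has Lebesgue (hence Gaussian) measure zero, so it does not contribute to any $L^2(P)$ computation, and pointwise continuity together with the polynomial envelope is enough to run the bracketing argument through dominated convergence.
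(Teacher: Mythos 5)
Your finite–dimensional step is fine, but there is a genuine gap in the tightness step, and it stems from a misdiagnosis of where the difficulty lies. You assert that a pointwise Lipschitz bound $|\bar h^{(m+1)}_{\btheta}-\bar h^{(m+1)}_{\barbtheta}|\le M(\bar u,z,u)\|\btheta-\barbtheta\|_2$ is ``unavailable'' because $\sigma'$ is only a.e.\ continuous, and you replace it by a truncation-plus-covering construction whose entropy you claim grows polylogarithmically in $1/\epsilon$. That last claim is unsupported: with only qualitative continuity of $\sigma$ on compacts you get no rate for the modulus of continuity, so the radius $\delta(\epsilon)$ needed to make the oscillation part of a bracket smaller than $\epsilon$ can shrink arbitrarily fast, and $\log N_{[\,]}(\epsilon,\mathcal{F},L^2(P))=O(\log(1/\delta(\epsilon)))$ can grow arbitrarily fast in $1/\epsilon$. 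Your construction therefore yields finite bracketing numbers at every scale (enough for a Glivenko--Cantelli statement, as used in the proof of \cref{lemma:hk-1_sigma}), but not a finite bracketing integral $\int_0^1\sqrt{\log N_{[\,]}}\,\dd\epsilon$, which is exactly what \cite[Theorem 19.5]{van2000asymptotic} requires for the Donsker/weak-convergence conclusion. Dominated convergence gives $L^2(P)$-continuity of $\btheta\mapsto \bar h^{(m+1)}_{\btheta}$ but again no rate, so it cannot close this hole.

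The fix is the observation you dismissed: for $\bar h^{(m+1)}_{\btheta}$ (as opposed to $h^{(m+1)}_{\btheta}$) the derivative $\sigma'$ appears only at the $\btheta$-independent argument $H_{m-1}z$ — this is precisely why the auxiliary process $\bar\GG^{(m+1)}_d$ is introduced, with the discrepancy to $\GG^{(m+1)}_d$ handled separately in \cref{lemma:approx}. The $\btheta$-dependence enters only through polynomial factors and through $\sigma$ itself, and under the paper's standing assumptions $\sigma$ satisfies $|\sigma(x)-\sigma(y)|\le C_\sigma |x-y|(1+|x|^{k-1}+|y|^{k-1})$ (cf.\ \cref{eq:36}), while $|\sigma'(H_{m-1}z)|\le C_\sigma(1+|H_{m-1}z|^{k-1})$. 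Hence on the compact $\Omega_{m+1}$ one does get the pointwise bound
\begin{align*}
|\bar h^{(m+1)}_{\btheta}(\bar u,z,u)-\bar h^{(m+1)}_{\barbtheta}(\bar u,z,u)|\le C''_{\sigma}\,\|\btheta-\barbtheta\|_2\,\bigl(1+|\bar u|^{n(k)}+|z|^{n(k)}+|u|^{n(k)}\bigr),
\end{align*}
with a square-integrable envelope, and then \cite[Example 19.7]{van2000asymptotic} gives $N_{[\,]}(\epsilon\|M\|_2,\mathcal{F},L^2)\lesssim \epsilon^{-6}$, the bracketing integral is finite, and \cite[Theorem 19.5]{van2000asymptotic} yields weak convergence in $C(\Omega_{m+1})$. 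This is the paper's argument (identical in structure to \cref{lemma:Donsker}); the a.e.-continuity of $\sigma'$ would only obstruct a Lipschitz bound for the unbarred class, not for $\bar h^{(m+1)}_{\btheta}$.
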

The proof of \cref{lemma:Donsker2} is deferred to Appendix \ref{sec:proof-of-lemma:Donsker2}.

\begin{lemma}\label{lemma:approx}
Under the conditions of \cref{thm:multi-layer}, if we further assume that there exists $S_0 > 0$ such that $s_d \to S_0$ as $d \to \infty$, and induction hypothesis $\mathcal{H}_m$ holds, then as $d \to \infty$ we have 
	\begin{align*}
		\GG_d^{(m + 1)}(\bar\gamma_{m + 1}, \mu_{m}, \nu_m, \beta_{m} / \sqrt{d_{m}}, \beta_{m} \delta_{m + 1}, \gamma_{m}) = \bar\GG_d^{(m + 1)}(\bar\gamma_{m + 1}, \mu_{m}, \nu_m, \beta_{m} / \sqrt{d_{m}}, \beta_{m} \delta_{m + 1}, \gamma_{m}) + o_P(1). 
	\end{align*}
\end{lemma}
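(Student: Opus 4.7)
The key pointwise observation is that $h_{\btheta}^{(m+1)}$ and $\bar h_{\btheta}^{(m+1)}$ differ only through the substitution of $\sigma'(\theta_3 z)$ by $\sigma'(H_{m-1} z)$ inside the middle two terms of their inner argument; consequently $h_{\btheta}^{(m+1)} \equiv \bar h_{\btheta}^{(m+1)}$ whenever $\theta_3 = H_{m-1}$, which means the difference process $\Delta_d(\btheta) := \GG_d^{(m+1)}(\btheta) - \bar\GG_d^{(m+1)}(\btheta)$ vanishes identically on the hyperplane $\mathcal{P} := \{\btheta \in \Omega_{m+1} : \theta_3 = H_{m-1}\}$. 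Since the third coordinate of the random evaluation point is exactly $\nu_m$, and $\nu_m \toP H_{m-1}$ by \cref{lemma:Pconvergence}, we are evaluating the empirical process close to where it is deterministically zero, and the task reduces to upgrading this pointwise cancellation into a uniform control.

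The plan is to do so via an empirical process argument. First I would establish that $\Delta_d$, viewed as a process indexed by $\btheta \in \Omega_{m+1}$, converges weakly in $C(\Omega_{m+1})$ to a centered Gaussian process $\Delta$, jointly with $\bar\GG_d^{(m+1)}$. The argument parallels \cref{lemma:Donsker2}: finite-dimensional CLTs come from a direct covariance computation, while asymptotic tightness rests on a Lipschitz envelope for the class $\{D(\cdot;\btheta) : \btheta \in \Omega_{m+1}\}$ with $D := h_{\btheta}^{(m+1)} - \bar h_{\btheta}^{(m+1)}$. To build the envelope I would apply the mean value theorem to $\sigma$ and to $\sigma'$ in the argument of $\sigma$, so that $|D(\bar u, z, u; \btheta) - D(\bar u, z, u; \btheta')|$ is bounded by a polynomial in $(|\bar u|, |z|, |u|)$ times $\|\btheta - \btheta'\|_\infty$; the polynomial bound $|\sigma'(x)| \leq C_{\sigma}(1+|x|^{k-1})$ and Gaussian moment bounds then yield a square-integrable envelope and the required bracketing entropy.

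Because $D(\cdot;\btheta) \equiv 0$ on $\mathcal{P}$, the weak limit $\Delta$ also vanishes on $\mathcal{P}$; by $\rho$-continuity of its sample paths the Gaussian $\Delta$ is uniformly continuous on the compact $\Omega_{m+1}$, so the modulus of continuity shrinks to zero in probability with its bandwidth. Combined with the asymptotic equicontinuity of $\Delta_d$ (a standard consequence of weak convergence in $C(\Omega_{m+1})$), this lets us transfer the argument to a random evaluation point. Writing $\btheta^{(d)} := (\bar\gamma_{m+1}, \mu_m, \nu_m, \beta_m/\sqrt{d_m}, \beta_m\delta_{m+1}, \gamma_m)$ and denoting by $\btheta^{(d),\parallel}$ its projection onto $\mathcal{P}$ (replace $\nu_m$ by $H_{m-1}$), I have $\Delta_d(\btheta^{(d),\parallel}) = 0$ and $\|\btheta^{(d)} - \btheta^{(d),\parallel}\|_\infty = |\nu_m - H_{m-1}| \toP 0$; on the high-probability event that $\btheta^{(d)} \in \Omega_{m+1}$ (which holds by $\mathcal{H}_m$ claim \emph{(iii)}, \cref{lemma:Pconvergence}, and the concentration of $\bar\gamma_{m+1} = \|\boeta_{m+1}\|_2$), asymptotic equicontinuity then yields $\Delta_d(\btheta^{(d)}) \toP 0$.

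The hard part will be the Donsker step for $\Delta_d$ on $\Omega_{m+1}$: the envelope computation must balance Gaussian moments of $(\bar u_{m+1}, z_m, u_m)$ against the $|x|^{k-1}$ growth of $\sigma'$, and this must be done uniformly over the bounded index set while controlling the chaining integral. Once this envelope is in place, the passage from uniform control to evaluation at the random point $\btheta^{(d)}$ is a routine Slutsky-type argument based on equicontinuity and the projection onto $\mathcal{P}$.
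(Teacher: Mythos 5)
Your starting observation is the right one---$h_{\btheta}^{(m+1)}$ and $\bar h_{\btheta}^{(m+1)}$ coincide on the hyperplane $\theta_3=H_{m-1}$ and the random evaluation point has $\nu_m\toP H_{m-1}$---but the step you yourself identify as the hard part is where the argument breaks under the paper's hypotheses. Your plan needs a Donsker/asymptotic-equicontinuity statement for the difference class $\{D(\cdot;\btheta)=h^{(m+1)}_{\btheta}-\bar h^{(m+1)}_{\btheta}:\btheta\in\Omega_{m+1}\}$, and you propose to get the required bracketing entropy from a Lipschitz-in-$\btheta$ envelope built "by the mean value theorem applied to $\sigma$ and to $\sigma'$." But \cref{thm:2-layer} (hence \cref{thm:multi-layer}) only assumes $\sigma'$ exists a.e., is a.e.\ continuous, and is polynomially bounded; $\sigma'$ need not be differentiable, Lipschitz, or even continuous. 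The $\theta_3$-dependence of $h_{\btheta}^{(m+1)}$ enters through $\sigma'(\theta_3 z)$, so $\btheta\mapsto D(\bar u,z,u;\btheta)$ can be discontinuous in $\theta_3$ on the \emph{fixed} index set $\Omega_{m+1}$ (where $\theta_4,\theta_5$ are of order one), and no envelope of the form $\mathrm{poly}(|\bar u|,|z|,|u|)\,\|\btheta-\btheta'\|$ exists. This is exactly why the paper introduces the frozen process $\bar\GG_d^{(m+1)}$ in the first place: the uniform CLT (\cref{lemma:Donsker2}) is proved only for $\bar h$, where $\theta_3$ enters through the continuous $\sigma$, and uniform control of the unbarred process is never attempted. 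Your "routine Slutsky-type" conclusion is fine \emph{given} the equicontinuity, and the whole route would indeed work for Lipschitz or $C^1$ activations, but as written it does not prove the lemma in the generality claimed.

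The paper's proof avoids this entirely by working only at the actual random evaluation point: writing out $\cS-\bar\cS$, every occurrence of $\sigma'(\nu_m z_{m,i})-\sigma'(H_{m-1}z_{m,i})$ is multiplied by the coefficients $\theta_4=\beta_m/\sqrt{d_m}=O_P(d_m^{-1/2})$ or $\theta_5=\beta_m\delta_{m+1}=O_P(d_m^{-1})$, which converts the $d_m^{-1/2}$-normalized sum into a plain average of terms containing that difference of derivatives; since $\nu_m$ is independent of $(\bar\bu_{m+1},\bz_m,\bu_m)$ by \cref{eq:indep}, $\nu_m\toP H_{m-1}$ by \cref{lemma:Pconvergence}, and $\sigma'$ is a.e.\ continuous with polynomial growth, Chebyshev plus dominated convergence gives $o_P(1)$ for both the sums and their (conditional) expectations, with no new uniform-CLT input. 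If you want to salvage your scheme, you would have to either strengthen the assumptions on $\sigma'$ (e.g.\ Lipschitz, as in Section \ref{sec:NonAsymp}), or restrict the index set to a $d$-dependent neighborhood in which $|\theta_4|,|\theta_5|$ shrink at the correct rates---at which point you are essentially reproducing the paper's pointwise bound rather than a genuine equicontinuity argument.
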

The proof of \cref{lemma:approx} is deferred to Appendix \ref{sec:proof-of-lemma:approx}. Next, we will apply \cref{lemma:Donsker2,lemma:approx} to show that  $\beta_{m + 1} = O_P(1)$ (thus $\mathcal{H}_{m + 1}$ claim \emph{(iii)} holds) and $\mathcal{H}_{m + 1}$ claim \emph{(v)}. Note that 
\begin{align}\label{eq:6}
	 & \langle \bar\bW_{m + 1}^{\top} \boeta_{m + 1}, \sigma((1 - \mu_{m}) \bg_{m} - \beta_{m} \bD_{\sigma}^{m} \bar\bW_{m + 1}^{\top} \boeta_{m + 1} - \beta_{m} \delta_{m + 1} \bD_{\sigma}^{m} \bh_{m}  - \gamma_{m} \bu_{m})  \rangle \nonumber \\
	 = & \frac{1}{\sqrt{d_{m}}}\sum_{i = 1}^{d_{m}} \bar\gamma_{m + 1} \bar{u}_{m + 1,i} \times \nonumber \\
	 & \sigma\big( (1 - \mu_{m})\nu_m z_{m,i} - d_m^{-1/2}\beta_{m} \bar\gamma_{m + 1} \sigma'(\nu_m z_{m,i}) \bar{u}_{m + 1,i}  - \beta_{m} \delta_{m + 1} \sigma'(\nu_m z_{m,i}) \sigma(\nu_m z_{m,i}) - \gamma_{m} u_{m, i} \big) \nonumber \\
	 \overset{\emph{(a)}}{=} & \GG_d^{(m + 1)}(\bar\gamma_{m + 1}, \mu_{m}, \nu_m, \beta_{m} / \sqrt{d_{m}}, \beta_{m} \delta_{m + 1}, \gamma_{m}) -  \nonumber \\
	 & \beta_{m} \bar\gamma_{m + 1}^2\E[\sigma'(\nu_m z)\sigma'\big( (1 - \mu_{m})\nu_m z - d_m^{-1/2}\beta_{m} \bar\gamma_{m + 1} \sigma'(\nu_m z) \bar{u}  - \beta_{m} \delta_{m + 1} \sigma'(\nu_m z) \sigma(\nu_m z) - \gamma_{m} u\big)],
\end{align}
where the expectation is taken over $(z, u, \bar{u}) \sim \normal(\mathbf{0}, \id_3)$. In step \emph{(a)} we apply Stein's lemma to derive the equality.
By $\mathcal{H}_m$ claim \emph{(iii)}, we have $\mu_{m} = o_P(1)$, $\beta_{m} / \sqrt{d_{m}} = o_P(1)$, and $\gamma_{m} = o_P(1)$. Recall that we have shown $\delta_{m + 1} = O_P(d_m^{-1})$, thus $\beta_{m} \delta_{m + 1} = o_P(1)$. By \cref{lemma:Pconvergence}, we have $\nu_m= H_{m - 1} + o_P(1)$, $\bar\gamma_{m + 1} = E_{m + 1} + o_P(1)$. Therefore, combining \cref{lemma:Donsker2,lemma:approx}, we conclude that for any $\ep > 0$, 
\begin{align*}
	\big|\GG_d^{(m + 1)}(\bar\gamma_{m + 1}, \mu_{m}, \nu_m, \beta_{m} / \sqrt{d_{m}}, \beta_{m} \delta_{m + 1}, \gamma_{m})\big| \leq \sup_{\btheta \in A_{\ep}^{(m + 1)}} |\bar\GG_d^{(m + 1)}| + o_P(1),
\end{align*}
where $$A_{\ep}^{(m + 1)} := \{\btheta \in \RR^6: |\theta_1 - E_{m + 1}| \leq \ep, |\theta_2| \leq \ep, |\theta_3 - H_{m - 1}| \leq \ep, |\theta_4| \leq \ep, |\theta_5| \leq \ep, |\theta_6| \leq \ep\}.$$ Furthermore, invoking the dominated convergence theorem, we see that as $d \to \infty$,
\begin{align}\label{eq:6.5}
	& \bar\gamma_{m + 1}^2\E[\sigma'(\nu_m z)\sigma'\big( (1 - \mu_{m})\nu_m z - d_m^{-1/2}\beta_{m} \bar\gamma_{m + 1} \sigma'(\nu_m z) \bar{u}  - \beta_{m} \delta_{m + 1} \sigma'(\nu_m z) \sigma(\nu_m z) - \gamma_{m} u \big)] \nonumber \\
	= & E_{m + 1}^2 \E[\sigma'(H_{m - 1}z_m)^2] + o_P(1). 
\end{align}
We define $M_{\ep}^{(m + 1)}(\GG) := \sup_{\btheta \in A_{\ep}^{(m + 1)}}|\GG(\btheta)|$, then $M_{\ep}^{(m + 1)}$ is a continuous function with respect to the supremum norm $\ell^{\infty}(\Omega_{m + 1})$. Using \cref{lemma:Donsker2} together with the continuous mapping theorem, we see that $M_{\ep}^{(m + 1)}(\bar\GG_d^{(m + 1)})$ converges in distribution to $M_{\ep}^{(m + 1)}(\bar\GG^{(m + 1)})$. Notice that if we let $\ep = 1$, then $E_{m + 1}, H_{m - 1}, \bar c^{(m + 1)}, A_{1}^{(m + 1)}$ depend only on $(l, m + 1, \sigma)$, thus the distribution of $M_{1}^{(m + 1)}(\bar\GG^{(m + 1)})$ also depends uniquely on $(l, m + 1, \sigma)$. By $\mathcal{H}_m$ claim \emph{(iii)}, we have $\beta_{m} = O_P(1)$. Putting this together with \cref{eq:6,eq:6.5}, we obtain that there exists a random variable $R_{m + 1}^{(1)}$, the distribution of which depends only on $(l, m + 1, \sigma)$, such that as $d \to \infty$,
\begin{align}\label{eq:8}
	& \big|\langle \bar\bW_{m + 1}^{\top} \boeta_{m + 1}, \sigma((1 - \mu_{m}) \bg_{m} - \beta_{m} \bD_{\sigma}^{m} \bar\bW_{m + 1}^{\top} \boeta_{m + 1} - \beta_{m} \delta_{m + 1} \bD_{\sigma}^{m} \bh_{m}  - \gamma_{m} \bu_{m})  \rangle + \beta_{m} E_{m + 1}^2 \E[\sigma'(H_{m - 1}z_m)^2]  \big| \nonumber \\
	& \leq   R_{m + 1}^{(1)}  + o_P(1). 
\end{align} 
Finally, we consider the second term in the enumerator of the definition of $\beta_{m + 1}$ given in \cref{eq:mubetagamma}. Conditioning on $(\boeta_{m + 1}, \Pi_{\bh_{m}}^{\perp} \sigma(\bg_{m}^s))$, 
\begin{align}\label{eq:8.5}
	\boeta_{m + 1}^{\top} \tilde{\bW}_{m + 1} \Pi_{\bh_{m}}^{\perp} \sigma(\bg_{m}^s) \overset{d}{=} \normal(0, \|\boeta_{m + 1}\|_2^2 \|\Pi_{\bh_{m}}^{\perp} \sigma(\bg_{m}^s)\|_2^2 / d_{m}),
\end{align}
which is $o_P(1)$ by \cref{lemma:Pconvergence} and $\mathcal{H}_m$ claim \emph{(iv)}. Note that $\|\boeta_{m + 1}\|_2^{-2} = E_{m + 1}^{-2} + o_P(1)$. Taking this collectively with \cref{eq:mubetagamma,eq:7,eq:7.5,eq:8,eq:8.5}, we obtain that $\beta_{m + 1} = O_P(1)$. Furthermore, there exists a random variable $R_{m + 1}$ and a positive number $\alpha_{m + 1}$, the distribution (and the value) of which is a function of $(\sigma, m + 1, l)$ only, such that $\beta_{m + 1} \geq \beta_m \alpha_{m + 1} + R_{m + 1} + o_P(1)$. This completes the proof of $\mathcal{H}_{m + 1}$ claims \emph{(iii)}, \emph{(v)}. 
%

\subsubsection*{Proof of $\mathcal{H}_{m + 1}$ claim \emph{(iv)}}

Finally, we prove $\mathcal{H}_{m + 1}$ claim \emph{(iv)}, which is achieved by the following lemma:
\begin{lemma}\label{lemma:hk-sigma}
	Under the assumptions of \cref{thm:multi-layer}, if we further assume that $s_d \to S_0$ for some positive constant $S_0$,   $\mathcal{H}_m$ holds, and claims {(i),(ii),(iii),(v)} from  $\mathcal{H}_{m + 1}$  hold, then as $d \to \infty$ we have the following convergences:
	\begin{align*}
		 \frac{1}{d_{m + 1}}\|\Pi_{\bh_{m + 1}}^{\perp} \sigma(\bg_{m + 1}^s)\|_2^2 \toP 0, \qquad  \frac{\langle \bh_{m + 1}, \sigma(\bg_{m + 1}^s) \rangle}{\|\bh_{m + 1}  \|_2^2} \toP 1.
	\end{align*}
\end{lemma}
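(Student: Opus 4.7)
The plan is to establish the stronger statement $d_{m+1}^{-1}\|\sigma(\bg_{m+1}^s) - \bh_{m+1}\|_2^2 \toP 0$, from which both claims follow. Indeed, \cref{lemma:Pconvergence} gives $\|\bh_{m+1}\|_2^2/d_{m+1} \toP c_{m+1} > 0$, and combining this with the displayed limit via Cauchy--Schwarz yields both $\|\sigma(\bg_{m+1}^s)\|_2^2/d_{m+1} \toP c_{m+1}$ and $\langle \bh_{m+1}, \sigma(\bg_{m+1}^s)\rangle/d_{m+1} \toP c_{m+1}$. The alignment claim is then immediate, and the orthogonality claim follows from the identity $\|\Pi_{\bh_{m+1}}^\perp v\|_2^2 = \|v\|_2^2 - \langle \bh_{m+1}, v\rangle^2/\|\bh_{m+1}\|_2^2$.

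To prove the stronger statement, I would substitute the representation $\bg_{m+1}^s - \bg_{m+1} = -\mu_{m+1}\bg_{m+1} - \beta_{m+1}\boeta_{m+1} - \gamma_{m+1}\bu_{m+1}$ from $\mathcal{H}_{m+1}$ claim \emph{(i)}, and use the elementary bound (obtained by integrating $\sigma'$ along a segment and applying the polynomial growth assumption) $|\sigma(x)-\sigma(y)| \le C_\sigma(1+|x|^{k-1}+|y|^{k-1})|x-y|$ to deduce
\begin{equation*}
\frac{1}{d_{m+1}}\|\sigma(\bg_{m+1}^s) - \sigma(\bg_{m+1})\|_2^2 \le \frac{C_\sigma^2}{d_{m+1}}\sum_{i=1}^{d_{m+1}} \big(1+|g_{m+1,i}|^{k-1}+|g_{m+1,i}^s|^{k-1}\big)^2 (g_{m+1,i}^s - g_{m+1,i})^2.
\end{equation*}
I would then split the right-hand side according to the three components of the perturbation. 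For the $\mu_{m+1}$ and $\gamma_{m+1}$ contributions, the squared prefactors are $o_P(1)$ by $\mathcal{H}_{m+1}$ claim \emph{(iii)}, while (after expanding the polynomial factor via $(1+|g|^{k-1}+|g^s|^{k-1})^2 \le C_k(1+|g|^{2(k-1)}+|g^s|^{2(k-1)})$, writing $g_i^s = g_i+\Delta_i$, and bounding $|\Delta_i|^{2(k-1)}$ componentwise) the remaining empirical averages are $O_P(1)$ by the law of large numbers, since conditional on $\bh_m$ the entries of $\bg_{m+1}$ are i.i.d. centered Gaussians of $O_P(1)$ variance and $\bu_{m+1}$ is standard Gaussian independent of $\mathcal{F}_{m+1}$ by $\mathcal{H}_{m+1}$ claim \emph{(i)}.

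The main obstacle is the $\beta_{m+1}$ term, since $\beta_{m+1} = O_P(1)$ is not vanishing. The resolution exploits the fact that $\|\boeta_{m+1}\|_2 = O_P(1)$ by \cref{lemma:Pconvergence}, so the mass of $\boeta_{m+1}$ is spread across its $d_{m+1}$ coordinates. Using standard Gaussian maxima bounds one has $\max_i|g_{m+1,i}| = O_P(\sqrt{\log d_{m+1}})$, and combining with $\max_i|g_{m+1,i}^s - g_{m+1,i}| \le |\mu_{m+1}|\max_i|g_{m+1,i}| + |\beta_{m+1}|\|\boeta_{m+1}\|_2 + |\gamma_{m+1}|\max_i|u_{m+1,i}| = O_P(\sqrt{\log d_{m+1}})$ yields $\max_i(1+|g_{m+1,i}|^{k-1}+|g_{m+1,i}^s|^{k-1})^2 = O_P((\log d_{m+1})^{k-1})$. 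Together with $\sum_i \eta_{m+1,i}^2 = \|\boeta_{m+1}\|_2^2 = O_P(1)$, this gives
\begin{equation*}
\frac{\beta_{m+1}^2}{d_{m+1}}\sum_i \big(1+|g_{m+1,i}|^{k-1}+|g_{m+1,i}^s|^{k-1}\big)^2 \eta_{m+1,i}^2 \le \frac{O_P((\log d_{m+1})^{k-1})}{d_{m+1}} = o_P(1),
\end{equation*}
since a polylogarithmic factor is dominated by $d_{m+1}$. Combining the three bounds proves the stronger statement and hence the lemma.
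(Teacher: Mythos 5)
Your proposal is correct, but it handles the crucial step differently from the paper. The paper does not attack the term $\beta_{m+1}\boeta_{m+1}$ (whose coefficient is only $O_P(1)$) head-on: it performs a second application of \cref{lemma:gaussian-conditioning}, this time with respect to $\bW_{m+2}$, writing $\boeta_{m+1}=\bD_{\sigma}^{m+1}\bW_{m+2}^{\top}\boeta_{m+2}$ and decomposing it into a component along $\bD_{\sigma}^{m+1}\bh_{m+1}$ with coefficient $\kappa_{m+1}=o_P(1)$ plus a fresh Gaussian direction $\bD_{\sigma}^{m+1}\bar\bu_{m+2}$ carrying the explicit factor $\beta_{m+1}\bar\gamma_{m+2}/\sqrt{d_{m+1}}=o_P(1)$; after this, every perturbation of $\bg_{m+1}$ has a vanishing coefficient multiplying a vector with i.i.d.\ coordinates, and the conclusion follows from the polynomial-Lipschitz bound $|\sigma(x)-\sigma(y)|\le C_{\sigma}|x-y|(1+|x|^{k-1}+|y|^{k-1})$ together with a law-of-large-numbers argument parallel to \cref{lemma:hk-1_sigma}. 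You instead keep the raw representation from $\mathcal{H}_{m+1}$ claim \emph{(i)} and neutralize the $\beta_{m+1}\boeta_{m+1}$ term deterministically, using $\|\boeta_{m+1}\|_2^2=O_P(1)$ (so its per-coordinate mass is summable) together with $O_P(\sqrt{\log d_{m+1}})$ maxima for the Gaussian coordinates, so that the $1/d_{m+1}$ normalization absorbs everything up to polylogarithmic factors; you also prove the slightly stronger statement $d_{m+1}^{-1}\|\sigma(\bg_{m+1}^s)-\bh_{m+1}\|_2^2\toP 0$, from which both claims follow cleanly via \cref{lemma:Pconvergence}. Your route is more elementary (no extra conditioning, no uniform LLN), at the price of heavier moment bookkeeping: the mixed terms in which $|g_{m+1,i}^s|^{2(k-1)}$ contains powers of $\eta_{m+1,i}$ are only sketched in your write-up and must be dispatched by the same $\max_i|\eta_{m+1,i}|\le\|\boeta_{m+1}\|_2$ trick, but this goes through, so the gap is expository rather than substantive. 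The paper's conditioning route keeps the argument uniform with the rest of the induction (the same machinery used for $\beta_{m+1}$ and for the base case) and avoids any logarithmic factors.
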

The proof of \cref{lemma:hk-sigma} is deferred to Appendix \ref{sec:proof-of-lemma:hk-sigma}. We note that $\mathcal{H}_{m + 1}$ claim \emph{(iv)} is a direct consequence of \cref{lemma:hk-sigma}. By induction, we have completed the proof of $\mathcal{H}_i$ for all $i \in [l]$. 

\subsubsection*{Back to the proof of the theorem}

Next, we will apply results from $\mathcal{H}_l$ to prove \cref{thm:multi-layer}. By $\mathcal{H}_l$ claim \emph{(i)}, 
\begin{align*}
	\bg_{l}^s = (1 - \mu_l) \bg_l - \beta_l \bD_{\sigma}^l \bW_{l + 1}^{\top} - \gamma_l \bu_l, 
\end{align*}
Using our modeling assumption and $\mathcal{H}_l$ claim \emph{(i)}, we obtain that $(\bg_l, \bD_{\sigma}^l, \bu_l)$ is independent of $\bW_{l + 1}$.
By $\mathcal{H}_l$ claim  \emph{(iii)}, we have $\mu_l = o_P(1)$, $\beta_l / \sqrt{d_l} = o_P(1)$ and $\gamma_l = o_P(1)$. We define $\zeta_l := \beta_l / \sqrt{d_l}$, $F_l: \RR^{d_l} \to \RR$, such that $F_l(\by) := \sum_{i = 1}^{d_l} W_{l + 1, i} \sigma(y_i)$. Then we have
\begin{align*}
	 F_l(\bg_l^s) - F_l(\bg_l) = \sum_{i = 1}^{d_l} W_{l + 1, i} \big( \sigma((1 - \mu_l) g_{l,i} - \zeta_l \sigma'(g_{l,i}) \sqrt{d_l}W_{l + 1, i} - \gamma_l u_{l,i}) - \sigma(g_{l,i}) \big).
\end{align*} 
Let $z_i := \sqrt{d_{l}} W_{l + 1, i}$, then $\{z_i\}_{i \in [d_l]} \iidsim \normal(0,1)$. We define $\bg_l = \nu_l \bz_l$, where $\nu_l := \sqrt{\|\bh_{l - 1}\|_2^2 / d_{l - 1}}$, $\bz_l = \nu_l^{-1}\bW_l \bh_{l - 1}$. Since $\bW_l$ is independent of $\bh_{l - 1}$, we have $\bz_l \sim \normal(\mathbf{0}, \id_{d_l})$ and is independent of $\bh_{l - 1}$. Since $\bW_{l + 1}$ is independent of $(\bg_l, \bh_{l - 1})$, we conclude that it is also independent of $(\bz_l, \bh_{l - 1})$. By $\mathcal{H}_l$ claim $\emph{(i)}$, we know that $\bu_l$ is independent of $(\bg_l, \bh_{l - 1}, \bW_{l + 1})$, thus we obtain that $\bu_l, \sqrt{d_l}\bW_{l + 1}, \bz_l \iidsim  \normal(\mathbf{0}, \id_{d_l})$. Furthermore, $(\bu_l, \sqrt{d_l} \bW_{l + 1}, \bz_l)$ is independent of $\bh_{l - 1}$. 

By \cref{lemma:Pconvergence}, $\nu_l$ converges in probability to a positive constant $H_{l - 1}$. For $\btheta \in \RR^4$, we define
\begin{align*}
	& h_{\btheta}^{(l + 1)}(z, z_l, u_l) := z\big(\sigma((1 - \theta_1) \theta_2 z_l - \theta_3 \sigma'(\theta_2 z_l) z - \theta_4 u) - \sigma(\theta_2 z_l)\big), \\
	& \bar h_{\btheta}^{(l + 1)}(z, z_l, u_l) := z\big(\sigma((1 - \theta_1) \theta_2 z_l - \theta_3 \sigma'(H_{l - 1} z_l) z - \theta_4 u) - \sigma(\theta_2 z_l)\big).
\end{align*}
For $\btheta \in \RR^4$, we define the empirical processes $\GG_d^{(l + 1)}, \bar\GG_d^{(l + 1)}$ indexed by $\btheta$ as
\begin{align*}
	& \GG_d^{(l + 1)}(\btheta) := \frac{1}{\sqrt{d_l}} \sum_{i = 1}^{d_l} \big( h_{\btheta}(z_i, z_{l,i}, u_{l,i}) - \E[h_{\btheta}(z_i, z_{l,i}, u_{l,i})] \big), \\
	& \bar\GG_d^{(l + 1)}(\btheta) := \frac{1}{\sqrt{d_l}} \sum_{i = 1}^{d_l} \big(\bar h_{\btheta}(z_i, z_{l,i}, u_{l,i}) - \E[\bar h_{\btheta}(z_i, z_{l,i}, u_{l,i})] \big),
\end{align*}
where the expectations are taken over $\{(z_i, z_{l,i}, u_{l,i})\}_{i \in [d_l]} \iidsim \normal(\mathbf{0}, \id_3)$. For $\btheta, \bar\btheta \in \RR^4$, we define the covariance function $\bar c^{(l + 1)}(\btheta, \barbtheta)$ as
\begin{align*}
	\bar c^{(l + 1)}(\btheta, \bar\btheta) := \E[\bar h_{\btheta}(z, z_l, u_l)\bar  h_{\barbtheta}(z, z_l, u_l)] - \E[\bar h_{\btheta}(z, z_l, u_l) ] \E[\bar h_{\barbtheta}(z, z_l, u_l)], 
\end{align*}
where the expectations are taken over $(z, z_l , u_l) \iidsim \normal(0,1)$. Using the assumptions imposed on $\sigma, \sigma'$, we can apply the dominated convergence theorem and conclude that $\bar c^{(l + 1)}(\cdot, \cdot)$ is a continuous function. We denote by $\bar \GG^{(l + 1)}$ the Gaussian process with mean zero and covariance function $\bar c^{(l + 1)}$. We define $\Omega_{l + 1} := \{\bx \in \RR^4: \|\bx\|_{\infty} \leq 2 H_{l - 1} \}$

Similar to the proof of \cref{lemma:Donsker2}, we can prove that equipped with the supremum norm $\ell^{\infty}(\Omega_{l + 1})$, $\{\bar \GG_d^{(l + 1)}\}_{d \geq 1}$ converges weakly in $C(\Omega_{l + 1})$ to $\bar \GG^{(l + 1)}$. We skip the detailed proof here for the sake of compactness. For $\btheta, \barbtheta \in \Omega_{l + 1}$, we define $\rho(\btheta, \barbtheta) := \E[(\bar \GG^{(l + 1)}(\btheta) - \bar \GG^{(l + 1)}(\barbtheta))^2]^{1/2}$. Then again by \cite[Lemma 18.15]{van2000asymptotic}, we can and will assume that $\bar \GG^{(l + 1)}$ almost surely has $\rho$-continuous sample path. 

For $H_{l - 1} > \ep > 0$, we let 
\begin{align*}
	B_{\ep} := \{\bx \in \RR^4: |x_1| < \ep, |x_2 - H_{l - 1}| < \ep, |x_3| < \ep, |x_4| < \ep\}, \qquad S^{(l + 1)}_{\ep}(\GG) := \sup_{\btheta \in B_{\ep}} |\GG(\btheta)|.
\end{align*}
Note that $S_{\ep}^{(l + 1)}$ is continuous with respect to $\ell^{\infty}(\Omega_{l + 1})$, thus $S_{\ep}^{(l + 1)}(\bar \GG_d^{(l + 1)}) \overset{d}{\to} S^{(l + 1)}_{\ep}(\bar \GG^{(l + 1)})$. Recall that we have proved $\mu_l = o_P(1), \nu_{l} = H_{l - 1} + o_P(1)$, $\zeta_l = O_P(d_l^{-1/2})$, $\gamma_l = o_P(1)$ and $z_i = \sqrt{d_l} W_{l + 1, i}$. Therefore, 
\begin{align*}
	& \big| F_l(\bg_l^s) - F_l(\bg_l) + \sqrt{d_l} \zeta_l  \E[\sigma'(\nu_l z_l) \sigma'((1 - \mu_l) \nu_l z_{l} - \zeta_l \sigma'(\nu_lz_{l}) z - \gamma_l u_{l} )] \big| \\
	\overset{(i)}{=} & \Big| \frac{1}{\sqrt{d_l}} \sum_{i = 1}^{d_l} z_i \big( \sigma((1 - \mu_l) \nu_l z_{l,i} - \zeta_l \sigma'(\nu_lz_{l,i}) z_i - \gamma_l u_{l,i} ) - \sigma(\nu_l z_{l,i}) \big) - \\
	&\sqrt{d_l} \E[z \big( \sigma((1 - \mu_l) \nu_l z_{l} - \zeta_l \sigma'(\nu_lz_{l}) z - \gamma_l u_{l} ) - \sigma(\nu_l z_{l}) \big)]  \Big| \\
	= & \Big| \frac{1}{\sqrt{d_l}} \sum_{i = 1}^{d_l} (h_{(\mu_l, \nu_l, \zeta_l, \gamma_l)}(z_i, z_{l,i}, u_{l,i}) - \E[h_{(\mu_l, \nu_l, \zeta_l, \gamma_l)}(z, z_{l}, u_{l})]) \Big| \\
	\overset{(ii)}{\leq} & S_{\ep}^{(l + 1)}(\bar\GG_d^{(l + 1)}) + \delta_{\ep}(d), 
\end{align*}
where $\delta_{\ep}(d) \toP 0$ as $d \to \infty$. In the above equations, the expectations are taken over $\{z_i, z_{l,i}, u_{l,i}, z, z_l, u_l\}_{i \in [d_l]} \iidsim \normal(0,1)$,  \emph{(i)} is by Stein's lemma, and \emph{(ii)} is by an argument that is similar to the proof of \cref{lemma:approx}. More precisely, we show that as $d \to \infty$
\begin{align*}
	\GG_d^{(l + 1)}((\mu_l, \nu_l, \zeta_l, \gamma_l)) = \bar\GG_d^{(l + 1)}((\mu_l, \nu_l, \zeta_l, \gamma_l)) + o_P(1).
\end{align*}
We ignore the proof of this part for the sake of simplicity, as it is basically the same as the proof of \cref{lemma:approx}. 

Since $\bar\GG^{(l + 1)}$ has $\rho$-continuous sample path, one can verify that $S_{\ep}^{(l + 1)}(\bar\GG^{(l + 1)}) \toP 0$ as $\ep \to 0^+$. 
For any $\ep' > 0$, we first choose $\ep$ small enough, such that $\P(S_{\ep}^{(l + 1)}(\bar\GG^{(l + 1)}) \geq \ep' / 3) \leq \ep' / 3$. Since $S_{\ep}^{(l + 1)}(\bar\GG_d^{(l + 1)}) \overset{d}{\to} S^{(l + 1)}_{\ep}(\bar\GG^{(l + 1)})$, and $\delta_{\ep}(d) \toP 0$ as $d \to \infty$, there exists $d_{\ep, \ep'} \in \NN_+$, such that for all $d \geq d_{\ep, \ep'}$, $\P(|\delta_{\ep}(d)| \geq \ep' / 3) \leq \ep' / 3$ and $\P(| S_{\ep}^{(l + 1)}(\bar\GG_d^{(l + 1)})| \geq \ep' / 3) \leq \P(|S_{\ep}^{(l + 1)}(\bar\GG^{(l + 1)})| \geq \ep' / 3) + \ep' / 3$. Combining these results, we conclude  that for all $d \geq d_{\ep, \ep'}$, 
\begin{align*}
	\P\big( \big| F_l(\bg_l^s) - F_l(\bg_l) + \sqrt{d_l} \zeta_l  \E[\sigma'(\nu_l z_l) \sigma'((1 - \mu_l) \nu_l z_{l} - \zeta_l \sigma'(\nu_lz_{l}) z - \gamma_l u_{l} )] \big| \geq \ep' \big) \leq \ep',
\end{align*}
Application of the dominated convergence theorem shows that $\E[\sigma'(\nu_l z_l) \sigma'((1 - \mu_l) \nu_l z_{l} - \alpha_l \sigma'(\nu_lz_{l}) z - \gamma_l u_{l} )] = \E[\sigma'(H_{l - 1} z)^2] + o_P(1)$ as $d \to \infty$. Recall that we have proved $\beta_l = O_P(1)$, thus 
\begin{align*}
	f(\bx^s) - f(\bx) = F_l(\bg_l^s) - F_l(\bg_l) = -\beta_l\E[\sigma'(H_{l - 1} z)^2] + o_P(1). 
\end{align*}
Since $\sigma$ is not a constant function, $\sigma'$ is almost everywhere continuous, and $H_{l - 1} > 0$, we then have $\E[\sigma'(H_{l - 1} z)^2] > 0$. Recall that we have shown that there exist random variables $\{\R_m\}_{2 \leq m \leq l}$, the distributions of which depend uniquely on $(\sigma, [l])$. In addition, we have shown that there exist positive constants $\{\alpha_m\}_{2 \leq m \leq l}$, with the values of which depend uniquely on $(\sigma, [l])$, such that $\beta_m \geq \alpha_m \beta_{m - 1} + R_m + o_P(1)$. Furthermore, using the law of large numbers, we have $\beta_1 = \tau S_0 + o_P(1)$. By \cref{lemma:Pconvergence}, $F_l(\bg_l)$ converges in distribution to a Gaussian distribution with mean zero and variance depending only on $(\sigma, l)$ (especially, independent of $\{s_d\}_{d \in \NN_+}$). Therefore, we deduce that
\begin{align*}
	\lim_{S_0 \to \infty}\liminf_{d \to \infty} \P(\sign(f(\bx)) \neq \sign(f(\bx^s))) = 1.
\end{align*}
Finally, we prove \cref{thm:multi-layer} via a standard diagonal argument. Note that for all $n \in \NN_+$, there exists $S_0^n > 0$ and $d_n \in \NN_+$, such that if we set $s_d = S_0^n$ for all $d \in \NN_+$, then for all $d \geq d_n$,
\begin{align*}
	\P(\sign(f(\bx^s)) \neq \sign(f(\bx))) \geq 1- n^{-1}. 
\end{align*} 
Without loss of generality, we assume that $d_{n + 1} \geq d_n$, $S_0^n / \sqrt{d_n} < n^{-1}$ and $S_0^n < \xi_{d_n}$. Indeed, to achieve these conditions, we simply need to take $d_n$ large enough. Then we set $s_d = S_0^n$ if and only if $d_n \leq d < d_{n + 1}$. Under such choice of $\{s_d\}_{d \in \NN_+}$, for all $d_{n + 1} > d \geq d_n$, we have 
\begin{align*}
	\frac{s_d}{\sqrt{d}} = \frac{S_0^n}{\sqrt{d}} \leq \frac{S_0^n}{\sqrt{d_n}} \leq \frac{1}{n}, \qquad \P(\sign(f(\bx^s)) \neq \sign(f(\bx))) \geq 1- n^{-1}, \qquad s_d \leq \xi_{d_n} \leq \xi_{d}.
\end{align*}
Note that $n$ is arbitrary, thus by combining the above results with the first claim of the theorem, we complete the proof of the second claim. 


\section*{Acknowledgements}
This work was supported by the NSF through award DMS-2031883, the Simons Foundation through 
Award 814639 for the Collaboration on the Theoretical Foundations of Deep Learning, 
the NSF grant CCF-2006489, the ONR grant N00014-18-1-2729.

\bibliographystyle{alpha}
\bibliography{bib}

\newpage
\begin{appendices}
\section{Proofs of the supporting lemmas}
\subsection{Proof of \cref{lemma:gaussian-conditioning}}\label{sec:proof-of-lemma:gaussian-conditioning}
We prove \cref{lemma:gaussian-conditioning} in this section. Note that $\bX = \Pi_{\bA_1} \bX + \Pi_{\bA_1}^{\perp} \bX$. Let $\bX'$ be an independent copy of $\bX$ that is independent of $(\bA_1, \bA_2, \bX, \bZ_1, \bZ_2)$. We consider the matrix $\bX_1 := \Pi_{\bA_1}^{\perp} \bX + \Pi_{\bA_1} \bX'$. Conditioning on any value of $(\bA_1, \bA_1 \bX, \bZ_1, \bZ_2)$, the conditional distribution of $\bX_1$ is equal to the distribution of $\bX$. Therefore, we conclude that $\bX_1 \overset{d}{=} \bX$ and $\bX_1$ is independent of $(\bA_1, \bA_1 \bX, \bZ_1, \bZ_2)$. Notice that $\bX$ admits the decomposition 
\begin{align*}
	\bX =& \Pi_{\bA_1} \bX + \Pi_{\bA_1}^{\perp} \bX \Pi_{\bA_2} + \Pi_{\bA_1}^{\perp} \bX \Pi_{\bA_2}^{\perp} \\
	= & \Pi_{\bA_1} \bX + \Pi_{\bA_1}^{\perp} \bX \Pi_{\bA_2} + \Pi_{\bA_1}^{\perp} \bX_1 \Pi_{\bA_2}^{\perp}.
\end{align*}
We let $\bX''$ be an independent copy of $\bX$ that is independent of $(\bA_1, \bA_2, \bX, \bX', \bZ_1, \bZ_2)$. Define
\begin{align*}
	\bX_2 = \Pi_{\bA_1}^{\perp} \bX_1 \Pi_{\bA_2}^{\perp} + \Pi_{\bA_1} \bX'' + \Pi_{\bA_1}^{\perp} \bX'' \Pi_{\bA_2}.
\end{align*}
Using distributional property of gaussian ensemble and the fact that $\bX_1$ is independent of $(\bA_1, \bA_1 \bX, \bZ_1, \bZ_2)$ (thus is independent of $(\bA_1, \bA_2, \bA_1 \bX, \bZ_1, \bZ_2)$), we can conclude that conditioning on any specific value of $(\bA_1, \bA_2, \bA_1 \bX, \bA_1 \bX_1,  \bX_1 \bA_2,  \bZ_1, \bZ_2)$, the conditional distribution of $\bX_2$ is equal to the distribution of $\bX$. Therefore, we deduce that $\bX_2 \perp (\bA_1, \bA_2, \bA_1 \bX, \bA_1 \bX_1,  \bX_1 \bA_2,  \bZ_1, \bZ_2)$. Notice that 
\begin{align*}
	\bX \bA_2 = &  \Pi_{\bA_1} \bX \bA_2 + \Pi_{\bA_1}^{\perp} \bX \bA_2 \\
	= & \Pi_{\bA_1} \bX \bA_2 + \bX_1 \bA_2 - \Pi_{\bA_1}\bX_1 \bA_2,
\end{align*}
thus $\bX_2 \perp (\bA_1 \bX, \bX \bA_2, \bZ_1)$. Combining the above analysis, we obtain that
\begin{align*}
	\bX = \Pi_{\bA_1} \bX + \Pi_{\bA_1}^{\perp} \bX \Pi_{\bA_2} + \Pi_{\bA_1}^{\perp} \bX_2 \Pi_{\bA_2}^{\perp},
\end{align*}
with $\bX_2$ independent of $\bY$. Thus, we have completed the proof of the lemma.

\subsection{Proof of \cref{lemma:control-gd-norm}}\label{sec:proof-of-lemma-control-gd-norm}
Conditioning on $\bW\bx$, the following two matrices are equal in distribution: 
\begin{align*}
	\bW \overset{d}{=} \frac{1}{d} \bW \bx \bx^{\top} + \tilde{\bW} \Pi_{\bx}^{\perp},
\end{align*}
where $\tilde{\bW}$ is an independent copy of $\bW$ which is independent of everything else, $\Pi_{\bx} \in \RR^{d \times d}$ is the projection operator projecting onto the linear subspace spanned by $\bx$, and $\Pi_{\bx}^{\perp} := \id_d - \Pi_{\bx}$. Using this decomposition, we can express the distribution of the gradient as
\begin{align*}
	\nabla f(\bx) \overset{d}{=} \Pi_{\bx}^{\perp} \tilde{\bW}^{\top} \bD_{\sigma} \ba + \frac{1}{d} \bx \bx^{\top} \bW^{\top} \bD_{\sigma} \ba. 
\end{align*}
By assumption, almost everywhere we have $|\sigma'(x)| \leq C_{\sigma}(1 + |x|^{k - 1}) $. Notice that $\bw_i^{\top} \bx \iidsim \normal(0,1)$ for $i \in [m]$. Therefore, we can apply Chebyshev's inequality and conclude that there exists a constant $C_1 > 0$ which depends only on $\sigma$, such that with probability at least $1 - \delta / 4$, 
\begin{align*}
	 \Big|\frac{1}{m}\sum_{i = 1}^m (\bw_i^{\top} \bx)^2\sigma'(\bw_i^{\top}\bx)^2 - \E[G^2\sigma'(G)^2] \Big| \leq \sqrt{\frac{C_1}{m\delta}},
\end{align*}
where $G \sim \normal(0,1)$. Conditioning on $\bg = \bW \bx$, we have $\bx^{\top} \bW^{\top}\bD_{\sigma} \ba \overset{d}{=} \normal(0,\sum_{i = 1}^m (\bw_i^{\top} \bx)^2\sigma'(\bw_i^{\top}\bx)^2 / m)$, then using Gaussian concentration, we conclude that there exists a numerical constant $C_2 > 0$, such that  with probability at least $1 - \delta / 4$, 
\begin{align*}
	|\bx^{\top} \bW^{\top}\bD_{\sigma} \ba| \leq \sqrt{ C_2 (\log (1 / \delta) + 1) \times \frac{1}{m} \sum_{i = 1}^m (\bw_i^{\top} \bx)^2\sigma(\bw_i^{\top}\bx)^2}.	
\end{align*}
Note that $\|\Pi_{\bx}^{\perp} \tilde{\bW}^{\top} \bD_{\sigma} \ba\|_2^2 \leq \|\tilde{\bW}^{\top} \bD_{\sigma} \ba\|_2^2$. Let $z_1, \cdots, z_d \iidsim \normal(0,1)$, then
\begin{align*}
	\|\tilde{\bW}^{\top} \bD_{\sigma} \ba\|_2^2 \overset{d}{=} \big(\sum_{i = 1}^m \sigma'(\bw_i^{\top} \bx)^2 a_i^2 \big) \times \big( \frac{1}{d}\sum_{j = 1}^d z_j^2 \big).
\end{align*}
Using Bernstein's inequality, with probability at least $1 - \delta / 4$, $|\frac{1}{d}\sum_{j = 1}^d z_j^2 - 1| \leq C_3(\log(1 / \delta) + 1) / \sqrt{ d}$ for some absolute constant $C_3$. Again by Chebyshev's inequality, with probability at least $1 - \delta / 4$, $|\sum_{i = 1}^m \sigma'(\bw_i^{\top} \bx)^2 a_i^2 - \E[\sigma'(G_1)^2G_2^2 ]| \leq \sqrt{C_4 / m\delta}$, where $G_1, G_2 \iidsim \normal(0,1)$ and $C_4 > 0$ is a constant depending only on $\sigma$. Then we combine the above results, and conclude that there exists $C > 0$ that depends only on $\sigma$, such that with probability at least $1 - \delta$, 
\begin{align*}
	\|\nabla f(\bx)\|_2 \leq C(1 + d^{-1/2}\log (1 / \delta) + (m\delta)^{-1/2} + (md\delta)^{-1/2}\log (1 / \delta) ),
\end{align*}  
thus concluding the proof of the lemma.

\subsection{Proof of \cref{lemma:coefficients-op1}}\label{sec:proof-of-lemma:coefficients-op1}

By assumption, almost everywhere we have $|\sigma'(x)| \leq C_{\sigma}(1 + |x|^{k - 1})$. Let $g_1$ be the first coordinate of $\bg$, then $\E[\sigma'(g_1)^2g_1^2]< \infty$. By the law of large numbers, $\|\bg^{\top} \bD_{\sigma}\|_2^2 / m = \E[g_1^2 \sigma'(g_1)^2] + o_P(1)$. Conditioning on $\bg^{\top} \bD_{\sigma}$, we have $\bg^{\top} \bD_{\sigma} \ba \overset{d}{=} \normal(\textbf{0}, \|\bg^{\top} \bD_{\sigma}\|_2^2 / m)$, thereby $\bg^{\top} \bD_{\sigma} \ba = O_P(1)$. By assumption we have $s_d \rightarrow S_0$, then we can conclude that $\tau s_d \bg^{\top} \bD_{\sigma} \ba / d= o_P(1)$. This proves $\mu = o_P(1)$.

Similarly, we apply the law of large numbers, and obtain that $\|\bD_{\sigma} \ba\|_2 = \E[\sigma'(g_1)^2]^{1/2} + o_P(1)$. By assumption, $\sigma$ is not a constant function and $\sigma'$ is almost everywhere continuous, thus we have $\E[\sigma'(g_1)^2  ] > 0$. Given $\bD_{\sigma} \ba$, the conditional distributions of $\bar\bW^{\top} \bD_{\sigma} \ba$ and $\bar\bW_c^{\top} \bD_{\sigma} \ba$ are both $\normal(\textbf{0}, \id_d\|\bD_{\sigma} \ba\|_2^2 / d)$, thus by the law of  large numbers and Cauchy–Schwarz inequality, we can conclude that
\begin{align*}
	 \|\bar\bW^{\top} \bD_{\sigma} \ba\|_2^2 = O_P(1), \qquad \langle \bar{\bW}_c^{\top} \bD_{\sigma} \ba, \bar\bW^{\top} \bD_{\sigma} \ba \rangle = O_P(1).
\end{align*}
Combining the equations above gives $\beta = o_P(1)$ and $\gamma = o_P(1)$ as $m,d \to \infty$. 

\subsection{Proof of \cref{lemma:Donsker}} \label{sec:proof-of-lemma-Donsker}
By our assumptions imposed on $\sigma$, we see that there exists a deterministic constant $C_0 > 0$, which is a function of the activation function $\sigma$ only, such that  for all $\btheta, \barbtheta \in \Omega$,
\begin{align*}
	|h_{\btheta}(b,g,u) - h_{\barbtheta}(b,g,u)| \leq  C_0\|\btheta - \barbtheta\|_2|b| \sqrt{g^2 + b^2 + b^2g^{2k} + u^2} \times \big(1 + |g|^k + |b|^k + |b|^k|g|^{k^2} + |u|^k \big).
\end{align*}
We define $m(b,g,u) := C_0|b| \sqrt{g^2 + b^2 + b^2g^{2k} + u^2} \times \big(1 + |g|^k + |b|^k + |b|^k|g|^{k^2} + |u|^k \big)$. One can verify that $\|m\|_2^2 := \E[m(b,g,u)^2] < \infty$, where the expectation is taken over $b,g,u \iidsim \normal(0,1)$. 

Let $\cF := \{h_{\btheta}: \btheta \in \Omega\}$. For the sake of completeness, we reproduce the definition of bracketing number introduced in \cite{van2000asymptotic}. Given two functions $e_1,e_2: \RR^3 \rightarrow \RR$, we define the bracket $[e_1,e_2]$ as the set of all functions $h$ such that $e_1(z) \leq h_z \leq e_2(z)$ for all $z \in \Omega$. An $\epsilon$-bracket in $L_2$ is a bracket $[e_1,e_2]$ such that $\E[(e_1(b,g,u) - e_2(b,g,u))^2]^{1/2} < \epsilon$, where again the expectation is taken over $b,g,u \iidsim \normal(0,1)$. The \emph{bracketing number} $N_{[\,]}(\ep, \cF, L_2)$ is the minimum number of $\ep$-brackets needed to cover $\cF$. We define the \emph{bracketing integral} as 
\begin{align*}
	J_{[\,]}(\delta, \cF, L_2) = \int_0^{\delta}\sqrt{\log N_{[\,]}(\ep, \cF, L_2)} \dd \ep.
\end{align*}
The following lemma is from \cite[Theorem 19.5]{van2000asymptotic}.
\begin{lemma}\label{lemma:vandervaart}
	If $J_{[\,]}(1, \cF, L_2) < \infty$, then $\GG_m$ converges weakly in $C(\Omega)$ to $\GG$. 
\end{lemma}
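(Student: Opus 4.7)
The statement is the classical uniform central limit theorem under bracketing entropy (a form of Donsker's theorem). Since it is cited as Theorem 19.5 in van der Vaart, the plan is to reconstruct the textbook proof, which combines marginal convergence with an equicontinuity estimate derived from the bracketing integral.

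First, I would verify convergence of finite-dimensional distributions. For any finite collection $\btheta_1,\dots,\btheta_k\in\Omega$, the vector $(\GG_m(\btheta_1),\dots,\GG_m(\btheta_k))$ is a centered average of $m$ i.i.d.\ random vectors in $\RR^k$. Each coordinate $h_{\btheta_i}$ has finite second moment (it is dominated by the square-integrable envelope $m(b,g,u)$ produced just above the lemma), so the classical multivariate CLT yields convergence to a centered Gaussian vector with covariance $(c(\btheta_i,\btheta_j))_{i,j\le k}$, matching the finite-dimensional distributions of $\GG$.

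Next, I would establish asymptotic tightness in $C(\Omega)$ under the intrinsic semimetric $\rho(\btheta,\barbtheta)=\|h_\btheta-h_{\barbtheta}\|_{L_2}$. By Prohorov's theorem, combined with the finite-dimensional convergence above, it suffices to show the asymptotic equicontinuity condition
\[
\lim_{\delta\downarrow 0}\,\limsup_{m\to\infty}\P\!\left(\sup_{\rho(\btheta,\barbtheta)<\delta}|\GG_m(\btheta)-\GG_m(\barbtheta)|>\eta\right)=0
\qquad\text{for every }\eta>0.
\]
The key tool is a bracketing maximal inequality (Ossiander/Pollard style): for any sub-class $\cF'\subseteq\cF$ with envelope $F'$,
\[
\EE\sup_{h\in\cF'}|\GG_m h|\;\lesssim\;J_{[\,]}(\|F'\|_{L_2},\cF',L_2).
\]
Applied to the class of differences $\{h_\btheta-h_{\barbtheta}:\rho(\btheta,\barbtheta)<\delta\}$, this yields a bound of the form $J_{[\,]}(\delta,\cF,L_2)$, which tends to $0$ as $\delta\downarrow 0$ by the hypothesis $J_{[\,]}(1,\cF,L_2)<\infty$ and dominated convergence on the bracketing integral. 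Markov's inequality then gives the equicontinuity.

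Combining the two ingredients, the sequence $\{\GG_m\}$ is asymptotically tight in $C(\Omega)$ and its finite-dimensional distributions converge to those of $\GG$; this forces weak convergence $\GG_m\Rightarrow\GG$ in $C(\Omega)$ with the supremum norm (the limit is uniquely identified by its fidis). The main technical obstacle is the maximal inequality for bracketed empirical processes, whose proof is a chaining argument that successively refines an $\eps$-net of brackets and controls the resulting increments using Bernstein's inequality; since this is the standard content of van der Vaart Ch.~19, I would simply invoke it rather than reproduce it in detail.
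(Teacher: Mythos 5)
Your proposal is correct and amounts to the same approach as the paper: the paper establishes this lemma simply by invoking \cite[Theorem 19.5]{van2000asymptotic} (the bracketing/Ossiander uniform CLT), and your sketch—finite-dimensional convergence from the multivariate CLT under the square-integrable envelope, plus asymptotic equicontinuity via the bracketing maximal inequality and chaining—is exactly the standard proof of that cited theorem. There is no gap; the one ingredient you defer (the bracketing maximal inequality) is precisely what the paper also takes as given by citation.
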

By \cref{lemma:vandervaart}, to prove the theorem, we only need to show that the bracketing integral is finite. By \cite[Example 19.7]{van2000asymptotic}, there exists a numerical constant $K > 0$, such that
\begin{align*}
	N_{[\,]}(\ep \|m\|_2, \cF, L_2) \leq K\ep^{-3}. 
\end{align*}
It is not hard to see that there exists another constant $K_0 > 0$, which depends only on $(K, \|m\|_2)$, such that
\begin{align*}
	J_{[\,]}(1, \cF, L_2) \leq K_0\int_0^1(\log(1 / \epsilon) + 1)^{1/2} \dd \ep  \leq K_0\int_0^1(\ep^{-1} + 1)^{1/2} \dd \ep < \infty,
\end{align*}
which concludes the proof of the lemma. 

\subsection{Proof of \cref{lemma:Pconvergence}}\label{sec:proof-of-lemma:Pconvergence}
\subsubsection*{Proof of claims 1 and 2}
We first prove claims 1 and 2 via induction over $m$. For the base case $m = 1$, the claims hold by the law of large numbers and the assumption that $\sigma$ is not a constant function. Suppose the claims hold for $1 \leq m \leq m_0$, then we prove it also holds for $m = m_0 + 1$ by induction. Conditioning on $\bh_{m_0}$, notice that
\begin{align*}
	\bg_{m_0 + 1} \overset{d}{=} \sqrt{\|\bh_{m_0}\|_2^2 / d_{m_0}} \bz,
\end{align*}
where $\bz \sim \normal(\mathbf{0}, \id_{d_{m_0 + 1}})$ and is independent of $\bh_{m_0}$. Therefore, 
$$\frac{1}{d_{m_0 + 1}}\|\bg_{m_0 + 1}\|_2^2 \overset{d}{=} \frac{1}{d_{m_0}} \|\bh_{m_0}\|_2^2 \times \frac{1}{d_{m_0 + 1}} \| \bz_{m_0 + 1}\|_2^2,$$ 
which converges to some positive deterministic constant by the law of large numbers and induction hypothesis. Similarly, conditioning on $\bh_{m_0}$, we have
\begin{align*}
	\frac{1}{d_{ m_0 \scriptstyle + 1}} \|\bh_{m_0 + 1}\|_2^2 \overset{d}{=} \frac{1}{d_{m_0 + 1}} \sum_{i = 1}^{d_{m_0 + 1}} \sigma\big( (\|\bh_{m_0}\|_2^2 / d_{m_0})^{1/2} z_i \big)^2,
\end{align*}
where $z_i$ is the $i$-th entry of $\bz$. 
By our induction hypothesis, $\|\bh_{m_0}\|_2^2 / d_{m_0}$ converges in probability to some constant $H_{m_0} > 0$. Since almost everywhere $|\sigma'(x)| \leq C_{\sigma}(1 + |x|^{k - 1})$ and $\sigma$ is continuous, we can conclude that 
\begin{align*}
	\frac{1}{d_{m_0 + 1}} \sum_{i = 1}^{d_{m_0 + 1}} \sigma\big( (\|\bh_{m_0}\|_2^2 / d_{m_0})^{1/2} z_i \big)^2 = \frac{1}{d_{m_0 + 1}} \sum_{i = 1}^{d_{m_0+ 1}} \sigma\big( H_{m_0}^{1/2} z_i \big)^2 + o_P(1),
\end{align*}
which further converges in probability to some positive constant by the law of large numbers and the non-degeneracy assumption on $\sigma$. Thus, we have completed the proof of the first two claims by induction. 

\subsubsection*{Proof of claims 3,4 and 5}
Then we prove claims 3,4 and 5, again via induction. 
We start with the base case $m = l$. Conditioning on $\bh_{l - 1}$, 
\begin{align*}
	\|\boeta_l\|_2^2 \overset{d}{=} \frac{1}{d_l}\sum_{i = 1}^{d_l} |z^{(l + 1)}_i|^2 \sigma'\big( (\|\bh_{l - 1}\|_2^2 / d_{l - 1})^{1/2} z_i^{(l)} \big)^2,
\end{align*}
where $\bz^{(l)}, \bz^{(l + 1)} \iidsim \normal(\mathbf{0}, \id_{d_l})$ and are independent of $\bh_{l - 1}$. Recall that almost everywhere we have $|\sigma'(x)| \leq C_{\sigma}(1 + |x|^{k - 1})$. By Chebyshev's inequality and claim 1 of \cref{lemma:Pconvergence},
\begin{align*}
	\frac{1}{d_l}\sum_{i = 1}^{d_l} |z^{(l + 1)}_i|^2 \sigma'\big( (\|\bh_{l - 1}\|_2^2 / d_{l - 1})^{1/2} z_i^{(l)} \big)^2 = \E_{z \sim \normal(0,1)}[\sigma'\big((\|\bh_{l - 1}\|_2^2 / d_{l - 1})^{1/2}z \big)^2] + o_P(1),
\end{align*}
where the expectation on the right hand side is taken over $z \sim \normal(0,1)$. By claim 2, there exists a constant $H_{l - 1} > 0$, such that $\|\bh_{l - 1}\|_2^2 / d_{l - 1} \toP H_{l - 1} $. Since $\sigma'$ is almost everywhere continuous, and $|\sigma'(x)| \leq C_{\sigma}(1 + |x|^{k - 1})$, we can apply the dominated convergence theorem and conclude that as $d \to \infty$,
$$ \E_{z \sim \normal(0,1)}[\sigma'\big((\|\bh_{l - 1}\|_2^2 / d_{l - 1})^{1/2}z \big)^2] \toP  \E_{z \sim \normal(0,1)}[\sigma'\big(H_{l - 1} ^{1/2}z \big)^2].$$ 
In summary, we have $\|\boeta_l\|_2^2 \toP \E_{z \sim \normal(0,1)}[\sigma'\big(H_{l - 1} ^{1/2}z \big)^2]$, thus completing the proof of claim 3 for the base case. 

Then we consider $\bh_{l - 1}^{\top} \bW_l^{\top} \boeta_l$ and the proof of claim 5. Since $\bW_{l + 1}$ is independent of $\bD_{\sigma}^l \bW_l \bh_{l - 1}$, conditioning on $\bD_{\sigma}^l\bW_l \bh_{l - 1}$, we have
\begin{align*}
	\bh_{l - 1}^{\top} \bW_l^{\top} \boeta_l = \langle \bD_{\sigma}^l\bW_l \bh_{l - 1},   \bW_{l + 1}^{\top} \rangle \overset{d}{=} \normal(0, \|\bD_{\sigma}^l\bW_l \bh_{l - 1}\|_2^2 / d_l). 
\end{align*}
Conditioning on $\bh_{l - 1}$, we have
$$\frac{1}{d_l}\|\bD_{\sigma}^l\bW_l \bh_{l - 1}\|_2^2 \overset{d}{=} \frac{1}{d_l}\sum_{i = 1}^{d_l} \sigma'((\|\bh_{l - 1}\|_2^2 / d_{l - 1})^{1/2}z_i^{(l)})^2((\|\bh_{l - 1}\|_2^2 / d_{l - 1})^{1/2}z_i^{(l)})^2,$$ 
which converges in probability to $\E_{z \sim \normal(0,1)} [H_{l - 1}\sigma'(H_{l - 1}^{1/2}z)^2z^2]$ as $d \to \infty$, again by Chebyshev's inequality and the dominated convergence theorem. Therefore, as $d \to \infty$, 
$$\bh_{l - 1}^{\top} \bW_l^{\top} \boeta_l \overset{d}{\rightarrow} \normal(0, \E_{z \sim \normal(0,1)} [H_{l - 1}\sigma'(H_{l - 1}^{1/2}z)^2z^2]),$$ 
which implies that $\bh_{l - 1}^{\top} \bW_l^{\top} \boeta_l = O_P(1)$. We have completed the proof of claim 5 for the base case.

As the last step towards proving the base case, we consider the Euclidean norm of $\by_l$, i.e., claim 4. Notice that $\boeta_l$ depends on $\bW_l$ only through $\bW_l \bh_{l - 1}$. Therefore, conditioning on $\bW_l\bh_{l - 1}$
\begin{align*}
	\by_l =& \bW_l^{\top} \boeta_l \\
	\overset{d}{=} &  \Pi_{\bh_{l - 1}}^{\perp} \tilde{\bW}_l^{\top} \bD_{\sigma}^l \bW_{l + 1}^{\top} + \frac{\bh_{l - 1}}{\|\bh_{l - 1}\|_2^2} \langle \bW_l \bh_{l - 1}, \bD_{\sigma}^l \bW_{l + 1}^{\top} \rangle \\
	= & \tilde{\bW}_l^{\top} \bD_{\sigma}^l \bW_{l + 1}^{\top}+ \frac{\bh_{l - 1}}{\|\bh_{l - 1}\|_2^2}( \langle \bW_l \bh_{l - 1}, \boeta_l \rangle - \langle \tilde\bW_l \bh_{l - 1}, \boeta_l \rangle ),
\end{align*}
where $\tilde{\bW}_l$ is an independent copy of $\bW_l$ and is independent of everything else. By claim 2 of the lemma, the vector ${\bh_{l - 1}} /{\|\bh_{l - 1}\|_2^2}$ has Euclidean norm $O_P(d_{l - 1}^{-1/2})$. Conditioning on $(\bh_{l - 1}, \boeta_l)$, we have $\langle \tilde\bW_l \bh_{l - 1}, \boeta_l \rangle \overset{d}{=} \normal(0, \|\boeta_{l}\|_2^2 \|\bh_{l - 1}\|_2^2 / d_{l - 1})$, which is $O_P(1)$ as $\|\boeta_{l}\|_2^2$, $\|\bh_{l - 1}\|_2^2 / d_{l - 1}$ are both $O_P(1)$ by claim 1 and claim 3. As proven above, we have $\langle \bW_l \bh_{l - 1}, \boeta_l \rangle = O_P(1)$. Finally, conditioning on $\boeta_l = \bD_{\sigma}^l \bW_{l + 1}^{\top}$, $\|\tilde{\bW}_l^{\top} \bD_{\sigma}^l \bW_{l + 1}^{\top}\|_2^2 \overset{d}{=} \sum_{i = 1}^{d_{l - 1}} z_i^2 \|\boeta_l\|_2^2 / d_{l - 1}$, where $\{z_i\}_{i \in [d_{l - 1}]} \iidsim \normal(0,1)$. By the law of large numbers, $\|\tilde{\bW}_l^{\top} \bD_{\sigma}^l \bW_{l + 1}^{\top}\|_2^2$ converges in probability to the same limit of $\|\boeta_l\|_2^2$ as $d \to \infty$. Combining the results above, we can conclude that $\|\by_l\|_2^2$ converges in probability to a positive constant as $d \to \infty$, thus concluding the proof of claim 4 of the base case.

 Suppose claims 3 to 5 hold for all $m_0 + 1 \leq m \leq l$, then we prove that they also hold for $m = m_0$. First notice that $(\boeta_{m_0 + 1}, \bD_{\sigma}^{m_0})$ depends on $\bW_{m_0 + 1}$ only through $\bW_{m_0 + 1} \bh_{m_0}$, thus conditioning on $(\bD_{\sigma}^{m_0}, \boeta_{m_0 + 1})$, we have
 \begin{align}\label{eq:etak}
 	\boeta_{m_0} &= \bD_{\sigma}^{m_0} \bW_{m_0 + 1}^{\top} \boeta_{m_0 + 1} \nonumber \\
 	& \overset{d}{=} \bD_{\sigma}^{m_0} \tilde\bW_{m_0 + 1}^{\top} \boeta_{m_0 + 1} + \frac{\bh_{m_0}^{\top} \bW_{m_0 + 1}^{\top} \boeta_{m_0 + 1} - \bh_{m_0}^{\top} \tilde{\bW}_{m_0 + 1}^{\top} \boeta_{m_0 + 1}}{\|\bh_{m_0}\|_2^2} \bD_{\sigma}^{m_0} \bh_{m_0},
 \end{align}
 where $\tilde{\bW}_{m_0 + 1}$ has the same marginal distribution as $\bW_{m_0 + 1}$, and is independent of everything else.
 By claim 1, $\|\bh_{m_0}\|_2^{-2} = O_P(d_{m_0}^{-1})$. By induction hypothesis, $\bh_{m_0}^{\top} \bW_{m_0 + 1}^{\top}\boeta_{m_0 + 1} = O_P(1)$. Conditioning on $(\bh_{m_0}, \boeta_{m_0 + 1})$, we have $\bh_{m_0}^{\top} \tilde{\bW}_{m_0 + 1}^{\top} \boeta_{m_0 + 1} \overset{d}{=} \normal(0, \|\bh_{m_0}\|_2^2 \|\boeta_{m_0 + 1}\|_2^2 / d_{m_0})$, which is $O_P(1)$ by induction hypothesis and claim 1. Conditioning on $\bh_{m_0 - 1}$, 
 \begin{align*}
 	\frac{1}{d_{m_0}}\|\bD_{\sigma}^{m_0} \bh_{m_0}\|_2^2 \overset{d}{=} & \frac{1}{d_{m_0}}\sum_{i = 1}^{d_{m_0}} \sigma'\big( (\|\bh_{m_0 - 1}\|_2^2 / d_{m_0 - 1})^{1/2} z_i^{(m_0)}  \big)^2 \sigma((\|\bh_{m_0 - 1}\|_2^2 / d_{m_0 - 1})^{1/2} z_i^{(m_0)})^2,
 \end{align*}
 where $\{z_i^{(m_0)} \}_{i \in [d_{m_0}]}\iidsim \normal(0,1)$ and are independent of $\bh_{m_0 - 1}$. By claim 2 of the lemma, there exists a constant $H_{m_0 - 1} > 0$, such that $\|\bh_{m_0 - 1}\|_2^2 / d_{m_0 - 1} \toP H_{m_0 - 1}$. By the assumption that $\sigma'$ is almost everywhere continuous, and $|\sigma'(x)| \leq C_{\sigma}(1 + |x|^{k - 1})$, we can apply Chebyshev's inequality and dominated convergence theorem and conclude that $\|\bD_{\sigma}^{m_0} \bh_{m_0}\|_2^2 / d_{m_0}$ converges in probability to some constant. In summary, the second term in \cref{eq:etak} has Euclidean norm $O_P(d_{m_0}^{-1/2})$. 
 
 Then we consider the first term in \cref{eq:etak}. Notice that conditioning on $\bh_{m_0 - 1}$, 
 \begin{align*}
 	\|\bD_{\sigma}^{m_0} \tilde{\bW}_{m_0 + 1}^{\top} \boeta_{m_0 + 1} \|_2^2 \overset{d}{=} \frac{1}{d_{m_0}}\sum_{i = 1}^{d_{m_0}} \sigma'\big( (\|\bh_{m_0 - 1}\|_2^2 / d_{m_0 - 1})^{1/2} z_i^{(m_0)}\big)^2 z_i^2 \|\boeta_{m_0 + 1}\|_2^2,
 \end{align*}
where $\bz, \bz^{(m_0)} \iidsim \normal(\mathbf{0}, \id_{d_{m_0}})$ and are independent of $\bh_{m_0 - 1}$. Again by Chebyshev's inequality and dominated convergence theorem, we have 
$$\frac{1}{d_{m_0}}\sum_{i = 1}^{d_{m_0}} \sigma'\big( (\|\bh_{m_0 - 1}\|_2^2 / d_{m_0 - 1})^{1/2} z_i^{(m_0)}\big)^2 z_i^2 \toP \E_{z \sim \normal(0,1)}[\sigma'(H_{m_0 - 1}^{1/2} z)^2].$$ By induction hypothesis, $\|\boeta_{m_0 + 1}\|_2^2$ converges in probability to a positive constant. Therefore, we conclude that $\|\bD_{\sigma}^{m_0} \tilde{\bW}_{m_0 + 1}^{\top} \boeta_{m_0 + 1} \|_2^2 $ converges in probability to a positive constant. Then we plug the above results into \cref{eq:etak}, and conclude that $\|\boeta_{m_0}\|_2^2$ converges in probability to a positive constant which depends only on $(\sigma, l)$. Thus, we have completed the proof of claim 3 for $m = m_0$. 

We next show that $\bh_{m_0 - 1}^{\top} \bW_{m_0}^{\top} \boeta_{m_0} = O_P(1)$. Notice that $(\boeta_{m_0 + 1}, \bD_{\sigma}^{m_0} \bW_{m_0} \bh_{m_0 - 1})$ depends on $\bW_{m_0 + 1}$ only through $\bW_{m_0 + 1} \bh_{m_0}$. Then conditioning on $(\boeta_{m_0 + 1}, \bD_{\sigma}^{m_0} \bW_{m_0} \bh_{m_0 - 1})$ we have $\bW_{m_0 + 1} \overset{d}{=}\tilde\bW_{m_0 + 1}\Pi_{\bh_{m_0}}^{\perp} + \bW_{m_0 + 1} \Pi_{\bh_{m_0}}$, where $\tilde{\bW}_{m_0 + 1}$ is an independent copy of $\bW_{m_0 + 1}$ and is independent of everything else. Therefore, 
\begin{align}\label{eq:Wheta}
	& \langle \bW_{m_0} \bh_{m_0 - 1}, \boeta_{m_0} \rangle \nonumber \\
	= & \langle \bD_{\sigma}^{m_0} \bW_{m_0} \bh_{m_0 - 1}, \bW_{m_0 + 1}^{\top} \boeta_{m_0 + 1} \rangle \nonumber \\
	\overset{d}{=} & \frac{\langle \bD_{\sigma}^{m_0}\bW_{m_0} \bh_{m_0 - 1}, \bh_{m_0}  \rangle}{\|\bh_{m_0}\|_2^2} (\bh_{m_0}^{\top} \bW_{m_0 + 1}^{\top} \boeta_{m_0 + 1} - \bh_{m_0}^{\top} \tilde{\bW}_{m_0 + 1} \boeta_{m_0 + 1}) + \langle \bD_{\sigma}^{m_0} \bW_{m_0} \bh_{m_0 - 1}, \tilde\bW_{m_0 + 1}^{\top} \boeta_{m_0 + 1} \rangle.
\end{align}
By claim 2 of the lemma, $\|\bh_{m_0}\|_2^{-2} = O_P(d_{m_0}^{-1})$. Conditioning on $\bh_{m_0 - 1}$, 
\begin{align*}
	& \frac{1}{d_{m_0}}\langle \bD_{\sigma}^{m_0}\bW_{m_0} \bh_{m_0 - 1}, \bh_{m_0}  \rangle \\
 \overset{d}{=} & \frac{1}{d_{m_0}} \sum_{i = 1}^{d_{m_0}} \sigma'\big((\|\bh_{m_0 - 1}\|_2^2 / d_{m_0 - 1})^{1/2} z_i\big)^2\sigma\big((\|\bh_{m_0 - 1}\|_2^2 / d_{m_0 - 1})^{1/2} z_i\big)^2 z_i^2  \|\bh_{m_0 - 1}\|_2^2   / d_{m_0 - 1},
\end{align*}
where $\bz \sim \normal(\mathbf{0}, \id_{d_{m_0}})$ and is independent of $\bh_{m_0 - 1}$. Again we apply Chebyshev's inequality and dominated convergence theorem, and conclude that 
$$\frac{1}{d_{m_0}}\langle \bD_{\sigma}^{m_0}\bW_{m_0} \bh_{m_0 - 1}, \bh_{m_0}  \rangle  \toP H_{m_0 - 1}\E_{z \sim \normal(0,1)}\Big[\sigma'\big(H_{m_0 - 1}^{1/2} z\big)^2 \sigma\big(H_{m_0 - 1}^{1/2}z\big)^2 z^2\Big]$$
as $d \to \infty$. By induction hypothesis, $\bh_{m_0}^{\top} \bW_{m_0 + 1}^{\top} \boeta_{m_0 + 1} = O_P(1)$. Conditioning on $(\bh_{m_0}, \boeta_{m_0 + 1})$, we have $\bh_{m_0}^{\top} \tilde\bW_{m_0 + 1}^{\top} \boeta_{m_0 + 1} \overset{d}{=} \normal(0, \|\bh_{m_0}\|_2^2 \|\boeta_{m_0 + 1}\|_2^2 / d_{m_0})$, which is $O_P(1)$ since by claim 1 of the lemma we have $\|\bh_{m_0}\|_2^2 / d_{m_0} = O_P(1)$, and by induction hypothesis we have $\|\boeta_{m_0 + 1}\|_2^2 = O_P(1)$. Combining the above analysis, we can concludet that the first summand in \cref{eq:Wheta} is $O_P(1)$. 

Then we consider the second summand in \cref{eq:Wheta}. Conditioning on $(\bD_{\sigma}^{m_0} \bW_{m_0} \bh_{m_0 - 1}, \boeta_{m_0 + 1})$, 
$$\langle \bD_{\sigma}^{m_0} \bW_{m_0} \bh_{m_0 - 1}, \tilde\bW_{m_0 + 1}^{\top} \boeta_{m_0 + 1} \rangle \overset{d}{=} \normal(0, \|\boeta_{m_0 + 1}\|_2^2 \|\bD_{\sigma}^{m_0} \bW_{m_0} \bh_{m_0 - 1}\|_2^2 / d_{m_0}). $$
Again we apply the conditioning technique. Conditioning on $\bh_{m_0 - 1}$ we have
\begin{align*}
	\frac{1}{d_{m_0}}\|\bD_{\sigma}^{m_0} \bW_{m_0} \bh_{m_0 - 1}\|_2^2 \overset{d}{=} \frac{1}{d_{m_0}} \sum_{i = 1}^{d_{m_0}} \sigma'\big((\|\bh_{m_0 - 1}\|_2^2 / d_{m_0 - 1})^{1/2}z_i \big)^2 (\|\bh_{m_0 - 1}\|_2^2 / d_{m_0 - 1})z_i^2 = O_P(1),
\end{align*}
where $\bz \sim \normal(\mathbf{0}, \id_{d_{m_0}})$ and is independent of $\bh_{m_0 - 1}$. By induction hypothesis, $\|\boeta_{m_0 + 1}\|_2^2 = O_P(1)$, thus $\langle \bD_{\sigma}^{m_0} \bW_{m_0} \bh_{m_0 - 1}, \tilde\bW_{m_0 + 1}^{\top} \boeta_{m_0 + 1} \rangle = O_P(1)$. Next, we plug the above analysis into \cref{eq:Wheta} and conclude that
 $\langle \bW_{m_0} \bh_{m_0 - 1}, \boeta_{m_0} \rangle = O_P(1)$, thus proving claim 5 for $m = m_0$. 

As the last step of our induction proof, we show that $\|\by_{m_0}\|_2^2$ converges in probability to some positive constant. Note that $\boeta_{m_0}$ depends on $\bW_{m_0}$ only through $\bW_{m_0} \bh_{m_0 - 1}$. As a result, conditioning on $\boeta_{m_0}$, 
\begin{align*}
	\by_{m_0} = \bW_{m_0}^{\top} \boeta_{m_0}  \overset{d}{=} \tilde{\bW}_{m_0}^{\top} \boeta_{m_0} + \frac{\bh_{m_0 - 1}^{\top} \bW_{m_0}^{\top} \boeta_{m_0} - \bh_{m_0 - 1}^{\top} \tilde\bW_{m_0}^{\top} \boeta_{m_0}}{\|\bh_{m_0 - 1}\|_2^2} \bh_{m_0 - 1},
\end{align*}
where $\tilde{\bW}_{m_0}$ is an independent copy of $\bW_{m_0}$ and is independent of everything else. By induction hypothesis, $\|\boeta_{m_0}\|_2^2$ converges in probability to some positive constant. By the law of large numbers,  we have $\|\tilde{\bW}_{m_0}^{\top} \boeta_{m_0}\|_2^2$ converges in probability to the same limit of $\|\boeta_{m_0}\|_2^2$ as $d \to \infty$. By claim 2 of the lemma, $\|\bh_{m_0 - 1}\|_2^{-2} = O_P(d_{m_0 - 1}^{-1})$ and $\|\bh_{m_0 - 1}\|_2 = O_P(d_{m_0 - 1}^{1/2})$. Note that we have proved $\bh_{m_0 - 1}^{\top} \bW_{m_0}^{\top} \boeta_{m_0} = O_P(1)$. Conditioning on $(\bh_{m_0 - 1}, \boeta_{m_0})$, we have $\bh_{m_0 - 1}^{\top} \tilde\bW_{m_0}^{\top} \boeta_{m_0} \overset{d}{=} \normal(0,\|\bh_{m_0 - 1}\|_2^2 \|\boeta_{m_0}\|_2^2 / d_{m_0 - 1})$ which is $O_P(1)$ by claim 1 and induction hypothesis. In summary, we can conclude that $\|\by_{m_0}\|_2^2$ and $\|\boeta_{m}\|_2^2$ converges in probability to the same limit, thus completing the proof of the lemma by induction.

\subsection{Proof of \cref{lemma:finite-sample}}\label{sec:proof-of-lemma:finite-sample}

We first provide finite sample upper bounds on the Euclidean norms of $\{\bg_i, \bh_i, \bD_{\sigma}^i \bg_i\}_{i \in [l]}$. 

\begin{lemma}\label{lemma:upper-bound-1}
	Under the conditions of \cref{thm:multi-layer}, there exist positive constants $\{ Q_i\}_{1 \leq i \leq l}$, which depend only on $(\sigma, l)$, such that for all $1 \leq i \leq l$, with probability at least $1 - \delta$,  we have 
	\begin{align}
		& \frac{1}{\sqrt{d_i}} \|\bh_i\|_2 \leq Q_i \prod_{j = 1}^i \Big(1 + \delta^{-1/2}d_j^{-1/2} \Big)^{k^{i - j}} , \label{eq:13} \\
		& \frac{1}{\sqrt{d_i}} \|\bg_i\|_2 \leq Q_i \Big(1 + \log(1 / \delta) d_i^{-1/2} \Big) \prod_{j = 1}^{i - 1} \Big(1 + \delta^{-1/2}d_j^{-1/2} \Big)^{k^{i - 1 - j}} , \label{eq:14} \\
		& \frac{1}{\sqrt{d_i}}\frac{ \|\bD_{\sigma}^i \bg_i\|_2}{\|\bh_{i - 1}\|_2} \leq \frac{Q_i }{\sqrt{d_{i - 1}}}\prod_{j = 1}^i \Big(1 + \delta^{-1/2}d_j^{-1/2} \Big)^{k^{i - j}}. \label{eq:15}
	\end{align}
\end{lemma}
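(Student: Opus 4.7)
The plan is to prove all three bounds simultaneously by induction on $i \in [l]$, exploiting the distributional identity: conditional on $\bh_{i-1}$,
\begin{align*}
	\bg_i \stackrel{d}{=} \frac{\|\bh_{i-1}\|_2}{\sqrt{d_{i-1}}}\,\bz_i, \qquad \bz_i \sim \normal(\mathbf{0}, \id_{d_i}) \text{ independent of } \bh_{i-1}.
\end{align*}
This factors $\|\bg_i\|_2/\sqrt{d_i}$ as $(\|\bh_{i-1}\|_2/\sqrt{d_{i-1}})\cdot(\|\bz_i\|_2/\sqrt{d_i})$, and writes $\|\bh_i\|_2^2/d_i$ and $\|\bD_\sigma^i\bg_i\|_2^2/d_i$ as scale factors depending on $\|\bh_{i-1}\|_2/\sqrt{d_{i-1}}$ multiplied by empirical averages $d_i^{-1}\|\bz_i\|_2^2$ and $d_i^{-1}\sum_j|z_{i,j}|^{2k}$ of fixed functions of i.i.d.\ standard Gaussians.

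Two concentration ingredients drive the argument. First, Bernstein's inequality for chi-square gives $\|\bz_i\|_2/\sqrt{d_i} \leq 1 + C\log(1/\delta')/\sqrt{d_i}$ with probability $1-\delta'$; this is the source of the $\log(1/\delta)$ in claim 2. Second, Chebyshev applied to $d_i^{-1}\sum_j|z_{i,j}|^{2k}$, whose variance is $O(1/d_i)$, gives $d_i^{-1}\sum_j|z_{i,j}|^{2k} \leq \E_{Z\sim\normal(0,1)}[|Z|^{2k}] + C(\delta' d_i)^{-1/2}$ with probability $1-\delta'$; this is the source of the $\delta^{-1/2}d_j^{-1/2}$ factors in claims 1 and 3. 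Using the growth bound $|\sigma(x)| \leq C_\sigma'(1+|x|^k)$ (obtained by integrating the hypothesis on $\sigma'$) and the sharper $|\sigma'(x)x| \leq C_\sigma(|x|+|x|^k)$, these yield the one-step recursions
\begin{align*}
	\frac{\|\bh_i\|_2^2}{d_i} &\leq C\Big(1+\Big(\tfrac{\|\bh_{i-1}\|_2}{\sqrt{d_{i-1}}}\Big)^{2k}\big(1+(\delta' d_i)^{-1/2}\big)\Big), \\
	\frac{\|\bD_\sigma^i\bg_i\|_2^2}{d_i\,\|\bh_{i-1}\|_2^2/d_{i-1}} &\leq C\Big(\tfrac{\|\bz_i\|_2^2}{d_i} + \Big(\tfrac{\|\bh_{i-1}\|_2}{\sqrt{d_{i-1}}}\Big)^{2k-2}\big(1+(\delta' d_i)^{-1/2}\big)\Big).
\end{align*}

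The induction then closes transparently: raising the inductive estimate $\|\bh_{i-1}\|_2/\sqrt{d_{i-1}} \leq Q_{i-1}\prod_{j=1}^{i-1}(1+\delta^{-1/2}d_j^{-1/2})^{k^{i-1-j}}$ to the $k$-th power sends each exponent $k^{i-1-j}$ to $k^{i-j}$, while the fresh factor $(1+(\delta'd_i)^{-1/2})^{1/2}$ contributes the $j=i$ term $(1+\delta^{-1/2}d_i^{-1/2})^{k^0}$ after using $\sqrt{1+x}\leq 1+\sqrt{x}$ and $1+x^{1/2} \leq 2(1+x)$. For claim 3 the analogous computation uses the $(k-1)$-th power, giving exponents $(k-1)k^{i-1-j}\leq k^{i-j}$, which is automatically absorbed into the target since the base of each factor is $\geq 1$. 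Claim 2 inherits the factors from the inductive bound on $\bh_{i-1}$ with \emph{unchanged} exponents, plus the Bernstein factor on top; the base case $i=1$ is immediate from $\bg_1 \sim \normal(\mathbf{0},\id_{d_1})$ and the same concentration inputs. Probability budgets combine by a union bound over the $O(l)$ intermediate events with $\delta' = \delta/(C_0 l)$, all multiplicative constants absorbed into the new $Q_i$. The main subtlety to be careful about is claim 3: if one used $|\sigma'(x)x|\leq C(1+|x|^k)$, the normalization by $\|\bh_{i-1}\|_2$ would produce a harmful $d_{i-1}/\|\bh_{i-1}\|_2^2$ factor requiring an unavailable lower bound on $\|\bh_{i-1}\|_2$; using the sharper $C(|x|+|x|^k)$ makes the $\|\bh_{i-1}\|_2^2$ cancel exactly.
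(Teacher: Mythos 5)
Your proposal is correct and follows essentially the same route as the paper's proof: induction on the layer index, the conditional factorization $\bg_i \overset{d}{=} (\|\bh_{i-1}\|_2/\sqrt{d_{i-1}})\,\bz_i$, Bernstein for the chi-square factor (giving the $\log(1/\delta)$ term), Chebyshev for the empirical $2k$-th moments (giving the $\delta^{-1/2}d_j^{-1/2}$ factors), and the same exponent bookkeeping $k\cdot k^{i-1-j}=k^{i-j}$, including the exact cancellation of $\|\bh_{i-1}\|_2^2$ in claim 3 that the paper achieves by writing $\sigma'(z\nu)^2 z^2\nu^2/\|\bh_{i-1}\|_2^2 = \sigma'(z\nu)^2 z^2/d_{i-1}$. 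The only cosmetic difference is the probability accounting (your global union bound with $\delta'=\delta/(C_0 l)$ versus the paper's per-step $\delta/3$ budget with constants like $3^{mk^m}$ absorbed into $Q_i$), which is immaterial since $Q_i$ may depend on $l$.
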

\begin{proof}
	We prove this lemma by induction over $i$. For the base case $i = 1$, $\bg_1 \sim \normal(\mathbf{0}, \id_{d_1})$, thus, \cref{eq:14} follows from Bernstein's inequality. Since almost everywhere we have $|\sigma'(x)| \leq C_{\sigma}(1 + |x|^{k - 1})$, then there exists $C_{\sigma}' > 0$ which is a function of $\sigma$ only, such that almost everywhere $|\sigma(x)| \leq C_{\sigma}'(1 + |x|^k)$ and $|\sigma'(x)x| \leq C_{\sigma}'(1 + |x|^k)$.
 Then with probability 1,  we have  
	\begin{align*}
		& \frac{1}{d_1}\|\bh_1\|_2^2 \leq \frac{2}{d_1} \sum_{i = 1}^{d_1} (C_{\sigma}')^2(1 + |g_{1,i}|^{2k}), \\
		& \frac{1}{d_1\|\bx\|_2^2} \|\bD_{\sigma}^1 \bg_1 \|_2^2 \leq \frac{2}{d_1d} \sum_{i = 1}^{d_1} (C_{\sigma}')^2(1 + |g_{1,i}|^{2k}),
	\end{align*}
	thus, \cref{eq:13,eq:15} for the base case follow from Chebyshev's inequality. Now suppose the lemma holds for all $i \leq m$, then we prove it also holds for $i = m + 1$ by induction. Conditioning on $\bh_m$, we have $\bg_{m + 1} \overset{d}{=} \normal(0, (\|\bh_m\|_2^2 / d_m) \id_{d_{m + 1}})$. Thus, by Bernstein's inequality, there exists an absolute constant $C > 0$, such that with probability at least $1 - \delta / 3$, $\|\bg_{m + 1}\|_2^2 / d_{m + 1} \leq C\|\bh_m\|_2^2(1 + \log(1 / \delta) / \sqrt{d_{m + 1}}) / d_m $. By induction hypothesis, with probability at least $1 - \delta / 3$, 
	\begin{align*}
		\frac{1}{\sqrt{d_m}} \|\bh_m\|_2 \leq 3^{mk^{m}}Q_m \prod_{j = 1}^m \Big(1 + \delta^{-1/2}d_j^{-1/2} \Big)^{k^{m - j}},
	\end{align*}
	thus, \cref{eq:14} for $i = m + 1$ follows. To prove \cref{eq:13,eq:15}, note that
	\begin{align*}
		\frac{1}{d_{m + 1}}\|\bh_{m + 1}\|_2^2 \leq &  \frac{1}{d_{m + 1}} \sum_{i = 1}^{d_{m + 1}} (C_{\sigma}')^2(1 + |g_{m + 1, i}|^{k})^2 \\
		\overset{d}{=} & \frac{1}{d_{m + 1}} \sum_{i = 1}^{d_{m + 1}} 2(C_{\sigma}')^2(1 + |z_{i}|^{2k} \|\bh_m\|_2^{2k} / d_m^{k}), \\
		\frac{1}{d_{m + 1}} \frac{\|\bD_{\sigma}^{m + 1} \bg_{m + 1}\|_2^2}{\|\bh_m\|_2^2} \leq  & \frac{1}{d_{m + 1} \|\bh_m\|_2^2} \sum_{i = 1}^{d_{m + 1}} \sigma'(g_{m + 1, i})^2 g_{m + 1, i}^2 \\
		\overset{d}{=} &  \frac{1}{d_{m + 1}d_m } \sum_{i = 1}^{d_{m + 1}} \sigma'(z_i\|\bh_m\|_2 / \sqrt{d_m})^2 z_i^2  \\
		\leq &  \frac{1}{d_{m + 1}d_m} \sum_{i = 1}^{d_{m + 1}} 4(C_{\sigma})^2z_i^2 (1 + z_i^{2k - 2} \|\bh_m\|_2^{2k - 2} / d_m^{k - 1}) , 
	\end{align*}
	where $\bz \sim \normal(\mathbf{0}, \id_{d_{m + 1}})$, and is independent $\bh_m$. Thus, \cref{eq:13,eq:15} for $i = m + 1$ follows from Chebyshev's inequality, and we complete the proof of the lemma by induction.
\end{proof}
Next, we control the Euclidean norms of $\{\boeta_i\}_{i \in [l]}$.
 \begin{lemma}\label{lemma:upper-bound-2}
 	Under the conditions of \cref{thm:multi-layer}, there exist constants $\{\tilde{Q}_i\}_{1 \leq i \leq l}$, which depend only on $(\sigma, l)$, such that for all $1 \leq i \leq l$, with probability at least $1 - \delta$,  we have
 	\begin{align}
 		& \|\boeta_i\|_2  \leq \tilde{Q}_i ( \sqrt{\log(1 / \delta)} + 1)^{l - i} \prod_{m = i}^l  \prod_{j = 1}^m \big(1 + \delta^{-1/2} d_j^{-1/2} \big)^{k^{m - j}} , \label{eq:16} \\
 		& \frac{|\bh_{i - 1}^{\top} \bW_{i}^{\top} \boeta_{i}|}{\|\bh_{i - 1}\|_2} \leq \frac{\tilde{Q}_i ( \sqrt{\log(1 / \delta)} + 1)^{l - i}}{\sqrt{d_{i - 1}}}\prod_{m = i}^l \prod_{j = 1}^m \big(1 + \delta^{-1/2} d_j^{-1/2} \big)^{k^{m - j}}.  \label{eq:17}
 	\end{align}
 \end{lemma}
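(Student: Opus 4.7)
The plan is to establish both \cref{eq:16} and \cref{eq:17} simultaneously by backward induction on $i$, from $i=l$ down to $i=1$. The structural observation that pairs the two claims is the deterministic identity
\[
\bh_{i-1}^{\top}\bW_i^{\top}\boeta_i \;=\; \bg_i^{\top}\boeta_i \;=\; (\bD_\sigma^i\bg_i)^{\top}\bW_{i+1}^{\top}\boeta_{i+1},
\]
which follows from $\bg_i = \bW_i\bh_{i-1}$ and $\boeta_i = \bD_\sigma^i\bW_{i+1}^{\top}\boeta_{i+1}$. It recasts \cref{eq:17} at level $i$ as a bound on a linear functional of $\bW_{i+1}^{\top}\boeta_{i+1}$ weighted by $\bD_\sigma^i\bg_i$, which runs parallel to \cref{eq:16} at level $i$, where $\|\boeta_i\|_2^2 = \sum_j \sigma'(g_{i,j})^2(\bW_{i+1}^{\top}\boeta_{i+1})_j^2$.

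For the base case $i=l$, $\boeta_l = \bD_\sigma^l\bW_{l+1}^{\top}$ with $\bW_{l+1}\in\RR^{1\times d_l}$ having i.i.d.\ $\normal(0,1/d_l)$ entries independent of $\bg_l$. Conditional on $\bg_l$, $\|\boeta_l\|_2^2 \stackrel{d}{=} d_l^{-1}\sum_{j=1}^{d_l}\sigma'(g_{l,j})^2 z_j^2$ with $z_j$ i.i.d.\ $\normal(0,1)$; Chebyshev's inequality combined with $|\sigma'(x)|\le C_\sigma(1+|x|^{k-1})$ and the control of $\|\bg_l\|_2$ in \cref{lemma:upper-bound-1} yields \cref{eq:16}. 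The identity above reduces \cref{eq:17} to $\bg_l^{\top}\bD_\sigma^l\bW_{l+1}^{\top}$, which conditional on $\bg_l$ is $\normal(0,\|\bD_\sigma^l\bg_l\|_2^2/d_l)$, so a Gaussian deviation bound combined with \cref{eq:15} and \cref{eq:13} completes the base step.

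For the inductive step, assume both claims hold at levels $i+1,\ldots,l$. Since $\boeta_{i+1}$ depends on $\bW_{i+1}$ only through $\bg_{i+1} = \bW_{i+1}\bh_i$, \cref{lemma:gaussian-conditioning} provides the decomposition
\[
\bW_{i+1}^{\top}\boeta_{i+1}
\;=\; \frac{\bh_i^{\top}\bW_{i+1}^{\top}\boeta_{i+1}}{\|\bh_i\|_2^2}\,\bh_i
\;+\; \Pi_{\bh_i}^{\perp}\tilde\bW_{i+1}^{\top}\boeta_{i+1},
\]
where $\tilde\bW_{i+1}$ is distributed as $\bW_{i+1}$ and independent of $(\bh_i,\bg_i,\boeta_{i+1})$. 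In the first summand, $\bh_i^{\top}\bW_{i+1}^{\top}\boeta_{i+1}$ is exactly what \cref{eq:17} at level $i+1$ controls, together with \cref{eq:13}; the second, conditional on $(\bh_i,\boeta_{i+1})$, is mean-zero Gaussian with covariance $(\|\boeta_{i+1}\|_2^2/d_i)\Pi_{\bh_i}^{\perp}$ and is independent of $\bD_\sigma^i$. Plugging this decomposition into $\|\boeta_i\|_2^2$, using $|\sigma'(x)|\le C_\sigma(1+|x|^{k-1})$ together with the entrywise control of $\bg_i$ from \cref{lemma:upper-bound-1}, and applying Chebyshev yields \cref{eq:16} at level $i$; the polynomial growth of $\sigma'$ is what generates the extra factor $(1+\delta^{-1/2}d_j^{-1/2})^{k^{i-j}}$. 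For \cref{eq:17} at level $i$ we insert the same decomposition into $(\bD_\sigma^i\bg_i)^{\top}\bW_{i+1}^{\top}\boeta_{i+1}$: one piece is handled by the inductive hypothesis and \cref{lemma:upper-bound-1}, the other is mean-zero Gaussian conditional on the preceding randomness with variance $\|\Pi_{\bh_i}^{\perp}\bD_\sigma^i\bg_i\|_2^2\|\boeta_{i+1}\|_2^2/d_i$, contributing the extra $(\sqrt{\log(1/\delta)}+1)$ factor through a Gaussian tail bound.

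The main obstacle is not conceptual but bookkeeping. Because $|\sigma'(x)|\le C_\sigma(1+|x|^{k-1})$, each backward step multiplies the effective moment orders by a factor of $k$, producing the geometrically growing exponents $k^{m-j}$ in the double product $\prod_{m=i}^l\prod_{j=1}^m(1+\delta^{-1/2}d_j^{-1/2})^{k^{m-j}}$, while the accumulated Gaussian deviation bounds over the $l-i$ layers supply the $(\sqrt{\log(1/\delta)}+1)^{l-i}$ factor. One must split the $\delta$ budget across a constant-in-$d$ number of failure events per layer, apply Chebyshev at matched moment orders, and verify that the dimension-independent constants $\tilde Q_i$ (functions of $(\sigma,l)$ alone) absorb the remainder. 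No probabilistic tool beyond \cref{lemma:gaussian-conditioning}, Chebyshev's inequality, and standard Gaussian concentration is needed.
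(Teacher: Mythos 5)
Your proposal is correct and follows essentially the same route as the paper: a backward induction that carries \cref{eq:16} and \cref{eq:17} together, using the identity $\bh_{i-1}^{\top}\bW_i^{\top}\boeta_i=(\bD_\sigma^i\bg_i)^{\top}\bW_{i+1}^{\top}\boeta_{i+1}$ and the Gaussian-conditioning decomposition of $\bW_{i+1}^{\top}\boeta_{i+1}$ along $\bh_i$, with the parallel component controlled by the inductive \cref{eq:17} and the orthogonal component being conditionally Gaussian, exactly as in the paper's Eqs.~\eqref{eq:etam} and \eqref{eq:Wheta}. The only cosmetic difference is bookkeeping: where you invoke ``control of $\|\bg_i\|_2$ entrywise,'' the paper conditions on $\bh_{i-1}$, writes $\bg_i$ as a scaled standard Gaussian vector via \cref{eq:13}, and applies Chebyshev to the resulting i.i.d.\ sums, which is what actually produces the exponents $k^{m-j}$.
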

 \begin{proof}
 	We prove this lemma by induction over $i$. We start with the base case $i = l$. Conditioning on $\bh_{l - 1}$, we have
 	\begin{align*}
 		\|\boeta_l\|_2^2  \overset{d}{=} &  \frac{1}{d_l}\sum_{i = 1}^{d_l} z_{l + 1, i}^2 \sigma'(\|\bh_{l - 1}\|_2 / \sqrt{d_{l - 1}} z_{l, i})^2 \\
 		\leq & \frac{8C_{\sigma}^2}{d_l}\sum_{i = 1}^{d_l} z_{l + 1, i}^2 + \frac{8C_{\sigma}^2}{d_l}\sum_{i = 1}^{d_l} z_{l + 1, i}^2 z_{l,i}^{2k - 2} (\|\bh_{l - 1}\|_2 / \sqrt{d_{l - 1}})^{2k - 2}, 
 	\end{align*}  
 	where $\bz_l, \bz_{l + 1} \iidsim \normal(\mathbf{0}, \id_{d_l})$ and is independent of $\bh_{l - 1}$. Then \cref{eq:16} for $i = l$ follows from \cref{lemma:upper-bound-1} and Chebyshev's inequality. Note that $\bW_{l + 1}$ is independent of $(\bD_{\sigma}^l, \bg_l)$, then
 	\begin{align*}
 		\frac{|\bh_{l - 1}^{\top} \bW_l^{\top} \boeta_l|^2}{\|\bh_{l - 1}\|_2^2} \overset{d}{=} &  \frac{1}{d_l d_{l - 1}} \sum_{i = 1}^{d_l} z_{l + 1, i}^2  z_{l,i}^2 \sigma'(z_{l,i}\|\bh_{l - 1}\|_2 / \sqrt{d_{l - 1}})^2 \\
 		\leq &   \frac{8C_{\sigma}^2}{d_l d_{l - 1}} \sum_{i = 1}^{d_l} z_{l + 1, i}^2  z_{l,i}^2 \big( 1 + z_{l,i}^{2k - 2} \|\bh_{l - 1}\|_2^{2k - 2}d_{l - 1}^{-k + 1} \big),
 	\end{align*} 
 	As a result, \cref{eq:17} for $i = l$ follows from \cref{lemma:upper-bound-1} and Chebyshev's inequality. Thus, we have completed the proof for the base case. 
 	
 	Suppose the lemma is true for all $m + 1 \leq i \leq l$, then we prove it also holds for $i = m$ via induction hypothesis. By \cref{eq:etak}, 
 	\begin{align}\label{eq:etam}
 		\boeta_m \overset{d}{=}  \bD_{\sigma}^m  \tilde{\bW}_{m + 1}^{\top} \boeta_{m + 1} + \frac{\bh_m^{\top} \bW_{m + 1}^{\top} \boeta_{m + 1} - \bh_m^{\top} \tilde\bW_{m + 1}^{\top} \boeta_{m + 1}}{\|\bh_m\|_2^2} \bD_{\sigma}^m \bh_m,
 	\end{align}
 	where $\tilde{\bW}_{m + 1}$ has the same distribution as $\bW_{m + 1}$, and is independent of everything else. The right hand side of \cref{eq:etam} has Euclidean norm no larger than 
 	\begin{align*}
 		\|\bD_{\sigma}^m  \tilde{\bW}_{m + 1}^{\top} \boeta_{m + 1}\|_2 + \frac{|\bh_m^{\top} \bW_{m + 1}^{\top} \boeta_{m + 1}| + |\bh_m^{\top} \tilde\bW_{m + 1}^{\top} \boeta_{m + 1}|}{\|\bh_m\|_2^2} \|\bD_{\sigma}^m \bh_m\|_2.
 	\end{align*}
 	Note that 
 	\begin{align*}
 		\|\bD_{\sigma}^m  \tilde{\bW}_{m + 1}^{\top} \boeta_{m + 1}\|_2^2 \overset{d}{=} \frac{1}{d_m}\sum_{i = 1}^{d_m}\sigma'(z_{m,i}\|\bh_{m - 1}\|_2 / \sqrt{d_{m - 1}})^2 z_{m + 1, i}^2 \|\boeta_{m + 1}\|_2^2,
 	\end{align*}
 	where $\bz_m, \bz_{m + 1} \iidsim \normal(\mathbf{0}, \id_{d_m})$, and are independent of $\bh_{m - 1}$. Using the fact that almost everywhere $|\sigma'(x)| \leq C_{\sigma}(1 + |x|^{k - 1})$, together with \cref{lemma:upper-bound-1} and Chebyshev's inequality, we conclude that there exists a constant $\tilde{Q}_m^{(1)} > 0$, depending only on $(\sigma, l)$, such that with probability at least $1 - \delta / 6$, 
 	\begin{align}\label{eq:19}
 		\|\bD_{\sigma}^m  \tilde{\bW}_{m + 1}^{\top} \boeta_{m + 1}\|_2 \leq \tilde{Q}_m^{(1)} \prod_{j = 1}^m  \big(1 + \delta^{-1/2} d_j^{-1/2} \big)^{k^{m - j}} \|\boeta_{m + 1}\|_2. 
 	\end{align}
 	Conditioning on $(\bh_m, \boeta_{m + 1})$, we have $\bh_m^{\top} \tilde\bW_{m + 1}^{\top} \boeta_{m + 1} / \|\bh_m\|_2 \overset{d}{=} \normal(0, \|\boeta_{m + 1}\|_2^2 / d_m)$. Therefore, with probability at least $1 - \delta / 6$, we have
 	\begin{align}\label{eq:20}
 		\frac{|\bh_m^{\top} \tilde\bW_{m + 1}^{\top} \boeta_{m + 1}|}{\|\bh_m\|_2} \leq d_m^{-1/2}\tilde{Q}_m^{(2)} \sqrt{\log (1 / \delta)} \|\boeta_{m + 1}\|_2,
 	\end{align}
 	where $\tilde{Q}_m^{(2)} > 0$ is a numerical constant.		
 
	Conditioning on $\bh_{m - 1}$, $\|\bD_{\sigma}^m \bh_m\|_2^2 / \|\bh_m\|_2^2 \leq \|\bD_{\sigma}^m\|_F^2 \overset{d}{=} \sum_{i = 1}^{d_m} \sigma'(\|\bh_{m - 1}\|_2 / \sqrt{d_{m - 1}}z_{m,i})^2$, where $\bz_m \sim \normal(\mathbf{0}, \id_{d_m})$ and is independent of $\bh_{m - 1}$. Since
 	\begin{align*}
 		 \frac{1}{d_m}\sum_{i = 1}^{d_m} \sigma'(\|\bh_{m - 1}\|_2 / \sqrt{d_{m - 1}}z_{m,i})^2 \leq \frac{8C_{\sigma}^2}{d_m}\sum_{i = 1}^{d_m}\big(1 + |z_{m,i}|^{2k - 2}\|\bh_{m - 1}\|_2^{2k - 2}d_{m - 1}^{-k + 1} \big),
 	\end{align*}
 	then by \cref{lemma:upper-bound-1} and Chebyshev's inequality, we conclude that there exists $\tilde{Q}_m^{(3)} > 0$, depending only on $(\sigma, l)$, such that with probability at least $1 - \delta / 6$, 
 	\begin{align}\label{eq:21}
 		\frac{\|\bD_{\sigma}^m \bh_m\|_2}{\|\bh_m\|_2} \leq \sqrt{d_m} \tilde{Q}_m^{(3)}\prod_{j = 1}^m  \big(1 + \delta^{-1/2} d_j^{-1/2} \big)^{k^{m - j}}.
 	\end{align}
 	Combining \cref{eq:etam,eq:19,eq:20,eq:21}, we conclude that there exists $\tilde{Q}_m^{(4)} > 0$, depending only on $(\sigma, l)$, such that with probability at least $1 - \delta / 2$, 
 	\begin{align}\label{eq:22}
 		\small\|\boeta_m\|_2 \leq \tilde{Q}_m^{(4)} ( \sqrt{\log(1 / \delta)} + 1) \times \Big\{ \prod_{j = 1}^m  \big(1 + \delta^{-1/2} d_j^{-1/2} \big)^{k^{m - j}} \Big\} \times \Big\{ \|\boeta_{m + 1}\|_2 + \frac{\sqrt{d_m} \|\bh_m^{\top} \bW_{m + 1}^{\top} \boeta_{m + 1}\|}{\|\bh_m\|_2}\Big\},
 	\end{align}
 	thus \cref{eq:16} for $i = m$ follows from \cref{eq:22} and induction hypothesis. Then we proceed to prove \cref{eq:17} for $i = m$. By \cref{eq:Wheta}, there exists $\tilde{\bW}_{m + 1}$ that has the same marginal distribution as $\bW_{m + 1}$ and is independent of everything else, such that
 	\begin{align}\label{eq:23}
 		& \frac{|\bh_{m - 1}^{\top} \bW_{m}^{\top} \boeta_{m}|}{\|\bh_{m - 1}\|_2} \nonumber  \\ \overset{d}{= } & \Big|\frac{\langle \bD_{\sigma}^m\bW_m \bh_{m - 1}, \bh_m  \rangle}{\|\bh_m\|_2^2\|\bh_{m - 1}\|_2} (\bh_m^{\top} \bW_{m + 1}^{\top} \boeta_{m + 1} - \bh_m^{\top} \tilde{\bW}_{m + 1} \boeta_{m + 1}) + \frac{\langle \bD_{\sigma}^m \bW_m \bh_{m - 1}, \tilde\bW_{m + 1}^{\top} \boeta_{m + 1} \rangle}{\|\bh_{m - 1}\|_2} \Big| \nonumber  \\
 		\leq &  \frac{|\langle \bD_{\sigma}^m\bW_m \bh_{m - 1}, \bh_m  \rangle|}{\|\bh_m\|_2^2\|\bh_{m - 1}\|_2} |\bh_m^{\top} \bW_{m + 1}^{\top} \boeta_{m + 1} - \bh_m^{\top} \tilde{\bW}_{m + 1} \boeta_{m + 1}| + \frac{|\langle \bD_{\sigma}^m \bW_m \bh_{m - 1}, \tilde\bW_{m + 1}^{\top} \boeta_{m + 1} \rangle|}{\|\bh_{m - 1}\|_2}.  	
 	\end{align} 
 	By \cref{lemma:upper-bound-1}, with probability at least $1 - \delta / 12$, 
 	\begin{align}\label{eq:24}
 		\frac{ \|\bD_{\sigma}^m \bg_m\|_2}{\|\bh_{m - 1}\|_2} \leq \frac{6^{mk^m}Q_m \sqrt{d_m}}{\sqrt{d_{m - 1}}}\prod_{j = 1}^m \Big(1 + \delta^{-1/2}d_j^{-1/2} \Big)^{k^{m - j}}.
 	\end{align}
 	Conditioning on $(\bh_m, \boeta_{m + 1})$, $\bh_m^{\top} \tilde{\bW}_{m + 1} \boeta_{m + 1} / \|\bh_{m}\|_2 \overset{d}{=} \normal(0, \| \boeta_{m + 1}\|^2_2 / d_m)$. Therefore, with probability at least $1 - \delta / 6$, we have
 	\begin{align}\label{eq:25}
 		\frac{|\bh_m^{\top} \tilde{\bW}_{m + 1} \boeta_{m + 1}|} {\|\bh_{m}\|_2} \leq \frac{\tilde{Q}_m^{(5)} \sqrt{\log(1 / \delta)} \|\boeta_{m + 1}\|_2}{ \sqrt{d_m}},
 	\end{align}
 where $\tilde{Q}_m^{(5)} > 0$ is a numerical constant. 

Given $(\bD_{\sigma}^m \bW_m \bh_{m - 1}, \boeta_{m + 1})$, the conditional distribution of $\langle \bD_{\sigma}^m \bW_m \bh_{m - 1}, \tilde\bW_{m + 1}^{\top} \boeta_{m + 1} \rangle / \|\bh_{m - 1}\|_2$  is $\normal(0, d_m^{-1}\|\bD_{\sigma}^m \bg_m\|_2^2 \|\boeta_{m + 1}\|_2^2 / \|\bh_{m - 1}\|_2^2)$. Therefore, using \cref{eq:24}, with probability at least $1 - \delta / 6$, 
\begin{align}\label{eq:25.5}
	\frac{|\langle \bD_{\sigma}^m \bW_m \bh_{m - 1}, \tilde\bW_{m + 1}^{\top} \boeta_{m + 1} \rangle|}{\|\bh_{m - 1}\|_2} \leq \frac{\tilde Q_m^{(6)} \|\boeta_{m + 1}\|_2 }{\sqrt{d_{m - 1}}} \times \Big\{\prod_{j = 1}^m \Big(1 + \delta^{-1/2}d_j^{-1/2} \Big)^{k^{m - j}} \Big\} \times \sqrt{\log(1 / \delta)}, 
\end{align}
where $\tilde Q_m^{(6)} > 0$ is a constant depending only on $(\sigma, l)$.
  Using \cref{eq:23,eq:24,eq:25.5,eq:25} and induction hypothesis, we conclude that with probability at least $1 - \delta / 2$, the last line in \cref{eq:23} is no larger than
  \begin{align}\label{eq:26}
  	\frac{\tilde{Q}_m^{(7)}( \sqrt{\log(1 / \delta)} + 1)}{\sqrt{d_{m - 1}}} \times \Big\{\prod_{j = 1}^m \Big(1 + \delta^{-1/2}d_j^{-1/2} \Big)^{k^{m - j}} \Big\} \times \Big\{ \|\boeta_{m + 1}\|_2 + \frac{\sqrt{d_m} \|\bh_m^{\top} \bW_{m + 1}^{\top} \boeta_{m + 1}\|}{\|\bh_m\|_2} \Big\},
  \end{align}
where $\tilde Q_m^{(7)} > 0$ is a constant depending only on $(\sigma, l)$. Thus, \cref{eq:17} for $i = m$ follows from \cref{eq:26} and induction hypothesis. Therefore, we have completed the proof of the lemma by induction. 
 \end{proof}
Finally, with \cref{lemma:upper-bound-1,lemma:upper-bound-2}, we are ready to prove \cref{lemma:finite-sample}. Note that $\boeta_1$ depends on $\bW_1$ only through $\bW_1 \bx$, thus by property of Gaussian distribution, we have 
\begin{align*}
	\nabla f(\bx) = \bW_1^{\top} \boeta_1 \overset{d}{=} \frac{\bx^{\top} \bW_1^{\top} \boeta_1 - \bx^{\top} \tilde\bW_1^{\top} \boeta_1}{\|\bx\|_2^2} \bx + \tilde{\bW}_1^{\top} \boeta_1,
\end{align*} 
where $\tilde{\bW}_1$ is an independent copy of $\bW_1$, and is independent of everything else. We let $\bz \sim \normal(\mathbf{0}, \id_{d})$, independent of everything else, then we have
\begin{align*}
	\|\tilde{\bW}_1^{\top}  \boeta_1\|_2 \overset{d}{=} \frac{\|\boeta_1\|_2 \|\bz\|_2}{\sqrt{d}}, \qquad \frac{|\bx^{\top} \bW_1^{\top} \boeta_1|}{\|\bx\|_2} = \frac{|\bh_0^{\top} \bW_1^{\top} \boeta_1|}{\|\bh_0\|_2}, \qquad  \frac{|\bx^{\top} \tilde\bW_1^{\top} \boeta_1|}{\|\bx\|_2} \overset{d}{=} \normal(0, \|\boeta_1\|_2^2 / d).
\end{align*}
Therefore, by \cref{lemma:upper-bound-2} and Bernstein's inequality, with probability at least $1 - \delta$, we have 
$$\|\nabla f(\bx)\|_2 \leq  Q( \sqrt{\log(1 / \delta)} + 1)^{l - 1}(1 + \log(1 / \delta)d^{-1/2}) \prod_{i = 1}^l  \prod_{j = 1}^i \big(1 + \delta^{-1/2} d_j^{-1/2} \big)^{k^{i - j}},$$
where $Q > 0$ is a constant depending only on $(\sigma, l)$, thus completing the proof of \cref{lemma:finite-sample}.

\subsection{Proof of \cref{lemma:op1-base}}\label{sec:proof-of-lemma:op1-base}
By definition, $\mu_1 = \tau s_d\bx^{\top} \by_1 / d $, thus by Cauchy–Schwarz inequality we have $|\mu_1| \leq s_d \|\by_1\|_2 / \sqrt{d}$. By \cref{lemma:Pconvergence}, as $d \to \infty$ we have $\|\by_1\|_2 = O_P(1)$, thus $\mu_1 = o_P(1)$. Next, we consider $\gamma_1$. Since $\bar\bW_1$ is independent of $\boeta_1$, by the law of large numbers, as $d \to \infty$, we have
\begin{align*}
	\plim_{d \to \infty} \|\bar\bW_1^{\top}\boeta_1\|_2 = \plim_{d \to \infty}  \|\boeta_1\|_2,
\end{align*}
which is finite and positive. As a result, $\gamma_1 = o_P(1)$. Finally, we consider $\beta_1$. Notice that $|\langle (\bW_1')^{\top} \boeta_1, \bar\bW_1^{\top} \boeta_1\rangle| \leq \|(\bW_1')^{\top} \boeta_1\|_2 \|\bar\bW_1^{\top} \boeta_1\|_2$, which further converges to $(\plim \|\boeta_1\|_2)^2$ as $d \to \infty$. Similarly, we can show $\|\bar{\bW}_1 \boeta_1\|_2^2$ converges to $(\plim \|\boeta_1\|_2)^2$ as $d \to \infty$. Therefore, we have $\beta_1 = O_P(1)$, thus completing the proof of the lemma. 

\subsection{Proof of \cref{lemma:hk-1_sigma}}\label{sec:proof-of-lemma:hk-1_sigma}

By definition, $\boeta_1 = \bD_{\sigma}^1 \bW_2^{\top} \boeta_2$. Recall that $\bu_1$ is independent of $\sigma\{\bar{\bW}_1^{\top} \boeta_1, \cF_1\}$, then we can conclude that $(\bg_1, \bD_{\sigma}^1, \boeta_2, \bu_1)$ depends on $\bW_2$ only through $\bW_2 \bh_1$. Therefore, by \cref{lemma:gaussian-conditioning}, there exists $\bar\bW_2 \in \RR^{d_2 \times d_1}$, which has the same marginal distribution with $\bW_2$ and is independent of $(\bg_1, \bD_{\sigma}^1, \boeta_2, \bu_1)$, such that 
\begin{align*}
	\bg_1^s =  & \bg_1(1 - \mu_1) - \beta_1 \bD_{\sigma}^1 \Pi_{\bh_1} \bW_2^{\top} \boeta_2 - \beta_1 \bD_{\sigma}^1  \Pi_{\bh_1}^{\perp} \bar\bW_2^{\top} \boeta_2 - \gamma_1 \bu_1\\
	 =&  \bg_1(1 - \mu_1) + \beta_1 \bD_{\sigma}^1\bh_1 \frac{\bh_1^{\top} \bar\bW_2^{\top} \boeta_2 - \bh_1^{\top} \bW_2^{\top} \boeta_2}{\|\bh_1\|_2^2} - \gamma_1 \bu_1 - \beta_1 \bD_{\sigma}^1 \bar\bW_2^{\top} \boeta_2 \\
	= & \bg_1(1 - \mu_1) + \zeta_1 \bD_{\sigma}^1 \sigma(\bg_1) - \gamma_1 \bu_1 - \beta_1\bar\gamma_1 \bD_{\sigma}^1 \bar\bu_1 / \sqrt{d_1}, 
\end{align*}
where $\bar\bu_1 \sim \normal(\mathbf{0}, \id_{d_1})$, is independent of $(\bg_1, \bD_{\sigma}^1, \boeta_2, \bu_1)$ and
\begin{align*}
	\zeta_1 := \beta_1 \frac{\bh_1^{\top} \bar\bW_2^{\top} \boeta_2 - \bh_1^{\top} \bW_2^{\top} \boeta_2}{\|\bh_1\|_2^2}, \qquad \bar\gamma_1 := \|\boeta_2\|_2.
\end{align*}
Recall that in \cref{lemma:op1-base}, we have shown that $\mu_1 = o_P(1)$, $\beta_1  = O_P(1)$, $\gamma_1 = o_P(1)$ as $d \to \infty$. By \cref{lemma:Pconvergence} claim 5, 1 and 3, $\bh_1^{\top} \bW_2^{\top} \boeta_2 = O_P(1)$, $\|\bh_1\|_2^{-2} = O_P(d_1^{-1})$, and $\bar\gamma_1 = O_P(1)$. By controlling the second moment then applying Chebyshev's inequality, we can conclude that $\bh_1^{\top} \bar\bW_2^{\top} \boeta_2 = O_P(1)$. Therefore, $\zeta_1 = o_P(1)$ and $\beta_1 \bar{\gamma}_1 / \sqrt{d_1} = o_P(1)$.

For $\btheta \in \RR^4$, we define
\begin{align*}
	m_{\btheta} (g, u, \bar u) := \sigma(g(1 - \theta_1) + \theta_2 \sigma'(g) \sigma(g) - \theta_3 u - \theta_4  \sigma'(g)\bar u)\sigma(g).
\end{align*} 
By assumption $|\sigma'(x)| \leq C_{\sigma}(1 + |x|^{k - 1})$. Using this assumption, we can conclude that for any $\btheta, \bar\btheta \in \RR^4$ satisfying $\|\btheta\|_{\infty}, \|\bar\btheta\|_{\infty} \leq 1$,  
\begin{align*}
	|m_{\btheta}(g, u, \bar{u}) - m_{\bar\btheta}(g, u, \bar{u})| \leq C_{\sigma}' \| \btheta - \barbtheta\|_2 (1 + |g|^{n(k)} + |u|^{n(k)} + |\bar{u}|^{n(k)}), 
\end{align*}
where $n(k) \in \NN_+$ is a function of $k$ and $C_{\sigma}' > 0$ is a function of $\sigma$. Notice that $ \E_{(g,u,\bar{u})\sim \normal(\mathbf{0},\id_3)}[1 + |g|^{n(k)} + |u|^{n(k)} + |\bar{u}|^{n(k)}] < \infty$. Then by Example 19.7 and Theorem 19.4 in \cite{van2000asymptotic}, we know that $\{m_{\btheta}: \|\btheta\|_{\infty} \leq 1\}$ is a Glivenko-Cantelli class, thus
\begin{align*}
	 \sup_{\|\btheta\|_{\infty} \leq 1} \Big| \frac{1}{d_1}\sum_{i = 1}^{d_1} m_{\btheta}(g_i, u_i, \bar{u}_i) - \E_{(g,u,\bar{u}) \sim \normal(\mathbf{0}, \id_3)}[m_{\btheta}(g, u, \bar{u})] \Big| = o_P(1).
\end{align*}
Using the equation above and dominated convergence theorem, we have as $d \to \infty$
\begin{align*}
	\frac{1}{d_1}\langle \bh_1, \sigma(\bg_1^s)\rangle  = & \frac{1}{d_1}\sum_{i = 1}^{d_1} m_{(\mu_1, \zeta_1, \gamma_1, \beta_1 \bar{\gamma}_1 / \sqrt{d_1})}(g_i, u_i, \bar{u}_i)\\
	= & \E_{(g,u,\bar{u}) \sim \normal(\mathbf{0}, \id_3)}[ m_{(\mu_1, \zeta_1, \gamma_1, \beta_1 \bar{\gamma}_1 / \sqrt{d_1})}(g, u, \bar{u}) ] + o_P(1) \\
	 = & \E_{(g,u,\bar{u}) \sim \normal(\mathbf{0}, \id_3)}[m_{\mathbf{0}}(g, u, \bar{u})] + o_P(1).
\end{align*}
By the law of large numbers, $\|\bh_1\|_2^2 = \E_{(g,u,\bar{u}) \sim \normal(\mathbf{0}, \id_3)}[m_{\mathbf{0}}(g, u, \bar{u})] + o_P(1)$, thus proving the second claim of the lemma.
 Again via proving uniform convergence type result we can conclude that $\|\sigma(\bg_1^s)\|_2^2 / d_1 = \|\bh_1\|_2^2 / d_1 + o_P(1)$, thus completing the proof of the first result.

\subsection{Proof of \cref{lemma:op1}}\label{sec:proof-of-lemma:op1}
Given \cref{eq:mubetagamma}, $\mu_{m + 1} = o_P(1)$ and $\gamma_{m + 1} = o_P(1)$ follow from  claim \emph{(iv)} of  induction hypothesis $\mathcal{H}_m$. 

\subsection{Proof of \cref{lemma:Donsker2}}\label{sec:proof-of-lemma:Donsker2}

By assumption, almost everywhere we have $|\sigma'(x)| \leq  C_{\sigma}(1 + |x|^{k - 1})$. Therefore, there exists $C_{\sigma}' > 0$ which is a constant depending only on $\sigma$, such that $|\sigma(x)| \leq C_{\sigma}'(1 + |x|^k)$. 

Using these facts, we conclude that there exists $n(k) \in \NN_+$ which is a function of $k$, and $C_{\sigma}'' > 0$ which is a function of $(\sigma, \Omega_{m + 1})$ only, such that  for all $\btheta, \bar\btheta \in \Omega_{m + 1}$, we have 
\begin{align*}
	|\bar h^{(m + 1)}_{\btheta}(\bar{u}, z, u) - \bar h^{(m + 1)}_{\bar\btheta}(\bar{u}, z, u)| \leq {C}_{\sigma}''\|\btheta - \bar\btheta\|_2(1 + |\bar{u}|^{n(k)} + |z|^{n(k)} + |u|^{n(k)}).
\end{align*}
Note that $\E_{(\bar{u}, z, u) \sim \normal(\mathbf{0}, \id_3)}[(1 + |\bar{u}|^{n(k)} + |z|^{n(k)} + |u|^{n(k)})^2] < \infty$. The rest of the proof is almost identical to that of \cref{lemma:Donsker}: We apply the well-known results regarding Donsker class in \cite{van2000asymptotic} and prove that $\{\bar h^{(m + 1)}_{\btheta}: \btheta \in \Omega_{m + 1}\}$ is a Donsker class via proving the corresponding bracketing integral is finite. Here, we skip the details to avoid duplication.

\subsection{Proof of \cref{lemma:approx}} \label{sec:proof-of-lemma:approx}
We define $\cS, \bar{\cS}: \RR^{3 \times d_m} \to \RR$ as follows:
\begin{align*}
	& \cS(\bar{\bu}_{m + 1}, \bz_m, \bu_m)  := \frac{1}{\sqrt{d_m}} \sum_{i = 1}^{d_m} h^{(m + 1)}_{(\bar\gamma_{m + 1}, \mu_{m}, \nu_m, \beta_{m} / \sqrt{d_{m}}, \beta_{m} \delta_{m + 1}, \gamma_{m})} (\bar{u}_{m + 1, i}, z_{m,i}, u_{m,i}), \\
	& \bar\cS(\bar{\bu}_{m + 1}, \bz_m, \bu_m) := \frac{1}{\sqrt{d_m}} \sum_{i = 1}^{d_m} \bar{h}^{(m + 1)}_{(\bar\gamma_{m + 1}, \mu_{m}, \nu_m, \beta_{m} / \sqrt{d_{m}}, \beta_{m} \delta_{m + 1}, \gamma_{m})} (\bar{u}_{m + 1, i}, z_{m,i}, u_{m,i}).
\end{align*}
%
By assumption, almost everywhere we have $|\sigma'(x)| \leq  C_{\sigma}(1 + |x|^{k - 1})$. Therefore, there exists $C_{\sigma}' > 0$ which is a constant depending only on $\sigma$, such that 
\begin{align*}
	|\sigma(x)| \leq C_{\sigma}'(1 + |x|^{k}). 
\end{align*}
Using these facts, via some computation we can conclude that there exist $n_0 \in \NN_+$ and $\bar C_{\sigma} > 0$ depending only on $(\sigma, \Omega_{m + 1})$, such that 
\begin{align*}
	& \big|\cS(\bar{\bu}_{m + 1}, \bz_m, \bu_m) - \bar\cS(\bar{\bu}_{m + 1}, \bz_m, \bu_m)\big| \\
	\leq &  \frac{\bar C_{\sigma}}{d_m}\sum_{i = 1}^{d_m} |\sigma'(\nu_m z_{m,i}) - \sigma'(H_{m - 1}z_{m,i})| \times \big(|\beta_m| \bar{\gamma}_{m + 1}^2\bar{u}_{m + 1, i}^2 + |d_m^{1/2}\beta_m \delta_{m + 1}\bar{\gamma}_{m + 1}\bar{u}_{m + 1, i} |(1 + \nu_m^k|z_{m,i}|^k) \big) \times \\
	& (1 + |\bar{u}_{m + 1, i}|^{n_0} + |z_{m,i}|^{n_0} + |u_{m,i}|^{n_0}) \times (1 + |\mu_m|^{n_0} + |\nu_m|^{n_0} + |\beta_m|^{n_0} + |\bar{\gamma}_{m + 1}|^{n_0} + |\gamma_m|^{n_0} + |\delta_{m + 1}|^{n_0}).
\end{align*}
By \cref{eq:indep}, $\nu_m$ is independent of $(\bar{\bu}_{m + 1}, \bz_m, \bu_m)$. Recall that we have proved $\beta_m = O_P(1)$, $\delta_{m + 1} = O_P(d_m^{-1})$, $\nu_m = H_{m - 1} + o_P(1)$, $\bar{\gamma}_{m + 1} = E_{m + 1} + o_P(1)$, $\gamma_m = o_P(1)$, $\mu_m = o_P(1)$. Note that $\sigma'$ is almost everywhere continuous and $|\sigma'(x)| \leq C_{\sigma}(1 + |x|^{k - 1})$. Then by applying Chebyshev's inequality and dominated convergence theorem to the above equation, we can conclude that as $d \to \infty$
\begin{align}\label{eq:34}
	\big|\cS(\bar{\bu}_{m + 1}, \bz_m, \bu_m) - \bar\cS(\bar{\bu}_{m + 1}, \bz_m, \bu_m)\big| = o_P(1).
\end{align}
Similarly, we have
\begin{align}\label{eq:35}
	\big|\E[\cS(\bar{\bu}_{m + 1}, \bz_m, \bu_m)] - \E[\bar\cS(\bar{\bu}_{m + 1}, \bz_m, \bu_m)]\big| = o_P(1),
\end{align}
where in the above equation the expectations are taken over $(\bar{\bu}_{m + 1}, \bz_m, \bu_m)$, assuming 
$$(\mu_m, \nu_m, \beta_m, \bar{\gamma}_{m + 1}, \gamma_m, \delta_{m + 1})$$ are fixed. Note that
\begin{align*}
	& \GG_d^{(m + 1)}(\bar\gamma_{m + 1}, \mu_{m}, \nu_m, \beta_{m} / \sqrt{d_{m}}, \beta_{m} \delta_{m + 1}, \gamma_{m}) = \cS(\bar{\bu}_{m + 1}, \bz_m, \bu_m) - \E[\cS(\bar{\bu}_{m + 1}, \bz_m, \bu_m)], \\
	& \bar\GG_d^{(m + 1)}(\bar\gamma_{m + 1}, \mu_{m}, \nu_m, \beta_{m} / \sqrt{d_{m}}, \beta_{m} \delta_{m + 1}, \gamma_{m}) = \bar\cS(\bar{\bu}_{m + 1}, \bz_m, \bu_m) - \E[\bar\cS(\bar{\bu}_{m + 1}, \bz_m, \bu_m)],
\end{align*}
thus we have completed the proof of the lemma using \cref{eq:34,eq:35}.

\subsection{Proof of \cref{lemma:hk-sigma}}\label{sec:proof-of-lemma:hk-sigma}
By definition, $\boeta_{m + 1} = \bD_{\sigma}^{m + 1} \bW_{m + 2}^{\top} \boeta_{m + 2}$. Note that $(\bg_{m + 1}, \bD_{\sigma}^{m + 1}, \boeta_{m + 2}, \bu_{m + 1})$ depends on $\bW_{m + 2}$ only through $\bg_{m + 2} = \bW_{m + 2} \bh_{m + 1}$ and $\bh_{m + 1}$ is independent of $\bW_{m + 2}$. By \cref{lemma:gaussian-conditioning}, there exists $\bar\bW_{m + 2} \in \RR^{d_{m + 2} \times d_{m + 1}}$ that has the same marginal distribution as $\bW_{m + 2}$ and  
$\bar\bW_{m + 2}  \perp(\bg_{m + 1}, \bD_{\sigma}^{m + 1}, \boeta_{m + 2}, \bu_{m + 1})$, such that 
\begin{align*}
	&\bg_{m + 1}^s \\
	 =&  (1 - \mu_{m + 1})\bg_{m + 1} + \beta_{m + 1} \bD_{\sigma}^{m + 1} \bh_{m + 1}\frac{\bh_{m + 1}^{\top} \bar{\bW}_{m + 2}^{\top} \boeta_{m + 2} - \bh_{m + 1}^{\top} {\bW}_{m + 2}^{\top} \boeta_{m + 2}}{\|\bh_{m + 1}\|_2^2} - \\
	 & \beta_{m + 1} \bD_{\sigma}^{m + 1} \bar{\bW}_{m + 2}^{\top} \boeta_{m + 2} - \gamma_{m + 1} \bu_{m + 1}, \\
	= & (1 - \mu_{m + 1})\bg_{m + 1} + \kappa_{m + 1} \bD_{\sigma}^{m + 1} \sigma(\bg_{m + 1}) - \gamma_{m + 1} \bu_{m + 1} - \beta_{m + 1} \bar{\gamma}_{m + 2} \bD_{\sigma}^{m + 1} \bar{\bu}_{m + 2} / \sqrt{d_{m + 1}},
\end{align*}
where $\bar{\bu}_{m + 2} \sim \normal(\mathbf{0}, \id_{d_{m + 1}})$ is independent of $(\bg_{m + 1}, \bD_{\sigma}^{m + 1}, \boeta_{m + 2}, \bu_{m + 1})$, and
\begin{align*}
	\kappa_{m + 1} := \beta_{m + 1} \frac{\bh_{m + 1}^{\top} \bar{\bW}_{m + 1}^{\top} \boeta_{m + 2} - \bh_{m + 1}^{\top} {\bW}_{m + 2}^{\top} \boeta_{m + 2}}{\|\bh_{m + 1}\|_2^2}, \qquad \bar{\gamma}_{m + 2} := \|\boeta_{m + 2}\|_2. 
\end{align*}
By $\mathcal{H}_{m + 1}$ claim \emph{(iii)}, $\mu_{m + 1} = o_P(1)$, $\gamma_{m + 1} = o_P(1)$ and $\beta_{m + 1} / \sqrt{d_{m + 1}} = o_P(1)$. By \cref{lemma:Pconvergence}, $\|\bh_{m + 1}\|_2^{-2} = O_P(d_{m + 1}^{-1})$, $\bar{\gamma}_{m +  1} = O_P(1)$, $\bh_{m + 1}^{\top} \bW_{m + 2}^{\top} \boeta_{m + 2} = O_P(1)$. Then we compute the corresponding second moment and apply Chebyshev's inequality, and conclude that $\bh_{m + 1}^{\top} \bar{\bW}_{m + 2}^{\top} \boeta_{m + 2} = O_P(1)$. Combining these analysis, we have $\kappa_{m + 1} = o_P(1)$. 

We can write $\bg_{m + 1} = \nu_{m + 1} \bz_{m + 1}$, where $\nu_{m + 1} := \sqrt{\|\bh_{m}\|_2^2 / d_{m}}$ and $\bz_{m + 1} \sim \normal(\mathbf{0}, \id_{d_{m + 1}})$ is independent of $\bh_m$. Adopting similar arguments we applied to obtain \cref{eq:indep}, we can conclude that $\bg_{m + 1}, \bar{\bu}_{m + 2}, \bu_{m + 1} \iidsim \normal(\mathbf{0}, \id_{d_{m + 1}})$. Recall that we have proved $\mu_{m + 1} = o_P(1)$, $\kappa_{m + 1} = o_P(1)$, $\gamma_{m + 1} = o_P(1)$ and $\beta_{m + 1} \bar{\gamma}_{m + 2} / \sqrt{d_{m + 1}} = o_P(1)$. By assumption, for all $x,y \in \RR$, 
\begin{align}\label{eq:36}
	|\sigma(x) - \sigma(y)| \leq C_{\sigma}|x - y|(1 + |x|^{k - 1} + |y|^{k - 1}).
\end{align}
Then we apply \cref{eq:36} to bound the difference between $\bh_{m + 1} = \sigma(\bg_m)$ and $\sigma(\bg_m^s)$, and the lemma follows from simple application of the law of large numbers. 

\section{Proofs for the non-asymptotic results}

\subsection{Proof of \cref{lemma:control-mu-beta-gamma}}\label{sec:proof-of-lemma:control-mu-beta-gamma}

We will heavily rely on the Bernstein's inequality to prove the lemma, and we state it here for readers' convenience. 

\begin{lemma}\label{lemma:bernstein}
	Let $X_1, \cdots, X_N$ be independent, mean zero, sub-exponential random variables. Then, for every $t \geq 0$, we have
	\begin{align*}
		\P\left( \left| \sum_{i = 1}^N X_i \right| \geq t \right) \leq 2\exp \left( -c \min \left\{ \frac{t^2}{\sum_{i = 1}^N \|X_i\|_{\Psi_1}^2}, \frac{t}{\max_{i \in [N]}\|X_i\|_{\Psi_1}} \right\} \right),
	\end{align*}
	where $c > 0$ is a numerical constant, and $\|\cdot\|_{\Psi_1}$ is the Orlicz norm. 
\end{lemma}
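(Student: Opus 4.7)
The plan is to follow the standard Chernoff/moment-generating-function route, which is the canonical proof of Bernstein's inequality for sub-exponential sums. The starting point is the equivalence between the Orlicz $\Psi_1$ norm and growth of the moment generating function: there exist numerical constants $c_1, C_1 > 0$ such that any mean-zero sub-exponential $X$ satisfies $\mathbb{E}[\exp(\lambda X)] \leq \exp(C_1 \lambda^2 \|X\|_{\Psi_1}^2)$ whenever $|\lambda| \leq c_1 / \|X\|_{\Psi_1}$. I would cite this as a standard fact (e.g., Proposition 2.7.1 in Vershynin) rather than reprove it; the content of the proposal is how to combine it with independence and optimize.

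First, set $S_N = \sum_{i=1}^N X_i$, $\sigma^2 := \sum_{i=1}^N \|X_i\|_{\Psi_1}^2$, and $K := \max_{i \in [N]} \|X_i\|_{\Psi_1}$. By Markov's inequality applied to $\exp(\lambda S_N)$ and independence, for every $\lambda$ with $|\lambda| \leq c_1/K$,
\begin{align*}
\mathbb{P}(S_N \geq t) \leq \exp(-\lambda t)\prod_{i=1}^N \mathbb{E}[\exp(\lambda X_i)] \leq \exp\bigl(-\lambda t + C_1 \lambda^2 \sigma^2\bigr).
\end{align*}

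Next, optimize the exponent over $\lambda$ in the allowed range $[0, c_1/K]$. The unconstrained minimizer is $\lambda^\star = t/(2C_1 \sigma^2)$, so I take $\lambda = \min\{t/(2C_1 \sigma^2),\, c_1/K\}$. In the first regime ($\lambda^\star \leq c_1/K$, i.e.\ $t \leq 2C_1 \sigma^2 c_1 / K$), the bound becomes $\exp(-t^2/(4C_1\sigma^2))$. In the second regime ($\lambda^\star > c_1/K$), one plugs in $\lambda = c_1/K$ and uses $\lambda^2 \sigma^2 \leq \lambda \cdot (c_1/K)\sigma^2 \leq \lambda t/(2C_1)$ in that regime, giving $\exp(-c_1 t/(2K))$. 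Combining the two regimes yields a bound of the form $\exp(-c\min\{t^2/\sigma^2,\, t/K\})$ for an absolute constant $c > 0$. Applying the identical argument to $-S_N = \sum(-X_i)$ and taking the union bound produces the factor of $2$ and the two-sided tail inequality stated in the lemma.

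I do not expect a serious obstacle: every step is standard. The only subtle choice is the splitting of the exponent in the ``large $t$'' regime so that the bound degrades only linearly in $t/K$ rather than quadratically; that is handled by the elementary inequality $\lambda^2\sigma^2 \leq \lambda t/(2C_1)$ valid precisely when the optimizer is truncated. Since the result is classical, in the final write-up I would simply cite a reference (e.g.\ \cite{vershynin2018high}, Theorem 2.8.1) rather than reproduce the computation.
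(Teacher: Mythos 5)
Your proof is correct: it is the canonical Chernoff/MGF argument for Bernstein's inequality for sub-exponential sums, and each step (the $\Psi_1$-to-MGF equivalence, the truncated optimization over $\lambda$, and the linear-in-$t$ bound in the large-deviation regime) is carried out properly. The paper does not prove this lemma at all --- it states it as a standard fact for the reader's convenience --- so your write-up (or, as you suggest, a citation to \cite{vershynin2018high}, Theorem 2.8.1) is exactly what is called for.
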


\subsubsection*{Proof of the first result}

It suffices to prove that with probability at least $1 - C\eta_1^{-2}(\sigma(0)^2 + L^2)$ for some positive numerical constant $C > 0$, the following inequality holds:  
\begin{align*}
	|\bg^{\top} \bD_{\sigma} \ba| \leq \eta_1.
\end{align*}
Notice that $\bg^{\top} \bD_{\sigma} \ba = m^{-1/2}\sum_{i = 1}^m g_i \sigma(g_i) b_i$. Therefore, $\Var[\bg^{\top} \bD_{\sigma} \ba] \leq \E_{b,g \sim_{i.i.d.} \normal(0,1)}[b^2 g^2 \sigma(g)^2] \leq C(\sigma(0)^2 + L^2)$ for some numerical constant $C > 0$. By Chebyshev's inequality, with probability at least $1 - C\eta_1^{-2}(\sigma(0)^2 + L^2)$, $|\bg^{\top} \bD_{\sigma} \ba| \leq \eta_1$, thus completing the proof for this part.

\subsubsection*{Proof of the second result}

By the definitions of $\bar\bW$ and $\bar\bW_c$, we see that 
\begin{align*}
	\frac{\|\bar\bW^{\top} \bD_{\sigma} \ba\|_2^2 - \langle \bar{\bW}_c^{\top} \bD_{\sigma} \ba, \bar\bW^{\top} \bD_{\sigma} \ba \rangle}{\|\bD_{\sigma} \ba\|_2^2} \overset{d}{=} \frac{1}{d} \sum_{i = 1}^d (z_i^2 - z_iz_i'),
\end{align*}
where $z_i, z_i' \iidsim \normal(0,1)$. By \cref{lemma:bernstein}, we see that there exists a numerical constant $C > 0$, such that for all $\eta_2 \geq 1$,  with probability at least $1 - 4\exp(-C\eta_2)$  the following inequalities hold:
\begin{align*}
	\left| \frac{1}{d} \sum_{i = 1}^d z_i^2  - 1\right| \leq \frac{\eta_2}{\sqrt{d}}, \qquad \left| \frac{1}{d} \sum_{i = 1}^d z_i z_i' \right| \leq \frac{\eta_2}{\sqrt{d}}. 
\end{align*}
When the event described above occurs, we see that
\begin{align*}
	\left|\beta - \frac{\tau s_d}{\sqrt m}\right| \leq \frac{2s_d \eta_2}{\sqrt{md}},
\end{align*}
which completes the proof of the second result. 

\subsubsection*{Proof of the third result}

Notice that
\begin{align*}
	\gamma^2 = \frac{s_d^2}{d}\cdot \|\bar\bW^{\top} \bD_{\sigma} \ba\|_2^2 \overset{d}{=} \frac{s_d^2}{d} \cdot \left( \frac{1}{d	} \sum_{i = 1}^d z_i^2 \right) \cdot \left( \frac{1}{m} \sum_{i = 1}^m b_i^2 \sigma(g_i)^2 \right),
\end{align*}
where $z_i, b_i, g_i \iidsim \normal(0,1)$. Since $|\sigma(x)| \leq |\sigma(0)| + L|x|$, we then conclude that there exists a numerical constant $C > 0$, such that 
\begin{align*}
	\Var\left[ \frac{1}{m} \sum_{i = 1}^m b_i^2 \sigma(g_i)^2\right] \leq \frac{C(\sigma(0)^4 + L^4)}{m}.
\end{align*}
By Chebyshev's inequality, with probability at least $1 - Cm^{-1}(\sigma(0)^4 + L^4)$
\begin{align*}
	\left| \frac{1}{m} \sum_{i = 1}^m b_i^2 \sigma(g_i)^2 - \E_{g \sim \normal(0,1)}[\sigma(g)^2] \right| \leq 1.
\end{align*}
By Bernstein's inequality (\cref{lemma:bernstein}), with probability at least $1 - 2 \exp(-cd)$ for some absolute constant $c > 0$, we have $d^{-1}\sum_{i = 1}^d z_i^2 \leq 2$. In summary, with probability at least $1 - 2 \exp(-cd) - Cm^{-1}(\sigma(0)^4 + L^4)$, 
\begin{align*}
	\gamma \leq \frac{2s_d}{\sqrt d} \cdot \sqrt{ 1 + \E_{g \sim \normal(0,1)}[\sigma(g)^2] },
\end{align*}
thus concluding the proof of the third result.

\subsection{Proof of \cref{lemma:upper-bound-sub-gaussian-norm}}\label{sec:proof-of-lemma:upper-bound-sub-gaussian-norm}

Recall that the sub-Gaussian norm of a random variable $X$ is defined as 
\begin{align*}
	\|X\|_{\Psi_2} := \inf\left\{ t > 0: \E[\exp(X^2 / t^2 )] \leq 2 \right\}.
\end{align*}
Standard computation implies that for all $\lambda \in \RR$,
\begin{align*}
	& \E\left[ \exp\left( \lambda (\LL_m(\btheta) - \LL_m(\btheta'))  \right)  \right] \\
	\leq & \exp \left( \frac{\lambda^2}{2m} \sum_{i = 1}^m b_i^2\left( \sigma((1 - \theta_1) g_i - \theta_2 b_i \sigma'(g_i) - \theta_3 u_i) - \sigma((1 - \theta_1') g_i - \theta_2' b_i \sigma'(g_i) - \theta_3' u_i) \right)^2 \right) \\
	\leq & \exp \left( \frac{\lambda^2 L^2}{2m} \sum_{i = 1}^m b_i^2\left(|\theta_1 - \theta_1'|\cdot |g_i| + L \cdot |\theta_2 - \theta_2'| \cdot |b_i| + |\theta_3 - \theta_3'| \cdot |u_i| \right)^2 \right) \\
	\leq & \exp \left( \frac{\lambda^2\|\btheta - \btheta'\|_2^2}{2} \cdot \frac{1}{m} \sum_{i = 1}^m M(b_i, g_i, u_i)^2\right).
\end{align*}
Hence by the sub-Gaussian property, $\|\LL_m(\btheta) - \LL_m(\btheta')\|_{\Psi_2} \leq C\|\btheta - \btheta'\|_2 \cdot \sqrt{\frac{1}{m} \sum_{i = 1}^m M(b_i, g_i, u_i)^2}$ for some positive numerical constant $C$. 

\subsection{Proof of \cref{lemma:upper-bound-Fg}} \label{sec:proof-of-lemma:upper-bound-Fg}

We observe that $F(\bg) \overset{d}{=}  m^{-1}z\sum_{i = 1}^m \sigma(g_i)^2$, where $z \sim \normal(0,1)$ is independent of $\{g_i\}_{i \leq m}$. Notice that there exists a numerical constant $C' > 0$, such that $\|\sigma(g)^2\|_{\Psi_1} \leq C'(\sigma(0)^2 + L^2)$. Then by \cref{lemma:bernstein}, there exist numerical constants $c, C > 0$, such that 
\begin{align*}
	& \P\left( \left| \frac{1}{m}\sum_{i = 1}^m \sigma(g_i)^2 - \E_{g \sim \normal(0,1)}[\sigma(g)^2] \right| \geq 1 \right) \leq C \exp\left( -\frac{cm}{L^4 + \sigma(0)^4} \right), \\
	& \P\left(|\,z\,| \geq \eta_3  \right) \leq C \exp(-c\eta_3^2),
\end{align*}
which concludes the proof of the lemma.

\end{appendices}

\end{document}